\documentclass{article}
\usepackage{microtype}
\usepackage{graphicx}
\usepackage{subcaption}
\usepackage{booktabs}
\usepackage{hyperref}

\usepackage[preprint]{icml2026}

\usepackage{amsmath}
\usepackage{amssymb}
\usepackage{mathtools}
\usepackage{amsthm}
\usepackage{multirow}
\usepackage[capitalize,noabbrev]{cleveref}

\theoremstyle{plain}
\newtheorem{theorem}{Theorem}[section]

\newtheorem{lemma}[theorem]{Lemma}
\newtheorem{corollary}[theorem]{Corollary}
\theoremstyle{definition}

\theoremstyle{remark}

\newcommand{\appropto}{\mathrel{\vcenter{
  \offinterlineskip\halign{\hfil$##$\cr
    \propto\cr\noalign{\kern2pt}\sim\cr\noalign{\kern-2pt}}}}}

\usepackage[textsize=tiny]{todonotes}

\begin{document}

\twocolumn[
  \icmltitle{Continual Learning through Control Minimization}
  
  \icmlsetsymbol{equal}{$\dagger$}

  \begin{icmlauthorlist}
    \icmlauthor{Sander de Haan}{ini,aic}
    \icmlauthor{Yassine Taoudi-Benchekroun}{ini,aic}
    \icmlauthor{Pau Vilimelis Aceituno}{ini,aic,equal}
    \icmlauthor{Benjamin F. Grewe}{ini,aic,equal}
  \end{icmlauthorlist}

  \icmlaffiliation{ini}{Institute of Neuroinformatics, University of Zurich and ETH Zurich, Zurich, Switzerland}
  \icmlaffiliation{aic}{ETH AI Center, ETH Zurich, Zurich, Switzerland}
  
  \icmlcorrespondingauthor{Sander de Haan}{sdehaan@ethz.ch}
  \icmlcorrespondingauthor{Benjamin F. Grewe}{bgrewe@ethz.ch}

  \icmlkeywords{Machine Learning, Continual Learning, Computational Neuroscience, ICML}
  \vskip 0.3in
]

\printAffiliationsAndNotice{\icmlEqualContribution}

\begin{abstract}
    Catastrophic forgetting remains a fundamental challenge for neural networks when tasks are trained sequentially. In this work, we reformulate continual learning as a control problem where learning and preservation signals compete within neural activity dynamics. We convert regularization penalties into preservation signals that protect prior-task representations. Learning then proceeds by minimizing the control effort required to integrate new tasks while competing with the preservation of prior tasks. At equilibrium, the neural activities produce weight updates that implicitly encode the full prior-task curvature, a property we term the \emph{continual-natural gradient}, requiring no explicit curvature storage. Experiments confirm that our learning framework recovers true prior-task curvature and enables task discrimination, outperforming existing methods on standard benchmarks without replay.
\end{abstract}

\section{Introduction}

    Catastrophic forgetting remains a fundamental challenge for neural networks when tasks are trained sequentially \citep{mccloskey1989,french1999}. One approach to mitigating forgetting is to identify which parameters matter for previously learned tasks and penalize changes to those parameters during subsequent learning \citep{vandeven2019three,delange2019}. Parameter importance corresponds to the curvature of the loss landscape with respect to each parameter, measuring how sensitive the previous task's loss is to changes in that parameter. While the Hessian characterizes the full curvature, computing and storing the complete Hessian is intractable for modern networks. Practical methods therefore approximate the curvature through the Fisher information matrix \citep{benzing2022unifying}, which equals the Hessian at a strict local minimum for standard supervised losses. Research on regularization methods has focused on obtaining better curvature approximations. Early methods store the diagonal of the Fisher \citep{kirkpatrick2017,liu2018}, capturing per-parameter sensitivity but discarding interactions between parameters. Kronecker-factored approaches approximate the Fisher as a block-diagonal matrix, retaining within-layer parameter interactions through structured factorizations of the Fisher \citep{martens2015,ritter2018}. Online methods accumulate parameter importance during training rather than computing curvature at fixed points \citep{zenke2017,aljundi2018,schwarz2018}, and quasi-Newton methods maintain low-rank curvature estimates that evolve as learning progresses \citep{eeckt2025}. Gradient-based methods take a complementary approach by constraining parameter updates to directions that do not interfere with previously learned tasks, requiring explicit storage of gradient subspaces computed at task boundaries \citep{saha2021gradient}.
    
    All parameter-based regularization methods share two fundamental limitations. Curvature estimates remain fixed or evolve only with respect to the current task, while sequential training moves parameters through regions where the geometry of previous tasks changes, leading to increasing misalignment between the estimated and true prior-task curvature \citep{wu2024meta}. Regularization methods also fail in settings requiring task discrimination \citep{kim2022}, a limitation that persists even when curvature approximations are improved \citep{masana2022}. We observe that both limitations arise because the regularization methods described above operate separately from the learning process and apply corrections after gradients have been computed, leaving the learning of new tasks decoupled from the preservation of prior tasks. If learning and preservation signals could interact during the formation of learning signals, new-task learning and prior-task preservation would no longer be decoupled. A separate line of bio-inspired research offers precisely the machinery that we require by embedding error information directly into neural activity dynamics rather than propagating errors through an explicit backward pass \citep{rao1999,guerguiev2017,whittington2017,scellier2017equilibrium,song2024inferring,aceituno2025}. 
    
    Control-based formulations instantiate this approach by introducing neuron-specific control signals that modulate neural activities, driving the network toward states that reduce the loss \citep{meulemans2021,meulemans2022minimizing}. \citet{meulemans2022least} introduce the least-control principle, which rethinks learning as minimizing the control effort required to reach a loss-minimizing equilibrium rather than minimizing the loss directly. The least-control objective acts as a budget that the control signal must allocate selectively, and the network dynamics determine which neurons are inexpensive to modulate. Once the network dynamics reach equilibrium, parameters update to make desired equilibria easier to reach, reducing the control effort over time. Learning therefore proceeds within the geometry imposed by the network dynamics.
    
    In this work, we reformulate continual learning as a control minimization problem by converting parameter-space regularization penalties into activity-space preservation signals. Intuitively, preservation increases the cost of modulating neurons in ways that would interfere with previously learned representations, naturally steering the control signal away from those directions. Our framework, termed Equilibrium Fisher Control (EFC), couples new-task learning to prior-task preservation by making learning and preservation signals compete within neural dynamics. At equilibrium, the neural activities produce weight updates that implicitly encode the full prior-task curvature without the need for explicit storage. We term this the \emph{continual-natural gradient}, as learning now proceeds within the geometry of previously learned tasks. Experiments confirm that our learning framework recovers true prior-task curvature and enables task discrimination, outperforming existing parameter-based regularization methods on standard benchmarks without replay.

\section{The Equilibrium Fisher Control framework}

    Here, we introduce Equilibrium Fisher Control (EFC), a framework that reformulates continual learning as competition between learning and preservation signals within neural activity dynamics. The central idea is to convert parameter-space regularizers into activity-dependent preservation signals that modulate neural activities, and forcing the learning signal to compete with preservation before parameters update. We introduce the preservation signal in Section~\ref{sec:preservation_signal}, the network dynamics including learning and preservation signals in Section~\ref{sec:network_dynamics}, and the control minimization learning objective in Section~\ref{sec:learning_objective}. Full derivations appear in Supplementary~\ref{sec:multiplicative-least-control}.
    
    \subsection{Preservation signal}\label{sec:preservation_signal}
    
        We can transform any parameter-based regularizer $\mathcal{R}(\theta)$ into a neuron-specific preservation signal $\gamma$ by projecting the regularizer gradient onto current presynaptic activity. For neuron $k$ with presynaptic activity $\phi_k$ and incoming synaptic parameters $\theta_k$,
        \begin{equation}
            \gamma_k = \phi_k^\top \nabla_{\theta_k} \mathcal{R}(\theta)
        \end{equation}
        which we instantiate using the canonical EWC regularizer (\citet{kirkpatrick2017}, Supplementary~\ref{supp:preservation}),
        \begin{equation}
            \gamma_k = \beta \, \phi_k^\top F^D_{A,k} \, (\theta_k - \theta^*_{A,k})
        \end{equation}
        where $\beta > 0$ controls preservation strength and $F^D_{A,k}$ is the diagonal Fisher information matrix restricted to neuron $k$'s incoming parameters. This preservation signal activates only when the current input engages synapses that were relevant to previously learned tasks. Most parameter-based regularization methods approximate the Fisher information in some form \citep{benzing2022unifying}, and our construction generalizes accordingly.
    
    \subsection{Network dynamics}\label{sec:network_dynamics}
    
        We now embed this preservation signal within a dynamical neural network where learning and preservation compete within neural activities. The network state $\phi \in \mathbb{R}^N$ concatenates neural activities across all layers and evolves according to,
        \begin{equation}
            \tau\dot{\phi} = -\phi + e^{\psi + \gamma} \odot f(\phi, \theta)
        \end{equation}
        where $f(\phi, \theta)$ denotes the feedforward mapping, $\psi$ is a neuron-specific learning signal that drives the system toward low loss, $\gamma$ is the preservation signal defined above, and $\odot$ represents element-wise multiplication. The exponential multiplicative form ensures that modulation scales with the current neural activity and that learning and preservation are coupled rather than independent from each other. Therefore, a strongly active neuron receives proportionally stronger modulation from both signals. In the absence of both signals ($\psi = \gamma = 0$) or when preservation and learning oppose each other ($\psi = -\gamma$), the equilibrium $\phi_\star = f(\phi_\star, \theta)$ recovers standard feedforward computation. Because both signals modulate the feedforward activity $f(\phi, \theta)$, learning can only overcome preservation by expending opposing effort. Standard parameter-based regularization lacks this property, applying identical corrections regardless of neural activity.
    
    \subsection{Learning objective}\label{sec:learning_objective}
    
        We seek the minimal learning signal required to reach a loss-minimizing equilibrium while competing with the preservation signal. To formalize this objective, we extend the least-control principle of \citet{meulemans2022least} to a multiplicative formulation (see Supplementary~\ref{supp:mult-lcp}),
        \begin{equation}
            \min_{\psi} \|\psi\|^2 \ \ \text{s.t.} \ \ \underbrace{\phi = e^{\psi + \gamma} \odot f(\phi, \theta)}_\text{equilibrium of $\phi$},\ \ \underbrace{\nabla_\phi \mathcal{L}(\phi) = \mathbf{0}}_\text{loss minimum}
        \end{equation}
        where the minimal-norm objective forces the learning signal to allocate effort selectively. The preservation signal $\gamma$ raises the cost of modulating specific neurons, and $\psi_\star$ must find the lowest-cost path to a loss-minimizing state. Directions that would modify preserved representations become expensive, while directions orthogonal to previous learning remain cheap.
        
        The learning procedure has two stages. First, with parameters $\theta$ fixed, run the controlled dynamics until convergence to obtain the optimal learning signal $\psi_\star$ and equilibrium activities $\phi_\star$. Second, update $\theta$ by descending $\mathcal{H}(\theta) \triangleq \|\psi_\star(\theta)\|^2$ (see Supplementary~\ref{supp:gradient} for gradients). Intuitively, parameters gradually relax toward values requiring ever-smaller learning effort to reach equilibrium \cite{rao1999,meulemans2022minimizing,song2024inferring,aceituno2025}. We instantiate this objective using a feedback controller that drives the network toward equilibrium\footnote{Code available at \url{https://github.com/s-de-haan/Equilibrium-Fisher-Control}.}. Supplementary~\ref{methods:dfc} describes the controller dynamics, Supplementary~\ref{supp:multiplicative_control} derives the general controller class, and Supplementary~\ref{supp:lcp_comparison} compares with the original additive least-control formulation of \citet{meulemans2022least}.

\section{Learning theory}\label{sec:learning-theory}

    Here, we formalize the learning-theoretic properties of EFC. We establish that learning at equilibrium allows the network dynamics to implicitly encode an approximation of the full prior-task curvature in Section~\ref{sec:continual-natural}, a property we term the continual-natural gradient. We show that this curvature-aware learning enables task discrimination by filtering interference from gradients that regularization methods cannot remove in Section~\ref{sec:class-il-convergence}. We analyze forgetting bounds and demonstrate that EFC outperforms regularization-based approaches even when regularizers have access to the full prior-task curvature in Section~\ref{sec:forgettingbounds}. Detailed derivations and proofs appear in Supplementary~\ref{supp:steady-state}, \ref{supp:continual-natural}, \ref{supp:class-il}, and \ref{supp:forgettingbounds}.

    \subsection{The continual-natural gradient property}\label{sec:continual-natural}

        Our learning framework updates weights at the equilibrium point ($\psi_\star, \phi_\star$), reached by running the dynamics until convergence (see Supplementary~\ref{supp:steady-state} for steady-state analysis). We will show that Learning at equilibrium captures second-order parameter interactions without storing or computing curvature matrices explicitly.
    
        We construct the preservation signal $\gamma$ from the diagonal Fisher $F^D_A$, which measures per-parameter sensitivity to Task A's loss. As this signal propagates through the network dynamics, forward through the feedforward pathway and backward through the learning signal $\psi$, parameter interactions emerge naturally. The interactions between parameters are now mediated by the modulated neural activities. At equilibrium, these interactions are encoded in how strongly $\psi_\star$ must work against $\gamma$ to reach a loss-minimizing state. The off-diagonal entries of the full Fisher therefore arise implicitly from solving the network dynamics, without ever being explicitly computed or stored.
    
        We formalize this observation as the continual-natural gradient property. Whereas the natural gradient preconditions parameter updates by the Fisher information of the current task thereby accounting for local curvature of the loss landscape \citep{amari1998natural}, the continual-natural gradient preconditions by the previous task's curvature. Learning therefore proceeds within the geometry of already-acquired knowledge.
    
        \begin{theorem}\label{th_text:continual-natural}
            (Informal) For a small learning rate $\eta$ and linearization around $\theta_A^*$, the weight update for a sample $x_B$ satisfies:
            \begin{equation}
                \Delta \theta \approx -\eta \tilde{F}_A^{-1} \nabla_\theta \mathcal{L}_B(x_B),
            \end{equation}
            where $\tilde{F}_A \triangleq F_A\big|_{x_B}$ is an implicit approximation of the full Fisher information matrix $F_A$ that emerges from the network dynamics when processing sample $x_B$. An explicit form is derived in Supplementary~\ref{supp:continual-natural}.
        \end{theorem}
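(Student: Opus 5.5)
The plan is to linearize the equilibrium conditions of the controlled dynamics around the Task-A solution $\theta_A^*$. Solving the least-control problem to first order expresses $\psi_\star$ as a linear map of the output error. The gradient of $\mathcal{H}(\theta)=\|\psi_\star(\theta)\|^2$ then takes the form of a preconditioned Task-B gradient.

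\textbf{Step 1: log-linearize the equilibrium.} Write the equilibrium as $\log\phi = \psi + \gamma + \log f(\phi,\theta)$. Linearize in $\delta\phi$ around the unmodulated fixed point $\phi_0=f(\phi_0,\theta)$. With $J=\partial_\phi \log f$ (strictly lower-triangular for a layered network, so $I-J$ is invertible), this gives
\begin{equation}
\operatorname{diag}(\phi_0)^{-1}\delta\phi \approx (I-J)^{-1}(\psi+\gamma).
\end{equation}

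\textbf{Step 2: expand $\gamma$ near $\theta_A^*$.} For $\theta=\theta_A^*+\delta\theta$ we have $\gamma \approx \beta\,\Phi^\top F^D_A\,\delta\theta$, where $\Phi$ is the block matrix stacking the presynaptic activities $\phi_k$.

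\textbf{Step 3: impose the loss-minimum constraint.} Linearize $\nabla_\phi\mathcal{L}=0$ with Hessian $H_\phi$ at the output. The constraint becomes $H_\phi\,\delta\phi = -\nabla_\phi\mathcal{L}_B$. Minimizing $\|\psi\|^2$ under this affine constraint gives a pseudoinverse solution
\begin{equation}
\psi_\star = Q^{+}\bigl(-\nabla_\phi\mathcal{L}_B - Q\gamma\bigr),
\end{equation}
with $Q=H_\phi\operatorname{diag}(\phi_0)(I-J)^{-1}$.

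\textbf{Step 4: differentiate $\mathcal{H}$.} Compute $\nabla_\theta\mathcal{H}$ via the envelope/implicit-function argument (the gradient result of Supplementary~\ref{supp:gradient}). This yields a product of the linearized activity–parameter Jacobian $\partial_\theta\log f=\Phi^\top$ (per neuron) with $\psi_\star$. Substituting $\gamma$ from Step 2 produces a term of the form $M(x_B)\,\delta\theta$ with
\begin{equation}
M(x_B) = \Phi^\top P\,\Phi\,\beta F^D_A + \Phi^\top P\,\Phi,
\end{equation}
where $P=Q^\top (QQ^\top)^{-1} Q$ is a projector. The remaining term is proportional to $\Phi^\top(I-J)^{-\top}\nabla_\phi\mathcal{L}_B$, which is exactly $\nabla_\theta\mathcal{L}_B(x_B)$ by backprop.

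\textbf{Step 5: read off the preconditioner.} For small $\eta$, the update $\Delta\theta=-\eta\nabla\mathcal{H}$ combined with the quasi-static elimination of the $\delta\theta$-dependence yields $\Delta\theta\approx-\eta\tilde F_A^{-1}\nabla_\theta\mathcal{L}_B$. The $\delta\theta$-dependence is removed by treating $\psi_\star$ at the current $\theta$ and noting that the regularizer-driven part balances the learning part. Here
\begin{equation}
\tilde F_A=\Phi^\top W\Phi + \beta F^D_A
\end{equation}
is sandwiched by sample-dependent activity factors. Since the Task-A Fisher factorizes per layer as expectations of $\phi\phi^\top\otimes gg^\top$, this matrix couples parameters across synapses and layers. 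It therefore constitutes the sample-conditioned full-Fisher approximation $F_A|_{x_B}$.

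The main obstacle is Step 5. The update is a gradient of $\|\psi_\star\|^2$, not an explicit Newton step, so a Fisher \emph{inverse} has to be extracted. I would first try viewing the dynamics as a constrained quadratic program and identifying its KKT system. The inverse then appears from solving the KKT equations, which is precisely how natural-gradient-type preconditioning arises in least-control formulations.
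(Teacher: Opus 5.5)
\textbf{There is a genuine gap, at Step 5.} Your Steps 1--3 are a direct minimum-norm route to the paper's linearized steady state. Your $(QQ^\top)^{-1}$ and $Q\gamma$ play the roles of $J_{\text{eff}}^{-1}$ and $\gamma_{\text{eff}}$ in $\mathbf{u}_\star=(J_{\text{eff}}+\alpha I)^{-1}(\delta_L^- -\gamma_{\text{eff}})$ with $\alpha\to 0$. One minor fix: in Step 1 the matrix must be the log--log Jacobian $\operatorname{diag}(\phi_0)^{-1}\partial_\phi f\,\operatorname{diag}(\phi_0)$, not $\partial_\phi\log f$; equivalently $\delta\phi=(I-\partial_\phi f)^{-1}\operatorname{diag}(\phi_0)(\psi+\gamma)$.

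Step 5 is where the theorem actually lives, and your proposed mechanism cannot produce the inverse. A single step $-\eta\nabla_\theta\mathcal{H}=-\eta\bigl(b(x_B)+M(x_B)\,\delta\theta\bigr)$, with $b$ your remaining term, is affine in $\delta\theta$ and contains no parameter-space inverse. The KKT system of the inner problem at fixed $\theta$ only yields neuron- and output-space inverses such as $(I-J)^{-1}$ and $(QQ^\top)^{-1}$.

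The idea you are missing is the one the paper uses: do not differentiate $\mathcal{H}$ at all. Since $\mathcal{H}=\|\psi_\star\|^2$ is minimized at $\psi_\star=0$, the steady-state formula gives $\delta_L^-=\gamma_{\text{eff}}$: the Task~B output error is exactly cancelled by the propagated preservation signal. With $\delta_L^-\approx -J\nabla_\theta\mathcal{L}_B$ and $\gamma_{\text{eff}}$ linear in $\theta-\theta_A^*$ through the $F^D_{A,i}$, this balance is rewritten as $\nabla_\theta\mathcal{L}_B=\tilde F_A(\theta-\theta_A^*)$, with $\tilde F_A=J^\top J_{\text{eff}}^{-1}\sum_i J_i(F^D_{A,i}\odot\mathbf{r}_i^-)$. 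The inverse appears only when one solves this for the displacement $\Delta\theta=\theta-\theta_A^*$.

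Your phrase ``the regularizer-driven part balances the learning part'' points at this, but you attach it to a per-step $\eta$-update. At the balance point $\nabla_\theta\mathcal{H}=0$ and the step vanishes. So $\tilde F_A^{-1}$ describes where learning settles, which is why $\eta$ drops out of the paper's final line.

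\textbf{Step 4 is also not right as written.} Because $\gamma$ depends on $\theta$, the envelope gradient is $-(\partial_\theta\log f+\partial_\theta\gamma)^\top\psi_\star$. Take your $\psi_\star=-Q^+\nabla_\phi\mathcal{L}_B-P\gamma$ with the linearization at the current $\theta$. Then the $\delta\theta$-free part carries the output-space factor $(QQ^\top)^{-1}$, so it is not exactly $\nabla_\theta\mathcal{L}_B$. Every $\delta\theta$-dependent term carries a factor $\beta F^D_A$, so the $\beta$-free $\Phi^\top P\Phi$ in your $M(x_B)$ has no source.

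Consequently your $\tilde F_A=\Phi^\top W\Phi+\beta F^D_A$ is asserted rather than read off, and $W$ is never defined. Its additive $+\beta F^D_A$ mirrors the EWC-type structure $(\cdot)+\beta D_A$ that the paper contrasts with EFC. In the paper, $F^D_A$ enters only through its propagation by $J_i$, $\mathbf{r}_i^-$, $J_{\text{eff}}^{-1}$ and $J^\top$.

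Finally, the Kronecker-factor remark does not show closeness to $F_A$, since your $\Phi$ contains Task~B activities. The paper itself only defines $\tilde F_A$ as the coefficient matrix of the balance equation and argues its resemblance to $F_A$ heuristically. Your target should therefore be that defining identity, not a proved approximation bound.
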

    
        This property distinguishes EFC from methods that store curvature information explicitly. We store only the diagonal Fisher in the preservation signal $\gamma$, yet recover an approximation to the full Fisher through the network dynamics themselves. Intuitively, the control minimization objective acts as a budget that the learning signal must allocate selectively. Because Task~A's parameter sensitivities are embedded in the dynamics through $\gamma$, directions that conflict with Task~A's curvature become expensive. The inversion is a direct result of our multiplicative modulation structure and the dynamical inversion property from \citet{podlaski2020}. The learning signal therefore preferentially updates parameters orthogonal to Task~A's loss landscape, precisely the behavior induced by preconditioning with $F_A^{-1}$. We verify empirically that $\tilde{F}_A$ captures the structure of $F_A$ in Section~\ref{sec:empirical-validation}.

    \subsection{Class-incremental convergence}\label{sec:class-il-convergence}

        We now show that the continual-natural gradient property enables task discrimination, allowing the network to distinguish between classes learned at different times. This setting is known as class-incremental learning \citep{vandeven2019three}, and is precisely where classical regularization methods fail regardless of the quality of their curvature approximation \citep{delange2019,masana2022}.

        We consider a classification problem with classes $\mathcal{C} = \mathcal{C}_A \cup \mathcal{C}_B$, where Task~A classes $\mathcal{C}_A$ have been learned previously and Task~B classes $\mathcal{C}_B$ are learned without access to Task~A data. In class-incremental learning, all classes share a single output head over the combined class space, and no task identifier is provided at test time \citep{vandeven2019three}. The gradient with respect to a Task~B sample $x_B$ decomposes as
        \begin{equation}
            \nabla_\theta \mathcal{L}(x_B) = G_B(x_B) + G_{A \leftarrow B}(x_B)
        \end{equation}
        where $G_B$ drives learning of Task~B and $G_{A \leftarrow B}$ arises from the coupling between classes in the shared output head (Supplementary ~\ref{supp:gradient-decomposition} derives this decomposition for cross-entropy and mean-squared error losses). Following the negative gradient improves Task~B performance while suppressing Task~A outputs as a direct consequence of the shared output head, a form of gradient interference studied in transfer and continual learning settings \citep{riemer2019learning}.

        \subsubsection{Why parameter-based regularization cannot cancel the interference term}\label{sec:bp-reg-failure}

            Parameter-based regularization methods modify training on Task~B by adding a penalty $\mathcal{R}(\theta)$ that depends only on the current parameters. The resulting update adds the gradient $\nabla_\theta \mathcal{R}(\theta)$ to the Task~B gradient. This modification does not alter how the Task~B gradient itself is formed. In particular, the decomposition $G_B(x_B)+G_{A\leftarrow B}(x_B)$ is computed before the regularizer is applied, and both terms are passed unchanged to the optimizer.

            The interference term $G_{A\leftarrow B}(x_B)$ depends on the activations induced by the specific input $x_B$. Different Task~B samples therefore induce different interference directions. A parameter-based regularizer produces a single vector $\nabla_\theta \mathcal{R}(\theta)$ at a given parameter value and \emph{cannot cancel} a sample-dependent quantity across all inputs. The regularizer can only counteract the expected interference through the stationary condition $\mathbb{E}[\nabla_\theta\mathcal{L}_B]+\nabla_\theta\mathcal{R}=0$, while per-sample interference continues to vary around this expectation with variance $\text{Var}(G_{A\leftarrow B})$, suppressing Task~A outputs.
            
            This limitation is independent of the quality of the curvature approximation encoded by the regularizer. Even when curvature information from Task~A is available, the regularizer contributes an additive correction after task-relevant and interfering components have already been combined in the gradient. Attenuation therefore applies to the combined signal rather than selectively to $G_{A\leftarrow B}$. A full derivation and formal analysis of this failure mode are provided in Supplementary~\ref{supp:bp-failure}.
            
        \subsubsection{EFC filters the interference term}\label{sec:efc-filters-main}

            We now show that EFC filters the interference term from the gradient, thereby enabling task discrimination. We denote by $\mathcal{V}_A$ the subspace of parameter directions along which Task~A's loss is sensitive, defined as the column space of the Task~A Fisher information matrix (Supplementary~\ref{supp:alignment-lemma}). The interference term $G_{A \leftarrow B}$ lies in $\mathcal{V}_A$ by construction. The EFC update from Theorem~\ref{th_text:continual-natural} preconditions by $\tilde{F}_A^{-1}$, which has small eigenvalues along directions in $\mathcal{V}_A$ and large eigenvalues along directions orthogonal to $\mathcal{V}_A$. Components of the gradient lying in $\mathcal{V}_A$ are therefore attenuated, while components orthogonal to $\mathcal{V}_A$ pass through with less attenuation. This filtering occurs automatically through the structure of the update, without requiring explicit identification of $G_{A \leftarrow B}$.

            \begin{theorem}\label{th:class-il}
                Let $\tilde{\lambda}_\text{min}$ denote the minimum eigenvalue of $\tilde{F}_A$ restricted to $\mathcal{V}_A$. Decomposing $G_B = G_B^{\parallel} + G_B^{\perp}$ into components parallel and orthogonal to $\mathcal{V}_A$, the EFC update satisfies:
                \begin{equation}
                    \Delta\theta_{\text{EFC}} \appropto \tilde{F}_A^{-1} G_B^{\perp} + O(\tilde{\lambda}_\text{min}^{-1})
                \end{equation}
                thereby guaranteeing convergence on the class-incremental objective,
                \begin{equation}
                    \mathcal{L}^\text{CIL}(\theta + \Delta\theta_{\text{EFC}}) - \mathcal{L}^\text{CIL}(\theta) \leq -\eta \|G_B^\perp\|^2_{\tilde{F}_A^{-1}} + O(\tilde{\lambda}_\text{min}^{-1})
                \end{equation}
            \end{theorem}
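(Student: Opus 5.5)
The plan is to start from Theorem~\ref{th_text:continual-natural}, which gives $\Delta\theta_{\text{EFC}} \approx -\eta \tilde{F}_A^{-1}\nabla_\theta\mathcal{L}(x_B)$, and to substitute the decomposition $\nabla_\theta\mathcal{L}(x_B) = G_B + G_{A\leftarrow B}$. This yields three terms: $\tilde{F}_A^{-1}G_B^{\perp}$, $\tilde{F}_A^{-1}G_B^{\parallel}$, and $\tilde{F}_A^{-1}G_{A\leftarrow B}$.

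The second and third terms will be controlled by the same spectral argument. I will assume, as Supplementary~\ref{supp:alignment-lemma} states, that $G_{A\leftarrow B}\in\mathcal{V}_A$. I will also assume that $\tilde{F}_A$ leaves $\mathcal{V}_A$ and $\mathcal{V}_A^\perp$ invariant, or at least that it is block-diagonal up to a perturbation absorbed into the error term. This holds because $\tilde{F}_A$ is a symmetric positive-definite approximation to $F_A$, regularized on $\mathcal{V}_A^\perp$ by the identity-like least-control cost. Under this assumption, for any $v\in\mathcal{V}_A$ we get $\|\tilde{F}_A^{-1}v\|\le \tilde{\lambda}_\text{min}^{-1}\|v\|$. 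Hence
\[
\tilde{F}_A^{-1}(G_B^{\parallel}+G_{A\leftarrow B}) = O(\tilde{\lambda}_\text{min}^{-1}),
\]
where the constant is a bound on the per-sample gradient norms, uniform over the bounded inputs $x_B$. The first term is left untouched, which gives the displayed proportionality, with $\appropto$ absorbing the factor $-\eta$ and the linearization error from Theorem~\ref{th_text:continual-natural}.

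For the descent statement, I will Taylor-expand $\mathcal{L}^\text{CIL}$ around $\theta$:
\[
\mathcal{L}^\text{CIL}(\theta+\Delta\theta) - \mathcal{L}^\text{CIL}(\theta) = \langle \nabla\mathcal{L}^\text{CIL},\Delta\theta\rangle + O(\eta^2).
\]
Writing $\nabla\mathcal{L}^\text{CIL}$ on the current sample as $G_B^{\perp} + (G_B^{\parallel}+G_{A\leftarrow B})$, the inner product with $-\eta\tilde{F}_A^{-1}G_B^\perp$ splits into two pieces. The first is $-\eta\|G_B^\perp\|^2_{\tilde{F}_A^{-1}}$. The second is a cross term $-\eta\langle v,\tilde{F}_A^{-1}G_B^\perp\rangle$ with $v\in\mathcal{V}_A$. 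Block invariance makes $\tilde{F}_A^{-1}G_B^\perp\in\mathcal{V}_A^\perp$, so this cross term vanishes. The inner product with the $O(\tilde{\lambda}_\text{min}^{-1})$ remainder is bounded by Cauchy--Schwarz. Taking $\eta$ small then lets the $O(\eta^2)$ term be dominated or absorbed, which gives the claimed inequality.

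The main obstacle is justifying the invariance of $\mathcal{V}_A$ under $\tilde{F}_A$, and the existence of a uniform spectral gap. $\tilde{F}_A$ is built from the diagonal Fisher propagated through sample-dependent dynamics, so its eigenvectors need not align exactly with the column space of $F_A$. My first approach would be to write $\tilde{F}_A = F_A + E$ using the explicit form in Supplementary~\ref{supp:continual-natural}. I would then bound the mixing block $P_{\perp}\tilde{F}_A P_{\parallel}$ by a Davis--Kahan-type argument, and fold the resulting misalignment into the $O(\tilde{\lambda}_\text{min}^{-1})$ term. If that bound turns out too weak, the theorem should be stated under an explicit alignment assumption instead.
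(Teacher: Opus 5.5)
\textbf{Overall.} Your route is the paper's route: substitute the continual-natural gradient, split $G_B+G_{A\leftarrow B}$ into $G_B^{\perp}$ plus a $\mathcal{V}_A$-component, attenuate that component by $\tilde{\lambda}_{\min}^{-1}$ (Lemmas~\ref{lemma:va-atten} and~\ref{lemma:efc-decomp}), then Taylor-expand and apply Cauchy--Schwarz (Theorem~\ref{th:class-il-full}). Your treatment of the first display is fine.

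\textbf{The gap: the Task~A gradient is missing.} In the descent step you write $\nabla\mathcal{L}^{\text{CIL}}$ as $G_B^{\perp}+(G_B^{\parallel}+G_{A\leftarrow B})$, which is only the shared-head gradient of the Task~B sample. The class-incremental objective is the joint objective $\mathbb{E}_{x_A}[\mathcal{L}_A]+\mathbb{E}_{x_B}[\mathcal{L}_B]$ over the shared head (the paper's $\mathcal{L}_{A\cup B}$). Its gradient therefore also contains $\nabla_\theta\mathcal{L}_A(\theta)$, which is exactly the term that records forgetting of Task~A.

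Without that term you have shown descent only on the Task~B loss, and that is nearly automatic. With $g=G_B+G_{A\leftarrow B}$, any positive-definite preconditioner gives $-\eta\,g^\top\tilde{F}_A^{-1}g\le 0$. Under your invariance assumption you even get $g^\top\tilde{F}_A^{-1}g\ge\|G_B^{\perp}\|^2_{\tilde{F}_A^{-1}}$.

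The paper handles the missing term by placing $\nabla_\theta\mathcal{L}_A(\theta)$ in $\mathcal{V}_A$ and bounding its inner product with $-\eta\tilde{F}_A^{-1}G_B^{\perp}$ by $O(\eta\tilde{\lambda}_{\min}^{-1})$. Under your invariance assumption that inner product is exactly zero, and its pairing with $\varepsilon$ goes through Cauchy--Schwarz, so the repair is short. You must still justify $\nabla_\theta\mathcal{L}_A(\theta)\in\mathcal{V}_A$ at the current $\theta$. The paper says this holds by definition, but $F_A$ is fixed at $\theta_A^*$ while the gradient is taken elsewhere. What actually justifies it is the quadratic approximation $\nabla_\theta\mathcal{L}_A(\theta)\approx F_A(\theta-\theta_A^*)\in\operatorname{col}(F_A)$.

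\textbf{The cross term and invariance.} You differ from the paper on the cross term $(G_B^{\parallel}+G_{A\leftarrow B})^\top\tilde{F}_A^{-1}G_B^{\perp}$. You make it vanish, whereas the paper moves $\tilde{F}_A^{-1}$ onto the $\mathcal{V}_A$-vector by symmetry and bounds it by $O(\eta\tilde{\lambda}_{\min}^{-1})$. However, the paper's attenuation lemma (``restrict $\tilde{F}_A$ to $\mathcal{V}_A$ and diagonalize'') silently uses the same invariance you flag. The reason is that $\tilde{F}_A^{-1}$ acts on $\mathcal{V}_A$ as the inverse of the restriction only when $\mathcal{V}_A$ is invariant. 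So your assumption is not extra relative to the paper. Your one-line justification of it (symmetry and positive definiteness) does not establish it, though; your final paragraph is the accurate statement.

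\textbf{If you try the perturbative route.} Write $\tilde{F}_A=\begin{pmatrix}K_{\parallel}&N\\N^\top&K_{\perp}\end{pmatrix}$ with respect to $\mathcal{V}_A\oplus\mathcal{V}_A^{\perp}$, so that $\lambda_{\min}(K_\parallel)=\tilde{\lambda}_{\min}$. Then $\tilde{F}_A^{-1}$ sends $v\in\mathcal{V}_A$ to $(S^{-1}v,\,-K_{\perp}^{-1}N^\top S^{-1}v)$, with Schur complement $S=K_{\parallel}-NK_{\perp}^{-1}N^\top$. Two things follow:
\begin{itemize}
\item The leaked component has size of order $\tilde{\lambda}_{\min}^{-1}\|N\|\|v\|/\lambda_{\min}(K_{\perp})$.
\item All you can say about the $\mathcal{V}_A$ block is $\lambda_{\min}(S)\ge\tilde{\lambda}_{\min}-\|N\|^2/\lambda_{\min}(K_{\perp})$.
\end{itemize}
EFC is designed to make $K_{\perp}$ small, so the leaked component is amplified by $\lambda_{\min}(K_\perp)^{-1}$, and a Davis--Kahan angle bound must be paired with that amplification. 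The condition you actually need is $\|N\|=O(\lambda_{\min}(K_{\perp}))$, not merely $\|N\|$ small relative to the spectral gap.
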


            Learning therefore proceeds in directions orthogonal to Task~A's curvature, while both $G_{A \leftarrow B}$ and $G_B^\parallel$ are suppressed by factor $\tilde{\lambda}_{\min}^{-1}$. When $\tilde{\lambda}_{\min}$ is large, each update guarantees descent on the class-incremental objective. The proof follows from the continual-natural gradient property and the spectral structure of $\tilde{F}_A^{-1}$ on $\mathcal{V}_A$ (Supplementary~\ref{supp:class-il-convergence}). 
            
            EFC thus requires no explicit knowledge of which gradient components are harmful. Interfering with Task~A necessarily requires modifying Task~A-sensitive parameters, and the continual-natural gradient attenuates exactly these directions. We verify convergence empirically in Section~\ref{sec:empirical-validation}.

    \subsection{Forgetting bounds in task-incremental learning}\label{sec:forgettingbounds}

        We now show that EFC tightens forgetting bounds compared to regularization-based methods in task-incremental learning, the setting where task identifiers are known at test time and no cross-task confusion can arise \citep{vandeven2019three}. We quantify forgetting directly as the increase in Task~A's loss caused by parameter updates during Task~B training. We compare standard backpropagation (BP), EWC as the canonical regularization-based method, and EFC. Our starting point is the Laplace approximation around any parameter change $\Delta\theta = \theta - \theta_A^*$,
        \begin{equation}\label{eq:bound_startingpoint}
            \mathcal{L}_A(\theta_A^* + \Delta\theta) - \mathcal{L}_A(\theta_A^*)
            \approx \tfrac{1}{2} \Delta\theta^\top F_A \Delta\theta
        \end{equation}
        where the forgetting depends on how each method's parameter deviation interacts with Task~A's curvature $F_A$. The three methods induce different parameter updates,
        \begin{align}
            \Delta\theta_{\text{BP}} &\propto \nabla_\theta \mathcal{L}_B \\
            \Delta\theta_{\text{EWC}} &\propto D_A^{-1} \nabla_\theta \mathcal{L}_B \\
            \Delta\theta_{\text{EFC}} &\propto \tilde{F}_A^{-1} \nabla_\theta \mathcal{L}_B
        \end{align}
        where $D_A = \operatorname{diag}(F_A)$ is the diagonal Fisher used by EWC. The EWC update follows from linearizing around $\theta^*_A$ when regularization strength dominates the Task~B Hessian. We substitute these deviations into Equation~\ref{eq:bound_startingpoint} to obtain the following bounds.

        \begin{theorem}\label{th_text:forgetting}
            Let $\lambda_{\max}$ denote the maximum eigenvalue of $F_A$, let $\lambda^D_{\min}$ denote the minimum eigenvalue of $D_A$, and let $\tilde{\lambda}_{\max}$ denote the maximum eigenvalue of $\tilde{F}_A$. Define $\Delta\mathcal{L}_A \triangleq \mathcal{L}_A(\theta_A^* + \Delta\theta) - \mathcal{L}_A(\theta_A^*)$. The forgetting bounds satisfy
            \begin{align}
                \Delta\mathcal{L}_A^{\text{BP}} &\lesssim \|\nabla_\theta \mathcal{L}_B\|^2\lambda_{\max} \\[4pt]
                \Delta\mathcal{L}_A^{\text{EWC}} &\lesssim \|\nabla_\theta \mathcal{L}_B\|^2\frac{n}{\lambda^D_{\min}} \\&\quad\ + O(\|\nabla^2_\theta \mathcal{L}_B\|) + O(\|D_A - F_A\|) \\[4pt]
                \Delta\mathcal{L}_A^{\text{EFC}} &\lesssim \|\nabla_\theta \mathcal{L}_B\|^2\tilde{\lambda}_{\max}^{-1} + O(\|\tilde{F}_A - F_A\|)
            \end{align}
            where $n$ is the number of parameters and $\lesssim$ absorbs constant factors involving $\eta^2$. The formal statement and proof are in Supplementary~\ref{supp:direct_forgetting_bounds}. Pairwise comparisons are in Supplementary~\ref{supp:pairwise}.
        \end{theorem}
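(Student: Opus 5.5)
The plan is to substitute each method's parameter deviation into the Laplace approximation of Equation~\ref{eq:bound_startingpoint}. Each resulting quadratic form is then bounded by operator-norm (Rayleigh quotient) estimates. Write $g \triangleq \nabla_\theta \mathcal{L}_B$ and take, to leading order in $\eta$, $\Delta\theta = -\eta P g$, with $P = I$ for BP, $P = D_A^{-1}$ for EWC, and $P = \tilde{F}_A^{-1}$ for EFC. Then $\Delta\mathcal{L}_A \approx \tfrac{\eta^2}{2} g^\top P F_A P g \le \tfrac{\eta^2}{2}\|P F_A P\|\,\|g\|^2$, and the constant $\eta^2/2$ is absorbed into $\lesssim$.

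For BP this step is immediate: $g^\top F_A g \le \lambda_{\max}\|g\|^2$.

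For EWC I first justify the preconditioned form. Linearize the regularized Task~B objective around $\theta_A^*$. Its stationary or one-step update gives $(\nabla^2_\theta\mathcal{L}_B + \beta D_A)\Delta\theta \propto -g$. A Neumann expansion in $\beta^{-1}D_A^{-1}\nabla^2_\theta\mathcal{L}_B$ then yields $\Delta\theta \propto D_A^{-1}g$ plus a correction of order $\|\nabla^2_\theta\mathcal{L}_B\|$. Next I write $F_A = D_A + (F_A - D_A)$, so that
\[
g^\top D_A^{-1}F_A D_A^{-1} g = g^\top D_A^{-1} g + g^\top D_A^{-1}(F_A-D_A)D_A^{-1}g .
\]
The first term is at most $\|g\|^2/\lambda^D_{\min}$, and the second is $O(\|D_A-F_A\|)$. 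The factor $n$ needs separate care. It comes from the fact that the off-diagonal mass of $F_A$ is not controlled by its diagonal. I would bound $\|D_A^{-1/2}F_A D_A^{-1/2}\| \le n$: for a PSD matrix, the normalized matrix has unit diagonal, so its largest eigenvalue is at most its trace, which is $n$. This gives $g^\top D_A^{-1}F_A D_A^{-1}g \le n\,\|g\|^2/\lambda^D_{\min}$ uniformly, with the remainder terms written separately as stated.

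For EFC, Theorem~\ref{th_text:continual-natural} gives $P = \tilde{F}_A^{-1}$. I again split $F_A = \tilde{F}_A + (F_A - \tilde{F}_A)$. The main term becomes $g^\top \tilde{F}_A^{-1} g$ and the remainder is $O(\|\tilde F_A - F_A\|)$. The main term is bounded by a Rayleigh quotient, $g^\top \tilde{F}_A^{-1} g \le \|g\|^2/\lambda_{\min}(\tilde F_A)$. This is the step I expect to be the main obstacle, because the statement uses $\tilde\lambda_{\max}^{-1}$ rather than $\tilde\lambda_{\min}^{-1}$. The Rayleigh inequality actually gives the lower bound $g^\top\tilde F_A^{-1}g \ge \|g\|^2/\tilde\lambda_{\max}$, not the upper bound. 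Recovering the stated form therefore needs an additional alignment assumption: that $g$, restricted to Task~A-sensitive directions, concentrates on the top eigenspace of $\tilde F_A$. Such an assumption is in the spirit of the subspace $\mathcal{V}_A$ used for Theorem~\ref{th:class-il}, and the paper's $\lesssim$ leaves room for it.

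I would try first to make this assumption precise, for instance by requiring $g$ to lie approximately in the span of the top eigenvectors, with the residual absorbed into the error term. The fallback is to state the bound with the effective eigenvalue $g^\top\tilde F_A g/\|g\|^2$ and then specialize. The formal version in Supplementary~\ref{supp:direct_forgetting_bounds} would record this assumption explicitly.
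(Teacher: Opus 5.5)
\paragraph{Overall assessment.} For BP and EWC you follow the paper's proof. For EFC you correctly find the exact step where the paper's proof breaks, but two of your proposed repairs do not work.

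\paragraph{BP and EWC.} Like the paper, you substitute $\Delta\theta=-\eta Pg$ into the Laplace quadratic form. You obtain the EWC preconditioner from a Neumann expansion of $(\nabla^2_\theta\mathcal{L}_B+\beta D_A)^{-1}$, and you bound each quadratic form by an extreme eigenvalue. The only difference is where the factor $n$ comes from:
\begin{itemize}
\item the paper uses $\lambda_{\max}(D_A^{-1}F_AD_A^{-1})\le\operatorname{tr}(D_A^{-1}F_AD_A^{-1})=\operatorname{tr}(D_A^{-1})\le n/\lambda^D_{\min}$;
\item you use $\lambda_{\max}(D_A^{-1/2}F_AD_A^{-1/2})\le n$ together with $g^\top D_A^{-1}g\le\|g\|^2/\lambda^D_{\min}$.
\end{itemize}
Both routes are valid. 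Note that your split $F_A=D_A+(F_A-D_A)$ and your unit-diagonal bound are two alternative arguments, not two steps of one. Either one alone yields the stated EWC inequality.

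\paragraph{EFC: the stated bound is false.} The paper makes the same split $F_A=\tilde F_A+(F_A-\tilde F_A)$ that you do. It then asserts $g^\top\tilde F_A^{-1}g\le\|g\|^2\tilde\lambda_{\max}^{-1}$, which is the Rayleigh inequality in the wrong direction. The EFC bound as stated is therefore false. Take $\tilde F_A=F_A=\operatorname{diag}(1,\kappa)$ with $\kappa>1$ and $g=(1,0)^\top$. The remainder $O(\|\tilde F_A-F_A\|)$ vanishes, and the forgetting equals $\eta^2/2$. The claimed bound, however, is $\eta^2/(2\kappa)$. Since $\kappa$ is arbitrary and $\lesssim$ absorbs only constants involving $\eta^2$, the $\lesssim$ does not leave room for your alignment assumption, contrary to what you suggest.

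What your argument actually proves is the bound with $\tilde\lambda_{\min}^{-1}$ in place of $\tilde\lambda_{\max}^{-1}$. The $\tilde\lambda_{\max}^{-1}$ form holds only under an alignment hypothesis written into the theorem itself. The paper's claim that EFC has the tightest bound, and Corollary~\ref{cor:forgetting-comparisons}, rest on the same faulty step.

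\paragraph{Your proposed repairs.}
\begin{enumerate}
\item Your fallback with effective eigenvalue $\rho(g)=g^\top\tilde F_Ag/\|g\|^2$ fails for the same reason. Cauchy--Schwarz applied to $\tilde F_A^{1/2}g$ and $\tilde F_A^{-1/2}g$ gives $\|g\|^4\le(g^\top\tilde F_Ag)(g^\top\tilde F_A^{-1}g)$. So $\|g\|^2/\rho(g)$ is again a lower bound on $g^\top\tilde F_A^{-1}g$, not an upper bound. The quantity that works is $\|g\|^2/(g^\top\tilde F_A^{-1}g)$, but with it the bound becomes a mere identity.
\item In your alignment route, let $r$ be the part of $g$ outside the top eigenspace. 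It contributes up to $\|r\|^2/\tilde\lambda_{\min}$, so you must assume it is small relative to $\tilde\lambda_{\min}$ rather than absorb it into the error term.
\item Restricting to Task~A-sensitive directions is legitimate only before you replace $F_A$ by $\tilde F_A$. Before the split, $F_A$ annihilates the components of $\tilde F_A^{-1}g$ outside $\mathcal{V}_A$. After the split, those components enter $g^\top\tilde F_A^{-1}g$ with the large weights that $\tilde F_A^{-1}$ has off $\mathcal{V}_A$. Even before the split, the restriction also needs $\tilde F_A$ to leave $\mathcal{V}_A$ invariant.
\item Every eigenvalue step presupposes that $\tilde F_A$ is symmetric positive definite, so state this as a hypothesis. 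The explicit formula in Theorem~\ref{th:continual-natural} guarantees neither property. In particular, its leading factor $J^\top$ caps the rank of $\tilde F_A$ at the output dimension.
\end{enumerate}
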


        The bounds differ in how forgetting scales with problem structure. BP lacks any mechanism to account for Task~A's curvature, giving the loosest bound. EWC tightens the bound by incorporating the diagonal Fisher, but the bound scales with the number of parameters $n$ through $\lambda^D_{\min}$. This scaling arises because the diagonal approximation discards interactions between parameters, treating each parameter independently. EFC obtains the tightest bound by implicitly capturing off-diagonal curvature through $\tilde{F}_A$, avoiding the dimensional scaling entirely.

        We draw a second distinction concerning the coupling with Task~B's curvature. The EWC bound includes a term $O(\|\nabla^2_\theta \mathcal{L}_B\|)$ that reflects a structural limitation of parameter-based regularization. At the stationary point of the regularized objective, the regularizer $D_A$ couples additively with Task~B's Hessian through $\nabla^2_\theta \mathcal{L}_B + \beta D_A = 0$. This coupling persists even when replacing $D_A$ with the full Fisher $F_A$. As we have seen, EFC avoids this coupling because Task~A's curvature enters as a preconditioner rather than as an additive penalty.
        
        We can interpret the difference between EWC and EFC geometrically. EWC constrains parameters to a hypercube aligned with parameter axes, while EFC constrains parameters to a hyperellipsoid aligned with Task A's curvature. The volume ratio between these regions grows exponentially with dimension when $F_A$ has significant off-diagonal structure (Supplementary~\ref{supp:geometric}). The sample-specific nature of EFC provides an additional advantage. As discussed in Section~\ref{sec:class-il-convergence}, regularization can only counteract expected interference while per-sample deviations persist. Because EFC computes preservation on each sample, the expected forgetting bound is further tightened by the variance in preservation strength across samples (Supplementary~\ref{supp:sample_specificity}).

    \section{Experiments}

        Here, we validate the theoretical predictions from Section~\ref{sec:learning-theory} and evaluate EFC on standard continual learning benchmarks. We first use a small, controlled setup that permits exact computation of the full Fisher while retaining the challenges of continual learning in Section~\ref{sec:empirical-validation}, then compare EFC against existing methods on Split-MNIST, Split-CIFAR10, and Split-Tiny-ImageNet in Section~\ref{sec:benchmarks}.

        \begin{figure*}
            \centering
            \includegraphics[width=\textwidth]{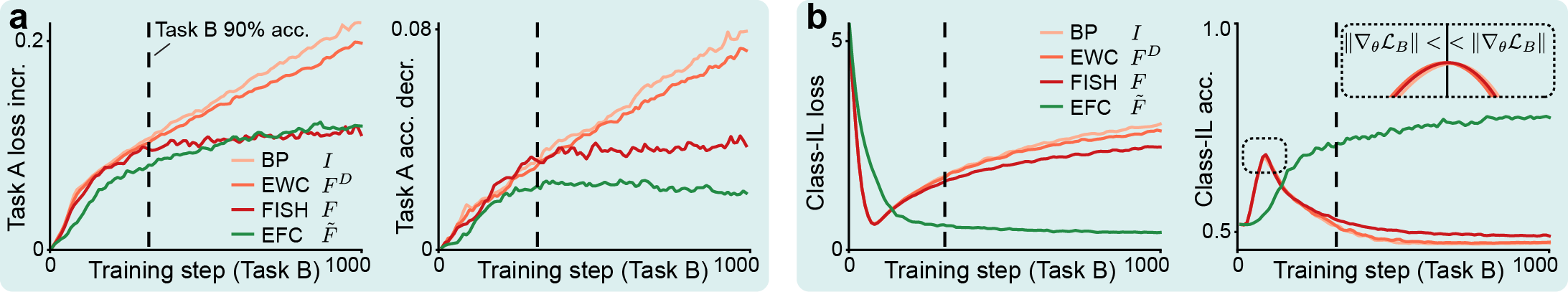}
            \caption{Empirical validation of the learning theory. (a) Task~A loss increase as a function of training on Task~B, the dynamical approximation of the full Fisher $\tilde{F}$ follows the analytical full Fisher $F$ (left). The Task~A accuracy decrease is less for the EFC method than the full Fisher (right). (b) Class-IL loss (left) and accuracy (left) do not converge for any backpropagation-based regularization method and do converge for the EFC method.}
            \label{fig:empirical}
        \end{figure*}

        \subsection{Empirical validation of the learning theory}\label{sec:empirical-validation}

            In order to computationally permit calculating the full Fisher, we train a two-hidden-layer MLP with $64$ neurons per layer on two-task Split-MNIST in the class-incremental setting, using learning rate $0.0001$ and the Adam optimizer \citep{kingma2014adam}. We assume Task~A has been learned and optimal parameters $\theta_A^*$ have been obtained, after which we train on Task~B.

            We compare four methods: standard backpropagation (BP), EWC, a variant of EWC that replaces the diagonal Fisher with the full Fisher to represent optimal regularization (FISH), and our Equilibrium Fisher Control (EFC) framework. We ensure comparability by starting all methods from the same parameters $\theta_A^*$ and aligning training so that each method reaches $90\%$ accuracy on Task~B after the same number of training steps.

            \subsubsection{The continual-natural gradient approximates the full Fisher}
            
                We first verify the quality of the implicit curvature approximation from Theorem~\ref{th_text:continual-natural} by measuring the increase in Task~A loss during Task~B training (Figure~\ref{fig:empirical}a). The forgetting for each method reflects how well its curvature approximation $M$ satisfies $\|M - F_A\|$, with $M = I$ for BP, $M = D_A$ for EWC, $M = \tilde{F}_A$ for EFC, and $M = F_A$ for FISH.
                
                BP exhibits the highest forgetting, consistent with the identity matrix being a poor approximation of the full curvature. The normalized Frobenius inner product $\langle I, F_A \rangle / (\|I\| \|F_A\|) \approx 0.01$ confirms this quantitatively. EWC reduces forgetting relative to BP, though the diagonal approximation remains limited, with $\langle D_A, F_A \rangle / (\|D_A\| \|F_A\|) \approx 0.09$. EFC matches the forgetting profile of FISH, confirming that the dynamical system recovers an approximation to the full Fisher despite storing only diagonal information in the preservation signal $\gamma$.

            \subsubsection{EFC achieves class-incremental convergence}
            
                We next verify that EFC converges on the class-incremental objective where backpropagation with regularization does not (Figure~\ref{fig:empirical}b). All backpropagation-based methods, including FISH with access to the full curvature $F_A$, fail to converge on the class-incremental loss. The loss initially decreases as Task~B learning dominates, reaches a stationary point, and subsequently increases as the accumulated interference term $G_{A \leftarrow B}$ overtakes learning progress. This pattern matches the theoretical prediction from Section~\ref{sec:class-il-convergence}: parameter-based regularization cannot cancel the sample-dependent interference term.
                
                As we predicted by Theorem~\ref{th:class-il}, EFC converges on the class-incremental loss. The preservation signal $\gamma$ modulates the dynamics on a per-sample basis, filtering the interference term before parameter updates occur. The divergence of backpropagation-based methods after the stationary point reflects the growing influence of Task~B's curvature in the weights, analyzed in detail in Supplementary~\ref{supp:curvature_B}.

            \subsubsection{Dynamic curvature estimation outperforms static storage}
    
                We observe that Task~A loss increases similarly for EFC and FISH, while Task~A accuracy degrades less for EFC (Figure~\ref{fig:empirical}a, right panel). We suspect that this reflects a limitation of storing curvature information explicitly, even when the full Fisher is available, instead of dynamically obtaining the curvature information.
                
                The Fisher computed at $\theta_A^*$ characterizes the local geometry at that specific point in parameter space. As parameters move during Task~B training, this stored curvature becomes increasingly misaligned with the true local geometry. FISH anchors the full Fisher at $\theta_A^*$ and cannot adapt. EFC stores only the diagonal Fisher in the preservation signal $\gamma$, while the off-diagonal structure emerges dynamically from the network's feedforward-feedback interactions at the current parameter configuration. The implicit curvature $\tilde{F}_A$ therefore transforms appropriately as parameters move, remaining consistent with the geometry of the combined Task~A and Task~B landscape.

    \subsection{Benchmarks}\label{sec:benchmarks}

       \begin{table*}[t]
          \begin{center}
            \begin{small}
              \begin{tabular}{l@{\hspace{1em}}l@{\hfill}l@{\hspace{1em}}cccccc}
                \toprule
                & \multicolumn{2}{c}{} & \multicolumn{2}{c}{\textbf{Split-MNIST}} & \multicolumn{2}{c}{\textbf{Split-CIFAR10}} & \multicolumn{2}{c}{\textbf{Split-Tiny-ImageNet}} \\
                \cmidrule(lr){4-5} \cmidrule(lr){6-7} \cmidrule(lr){8-9}
                & \textbf{Method} & & \textit{Task-IL} & \textit{Class-IL} & \textit{Task-IL} & \textit{Class-IL} & \textit{Task-IL} & \textit{Class-IL} \\
                \midrule
                \multirow{2}{*}{\textbf{Baselines}} 
                & JOINT & (upper bound) & 99.7$\pm$0.0 & 98.2$\pm$0.1 & 98.3$\pm$0.1 & 92.2$\pm$0.2 & 82.0$\pm$0.1 & 60.0$\pm$0.2 \\
                & SGD & (lower bound) & 98.1$\pm$0.1 & 19.3$\pm$0.0 & 95.5$\pm$0.5 & 19.6$\pm$0.1 & 28.0$\pm$0.5 & 3.8$\pm$0.1 \\
                \midrule
                \multirow{4}{*}{\parbox{2.5cm}{\textbf{Parameter-based\\regularization}}}
                & EWC & \citep{kirkpatrick2017} & 97.2$\pm$0.6 & 19.8$\pm$0.0 & 95.5$\pm$1.2 & 20.9$\pm$1.1 & 34.2$\pm$0.4 & 4.6$\pm$0.1 \\
                & oEWC & \citep{schwarz2018} & \textbf{99.2$\pm$0.4} & 20.2$\pm$0.8 & 95.9$\pm$0.7 & 21.2$\pm$2.0 & 35.5$\pm$0.3 & 4.8$\pm$0.0 \\
                & SI & \citep{zenke2017} & 97.5$\pm$0.9 & 19.7$\pm$0.0 & 92.6$\pm$1.3 & 19.8$\pm$0.1 & 35.5$\pm$0.5 & 4.7$\pm$0.0 \\
                & \textbf{EFC} & \textbf{(ours)} & 98.3$\pm$0.7 & \textbf{51.4$\pm$4.8} & \textbf{96.2$\pm$0.3} & \textbf{50.2$\pm$7.0} & \textbf{37.2$\pm$0.4} & \textbf{8.8$\pm$0.1} \\
                \midrule
                \textbf{Replay (200)}
                & DER++ & \citep{buzzega2020} & 98.6$\pm$0.2 & 71.1$\pm$2.2 & 95.4$\pm$0.8 & 62.3$\pm$1.2 & 39.0$\pm$1.6 & 10.5$\pm$1.2 \\
                \bottomrule
              \end{tabular}
            \end{small}
          \end{center}
          \caption{Classification accuracies on continual learning benchmarks. We report mean $\pm$ standard deviation over five random seeds. JOINT trains on all tasks simultaneously (upper bound). SGD trains sequentially without any continual learning mechanism (lower bound).}
          \label{tab:benchmarks}
        \end{table*}
    
        We evaluate EFC against standard parameter-based regularization methods on Split-MNIST, Split-CIFAR10, and Split-Tiny-ImageNet across task-incremental and class-incremental settings \citep{vandeven2019three}. We compare against Elastic Weight Consolidation \citep[EWC;][]{kirkpatrick2017}, online EWC \citep[oEWC;][]{schwarz2018}, and Synaptic Intelligence \citep[SI;][]{zenke2017} as representative parameter-based regularization methods. We additionally evaluate Dark Experience Replay \citep[DER++;][]{buzzega2020} with a buffer size of 200 samples, to quantify the gap between regularization and replay approaches.

        \subsubsection{Architectures and training protocol}
            
            For Split-MNIST, we follow \citet{riemer2019learning} and train a fully-connected network with two hidden layers of $100$ ReLU units each, without pretraining. For Split-CIFAR10 and Split-Tiny-ImageNet, we use a pretrained ResNet18 \citep{he2016deep} as a frozen encoder and train a fully-connected layer of width $100$ for Split-CIFAR10 and $1000$ for Split-Tiny-ImageNet,  followed by the output layer. We chose to use larger network size for Split-Tiny-ImageNet due to the significantly larger number of classes ($200$ for Tiny-ImageNet as opposed to 10 for CIFAR-10 and MNIST). We use a pretrained encoder to isolate the failure mode of parameter-based regularization in class-incremental learning, as established in Section~\ref{sec:bp-reg-failure}, demonstrating that the failure persists even when the learned features are sufficient for the task, while not diminishing the core difficulty of continual learning. We apply no data augmentation. Each method is hyperparameter-optimized individually, and we refer the reader to our codebase for a per-method and per-setting list. All methods use the Adam optimizer \citep{kingma2014adam} except the backpropagation SGD baseline, which serves as a lower bound. We train each task for $20$ epochs. We evaluate each method over five random seeds and report mean accuracy with standard deviation.
        
        \subsubsection{Experimental results}
        
            We gather our experimental results in Table~\ref{tab:benchmarks}. EFC consistently matches parameter-based regularization methods across all settings, with particularly pronounced improvements in class-incremental learning. \citet{buzzega2020} have described an ``unbridgeable gap" between regularization methods and replay-based approaches in class-incremental settings. Our results substantially narrow this gap without relying on replay.
            
            Prior work attributed the failure of regularization methods in class-incremental settings to the local nature of curvature estimates, arguing that importance computed at earlier task optima becomes unreliable as parameters move through later tasks \citep{buzzega2020,kim2022,masana2022,wu2024meta}. Our framework addresses this limitation through dynamic curvature estimation, where the network dynamics approximate prior-task curvature at the current parameter configuration rather than relying on stored estimates. However, our theoretical analysis in Section~\ref{sec:bp-reg-failure} identifies a more fundamental obstacle. Parameter-based regularization cannot distinguish gradient components that help the new task from components that interfere with previous tasks, regardless of curvature quality. The experimental results confirm this analysis. Methods with improved curvature approximations fail to improve in class-incremental settings, whereas EFC achieves substantial gains by filtering interference within the dynamics before parameter updates occur. 

            We additionally visualize this phenomenon through confusion matrices for oEWC, DER++, and EFC in Figure~\ref{fig:confusion}. Regularization methods shift nearly all output probability mass to the most recently learned task, with earlier tasks receiving close to zero accuracy. This pattern confirms our theoretical prediction that parameter-based regularization cannot prevent the suppression of prior-task outputs during new-task learning. DER++ maintains more balanced accuracy across tasks, with confusion occurring primarily between the current sample's class and classes from previously learned tasks. EFC exhibits bidirectional confusion, misclassifying samples both as earlier and later task classes, indicating that preservation and learning signals interact throughout the task sequence rather than simply favoring recent tasks (see Supplementary~\ref{supp:confusion}).

\section{Discussion}

    Our work reformulates continual learning as a control problem where learning and preservation signals compete within neural activity dynamics. By converting regularization penalties into neuron-specific preservation signals that modulate network dynamics, we couple the learning of new tasks directly to the preservation of prior tasks during the formation of learning signals. In contrast, parameter-based regularization methods fail in settings requiring task discrimination because corrections are applied after gradient computation, leaving new-task learning decoupled from prior-task geometry. Our framework addresses both limitations of existing regularization methods identified in our analysis. Curvature is approximated dynamically rather than stored statically, and interference is filtered within the dynamics rather than corrected after the gradient has already been computed. At equilibrium, the resulting weight updates implicitly encode the full prior-task curvature without explicit storage, a property we term the continual-natural gradient. Our experiments validate these theoretical predictions and demonstrate improved performance over existing replay-free methods on standard benchmarks.

    Our learning framework draws inspiration from the computational properties of cortical pyramidal neurons. We model our neurons after Layer 5 pyramidal neurons, which possess a compartmentalized dendritic architecture where basal dendrites integrate feedforward sensory information and apical dendrites receive diverse top-down signals, including feedback from higher cortical regions, teaching signals, and contextual information that guides both learning and preservation \citep{spruston2008,larkum2009,larkum2013,williams2019}. Three properties of pyramidal neurons inform our framework. First, apical inputs act multiplicatively on neuronal output rather than additively, allowing apical signals to modulate the feedforward processing \citep{larkum2004,aceituno2025}. Our learning and preservation signals employ this multiplicative structure, modulating neural activities proportionally to how strongly current inputs engage previously relevant synapses. Second, apical inputs are co-directional with basal synaptic plasticity, meaning that apical modulation guides the direction of synaptic weight changes \citep{payeur2021,aceituno2025}. Our framework applies this principle by deriving weight updates from equilibrium states shaped by the interaction of learning and preservation signals, thereby also being co-directional in modulation and plasticity. Third, cortical networks operate as dynamical systems where information flows bidirectionally across the hierarchy \citep{felleman1991,bastos2012canonical}, and learning proceeds by driving neural activities toward target states rather than propagating explicit errors backward \citep{meulemans2021,meulemans2022least,song2024inferring}. This target learning perspective aligns with our least-control formulation, where parameters update to make desired equilibria easier to reach. Our work therefore contributes to a growing body of research on biologically plausible learning algorithms \citep{whittington2017,guerguiev2017,scellier2017equilibrium,richards2019deep,lillicrap2020backpropagation,meulemans2021,song2024inferring,aceituno2025} by demonstrating that principles derived from cortical computation can address fundamental challenges in continual learning.

    We view the current work as a proof-of-principle that continual learning can be reformulated as a control minimization problem and approached through a dynamical systems lens, but several limitations warrant discussion. First, dynamical systems approaches are substantially more computationally expensive than standard backpropagation. Each learning step requires running network dynamics to equilibrium, which involves multiple iterations of forward and feedback passes. Practical implementations therefore do not scale well to larger networks, limiting applicability to problems where control-based methods remain tractable. Second, our framework introduces additional hyperparameters governing the dynamics, including the preservation strength, controller leak rate, and different convergence criteria. Combined with standard optimization hyperparameters, this increases the complexity of training and makes hyperparameter tuning more tedious. The sensitivity of dynamical systems to these settings can lead to instability or slow convergence when hyperparameters are not carefully chosen. Third, while the continual-natural gradient captures parameter interactions dynamically, our framework still requires storing the diagonal of the Fisher information matrix to construct the preservation signal. This stored curvature suffers from the same drift problem identified for other regularization methods, namely as parameters move during sequential learning, the stored diagonal becomes increasingly misaligned with the true local geometry. Our framework mitigates but does not eliminate this limitation, as the off-diagonal structure is recovered dynamically while the diagonal remains static.

    The limitation of storing diagonal curvature points toward a natural direction for future work. A fully dynamical approach would approximate all relevant curvature information through network dynamics without storing any curvature explicitly. Such methods could leverage the sample-specific properties demonstrated in this work while eliminating the mismatch between stored and true curvature entirely. One avenue is to derive preservation signals from activity-based importance measures that evolve online during learning, rather than from curvature computed at previous optima. Another direction involves extending the least-control framework to settings without explicit task boundaries, where preservation must operate continuously rather than being triggered by task transitions. The connection between our framework and biological learning mechanisms also suggests experimental predictions, namely if cortical networks implement something analogous to our preservation signal, neurons encoding task-relevant representations should exhibit increased resistance to modulation during subsequent learning, a prediction testable through longitudinal imaging of neural populations across sequential task acquisition.

\section*{Contributions}

    S.d.H. conceptualized the project, performed the mathematical analyses, and implemented the framework. S.d.H. and Y.T.B. performed the experiments. P.V.A. and B.F.G. supervised the project. S.d.H., Y.T.B., P.V.A, and B.F.G. wrote the paper.

\section*{Acknowledgements}

    This work was supported by the Swiss National Science Foundation (B.F.G. CRSII5-173721 and 315230 189251, P.V.A. 182539funding), ETH project funding (B.F.G. ETH-20 19-01), a Forschungskredit from the University of Zürich (P.V.A. FK-24-122), and part of the “Learn to learn safely” project funded by a grant of the Hasler foundation (Y.T.B. 21039). We want to especially thank Joonsu Gha for support on the codebase and project.

\section*{Impact Statement}

This paper presents work whose goal is to advance the field of Machine Learning. There are many potential societal consequences of our work, none which we feel must be specifically highlighted here.

\bibliography{ref}
\bibliographystyle{icml2026}

\newpage
\appendix
\onecolumn

\setcounter{equation}{0} 
\renewcommand{\theequation}{S\arabic{equation}}
\setcounter{figure}{0} 
\renewcommand{\thefigure}{S\arabic{figure}}

\section{Multiplicative least-control}\label{sec:multiplicative-least-control}

    \subsection{Deriving the preservation signal}\label{supp:preservation}

        Continual learning seeks parameters $\theta$ that fit a joint data distribution $\mathcal{D} = \{\mathcal{D}_A, \mathcal{D}_B\}$, where Task~A has been learned previously, and Task~B is now trained without further access to $\mathcal{D}_A$. From a Bayesian perspective, this is framed as maximizing the posterior:
        \begin{equation}
            \log p(\theta | \mathcal{D}_A, \mathcal{D}_B) = \underbrace{\log p(\mathcal{D}_B | \theta)}_{-\mathcal{L}_B(\theta)} + \underbrace{\log p(\theta | \mathcal{D}_A)}_{\text{crucial for CL}} - \underbrace{\log p(\mathcal{D}_B)}_{\text{independent of } \theta}
        \end{equation}
        This decomposes into optimizing $\theta$ for Task~B (via the likelihood) while retaining Task~A’s knowledge (via the posterior). The difficulty lies in preserving $p(\theta | \mathcal{D}_A)$ without having access to $\mathcal{D}_A$. We follow the standard Laplace approximation from Kirkpatrick et al.~\cite{kirkpatrick2017} around the optimal parameters $\theta_A^*$ from Task~A:
        \begin{equation}\label{eq:laplace}
            \log p(\theta | \mathcal{D}_A) \approx \log p(\theta_A^* | \mathcal{D}_A) - \frac{1}{2} (\theta - \theta_A^*)^\top H_A (\theta - \theta_A^*)
        \end{equation}
        where $H_A$ is the Hessian of Task~A’s loss at $\theta_A^*$ and a Gaussian posterior is assumed. The gradient of this posterior, guiding preservation during Task~B training, is,
        \begin{equation}
            \nabla_\theta \log p(\theta | \mathcal{D}_A) = - H_A (\theta - \theta_A^*) \stackrel{(1)}{\approx} - F_A (\theta - \theta_A^*) \stackrel{(2)}{\approx} - F_A^D (\theta - \theta_A^*)
        \end{equation}
        Two practical approximations are ubiquitous in the literature and preserved here: (1) $H_A \approx F_A$, the Fisher information matrix evaluated at $\theta_A^*$ (exact for standard supervised losses at a strict local minimum), and (2) the diagonal Fisher $F_A^D$ for scalability, as the full Fisher information matrix requires $O(\theta^2)$ computations, yielding the classic EWC penalty \cite{kirkpatrick2017}.

        Now as an intermezzo, any parameter-space objective $\mathcal{R}(\theta)$, think of a regularizer or Bayesian prior, can be turned into a neuron-specific constraint signal. We simply project the relevant gradient block onto the current presynaptic activities of layer $i$ and neuron $k$,
        \begin{equation}
            \gamma_{i,k} = \beta \, \mathbf{r}_{i-1,\star}^\top \, \nabla_{W_i[:,k]} \mathcal{R}(\theta)
        \end{equation}
        where $W_i[:,j]$ denotes the incoming weight column for neuron $j$, and $\beta \in \mathbb{R^+}$ scales strength. The resulting constraint is computed solely from local presynaptic rates and parameters, making the signal fully neuron-specific and activity-dependent. Consolidation or amplification occurs only when the synapses that are simultaneously important to $\mathcal{R}$ and currently active.
        
        We now apply this to the EWC regularizer, therefore the resulting preservation constraint, which we denote as $\gamma$, is (in simplified vector form),
        $$\gamma_i = \beta \, \mathbf{r}_{i-1,\star}^\top F_{A,i}^D (W_i - W_{A,i}^*),$$
        with $\beta > 0$ controlling preservation strength. We therefore consider the following multiplicative constrained least control objective for continual learning,
        \begin{equation}\label{eq:lcp_with_gamma}
             \min_{\theta, \phi, \psi} \|\psi\|^2 \quad \text{s.t.} \quad e^{\psi + \gamma} \odot f(\phi, \theta) = \mathbf{1}\ \ \land \nabla_\phi \mathcal{L}(\phi) = \mathbf{0}
        \end{equation}

    \subsection{Definition}\label{supp:mult-lcp}
    
        Previous work \citep{meulemans2022least} defines a least-control principle, reframing optimization away from backpropagation and towards control theory. In brief, the normal objective of:
        \begin{equation}
            \min_\theta \mathcal{L}(\phi^*)\ \text{s.t.}\ f(\phi^*,\theta) = 0,\ \text{with}\ \dot{\phi}=f(\phi,\theta)
        \end{equation}
        where $\phi^*$ represents the dynamics at equilibrium, can be restructured by introducing a control signal and minimizing the amount of control required:
        \begin{equation}
            \dot{\phi}=f(\phi,\theta) + \psi \implies \min_{\theta,\phi,\psi} \underbrace{||\psi||^2}_\text{mag. of control}\ \text{s.t.}\ \underbrace{f(\phi,\theta)+\psi=0}_\text{steady state of $\phi$}\land \underbrace{\nabla_\phi \mathcal{L}(\phi)=0}_\text{minima of dynamics}
        \end{equation}
        The parameters are then updated by first fixing $\theta$ and finding the optimal $\phi_*$ and $\psi_*$ and then updating the weights with the gradient of $\mathcal{H}(\theta):=||\psi||^2$.
        
        We now extend this to a multiplicative variant with the objective 
        \begin{equation}\label{eq:multiplicative_lcp}
            \dot{\phi}=f(\phi,\theta) + \psi \implies \min_{\theta,\phi,\psi} ||\psi||^2\ \text{s.t.}\ e^\psi f(\phi,\theta)-1=0\land \nabla_\phi \mathcal{L}(\phi)=0
        \end{equation}
        In this section we will proof the first-order gradient of the multiplicative least-control objective, followed by the general class of controllers that solve the objective together with a comparison between the additive case of \cite{meulemans2022least} and ours, and finally the convergence of the network together with the equilibrium points. Some of proofs follow a similar structure as explored by \citep{meulemans2022least}.

    \subsection{First-order gradient}\label{supp:gradient}
    
        \begin{theorem}[First-order gradient]\label{th:first-order}
            Let $(\phi_\star,\psi_\star)$ be an optimal control for the multiplicative least-control problem of Eq.~\ref{eq:multiplicative_lcp}, and let $(\lambda_\star,\mu_\star)$ be the Lagrange multipliers for which the KKT conditions are satisfied. Under the assumption that the Hessian of the multiplicative LCP-Lagrangian $\partial^2_{\phi_\star,\psi_\star,\lambda_\star,\mu_\star,\theta}$ is invertible, the multiplicative LCP yields the following gradient for $\theta$:
            \begin{equation}
                \left(\frac{d}{d\theta}\mathcal{H}(\theta)\right)^\top = \partial_\theta \log f(\phi_\star,\theta)^\top \log f(\phi_\star,\theta)
            \end{equation}
        \end{theorem}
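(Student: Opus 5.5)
The plan is to use the envelope theorem on the Lagrangian of the multiplicative least-control problem of Eq.~\ref{eq:multiplicative_lcp}. Define
\begin{equation*}
\mathcal{L}_{\mathrm{LCP}}(\phi,\psi,\lambda,\mu,\theta) = \|\psi\|^2 + \lambda^\top\big(e^{\psi}\odot f(\phi,\theta) - \mathbf{1}\big) + \mu^\top \nabla_\phi \mathcal{L}(\phi).
\end{equation*}
With $\theta$ fixed, $(\phi_\star,\psi_\star,\lambda_\star,\mu_\star)$ is a KKT point of this Lagrangian. The assumed invertibility of its Hessian lets the implicit function theorem apply: it shows that $\theta\mapsto(\phi_\star,\psi_\star,\lambda_\star,\mu_\star)(\theta)$ is locally a differentiable map. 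Two facts then give the envelope identity $\frac{d}{d\theta}\mathcal{H}(\theta) = \partial_\theta \mathcal{L}_{\mathrm{LCP}}$ evaluated at the optimum. First, the constraints hold identically along this curve, so $\mathcal{H}(\theta)=\mathcal{L}_{\mathrm{LCP}}$ at the optimum. Second, the partial derivatives with respect to $\phi,\psi,\lambda,\mu$ vanish there.

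\textbf{Computing the partial derivative in $\theta$.} The loss term $\nabla_\phi\mathcal{L}(\phi)$ does not depend on $\theta$, so the $\mu$-term contributes nothing. The only remaining contribution is
\begin{equation*}
\partial_\theta \mathcal{L}_{\mathrm{LCP}} = \lambda_\star^\top \operatorname{diag}(e^{\psi_\star})\,\partial_\theta f(\phi_\star,\theta).
\end{equation*}

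\textbf{Identifying the multiplier.} I will use the stationarity condition in $\psi$, which reads $2\psi_\star + \lambda_\star\odot e^{\psi_\star}\odot f(\phi_\star,\theta)=0$. The equilibrium constraint gives $e^{\psi_\star}\odot f = \mathbf{1}$, so this yields $\lambda_\star = -2\psi_\star$. The same constraint also gives, elementwise, $\psi_\star = -\log f(\phi_\star,\theta)$ and $e^{\psi_\star} = 1/f$. This step needs $f>0$, which holds by the equilibrium constraint: a positive quantity $e^{\psi}$ times $f$ equals $\mathbf{1}$.

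\textbf{Assembling the gradient.} Substituting these identities gives
\begin{equation*}
\partial_\theta \mathcal{L}_{\mathrm{LCP}} = -2\psi_\star^\top \operatorname{diag}(1/f)\,\partial_\theta f = 2\log f(\phi_\star,\theta)^\top\,\partial_\theta \log f(\phi_\star,\theta).
\end{equation*}
Transposing yields the stated formula. The factor $2$ either is absorbed into the learning rate or disappears if $\mathcal{H}$ is normalized as $\tfrac12\|\psi\|^2$, and I would state this convention explicitly.

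As a sanity check, the constraint eliminates $\psi$ directly: $\mathcal{H}(\theta)=\|\log f(\phi_\star(\theta),\theta)\|^2$. Differentiating this by the chain rule gives the formula plus an extra term through $d\phi_\star/d\theta$. That extra term must vanish by $\phi$-stationarity of the Lagrangian. This is the same cancellation the envelope argument uses.

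\textbf{Main obstacle.} I expect the delicate step to be justifying the envelope argument rigorously. This means checking that the invertibility assumption actually delivers differentiability of the optimal solution, and that the $\phi$-stationarity condition, which couples $\lambda_\star$ and $\mu_\star$, holds. Only with both does the implicit dependence of $\phi_\star$ on $\theta$ drop out.
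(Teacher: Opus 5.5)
Your proposal is correct and follows the paper's proof essentially step for step: invertibility of the Hessian gives the implicit function theorem, $\mathcal{H}(\theta)$ equals the Lagrangian at the KKT point, the envelope identity gives $d_\theta\mathcal{H}=\partial_\theta\mathcal{L}$, and $\lambda_\star$ and $\psi_\star$ are eliminated using $\psi$-stationarity and $\psi_\star=-\log f(\phi_\star,\theta)$. Your remark about the factor $2$ is apt, since the paper's proof silently uses $\tfrac12\|\psi\|^2$ in its Lagrangian, which is why the stated formula has no factor $2$.
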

    
        \textit{Proof.} We first state the KKT condition for the multiplicative LCP Lagrangian:
    
        \begin{align}
            \mathcal{L}(\phi,\psi,\lambda,\mu,\theta) &= \frac{1}{2}||\psi||^2 + \lambda^\top e^\psi f(\phi,\theta)-1 + \mu^\top \nabla_\phi \mathcal{L}(\phi) \\
            \partial_\phi \mathcal{L}(\phi_\star,\psi_\star,\lambda_\star,\mu_\star,\theta) &= \lambda_\star^\top e^{\psi_\star} \partial_\phi f(\phi_\star,\theta) + \mu_\star^\top \partial^2_y L(h(\phi_\star)) \partial_\phi h(\phi_\star) = 0 \\
            \partial_\psi \mathcal{L}(\phi_\star,\psi_\star,\lambda_\star,\mu_\star,\theta) &= \psi_\star^\top + \lambda_\star^\top e^{\psi_\star} f(\phi_\star,\theta) = 0 \\
            \partial_\lambda \mathcal{L}(\phi_\star,\psi_\star,\lambda_\star,\mu_\star,\theta) &= e^{\psi_\star} f(\phi_\star,\theta)^\top - 1 = 0 \\
            \partial_\mu \mathcal{L}(\phi_\star,\psi_\star,\lambda_\star,\mu_\star,\theta) &= \partial_y L(h(\phi_\star)) = 0
        \end{align}
    
        now we have two additional constraints,
    
        \begin{align}
            \psi_\star^\top + \lambda_\star^\top e^{\psi_\star} f(\phi_\star,\theta) = 0 & \implies \lambda_\star^\top = -\frac{\psi^\top_\star}{e^{\psi_\star}f(\phi_\star,\theta)} \label{eq:condition_1} \\
            e^{\psi_\star} f(\phi_\star,\theta)^\top - 1 = 0 &\implies \psi_\star = - \log f(\phi_\star,\theta)\label{eq:condition_2}
        \end{align}
    
        with which we can prove our theorem, as $(\phi_\star,\psi_\star)$ are an optimal control, there exist $(\lambda_\star,\mu_\star)$ such that $\partial_{\phi,\psi,\lambda,\mu} \mathcal{L}(\phi_\star,\psi_\star,\lambda_\star,\mu_\star,\theta) = 0$ as is our assumption. We use this condition to implicitly define $(\phi_\star,\psi_\star,\lambda_\star,\mu_\star)$ as functions of $\theta$ which are well defined and differentiable according to the implicit function theorem as long as the Hessian of this Lagrangian is invertible, which we assume. Then:
    
        \begin{align}
            \mathcal{H}(\theta) &= \mathcal{L}(\phi_\star,\psi_\star,\lambda_\star,\mu_\star,\theta) \\
            &= \frac{1}{2}||\psi||^2 + \lambda^\top e^\psi f(\phi,\theta)-1 + \mu^\top \nabla_\phi \mathcal{L}(\phi) \\
            &= \frac{1}{2}||\psi||^2
        \end{align}
    
        as the conditions are satisfied. We calculate the gradient:
    
        \begin{align}
            d_\theta \mathcal{H}(\theta) &= d_\theta \mathcal{L}(\phi_\star,\psi_\star,\lambda_\star,\mu_\star,\theta) \\
            &= \partial_\theta \mathcal{L} \\
            &= \lambda_\star^\top e^{\psi_\star}\partial_\theta f(\phi_\star,\theta) \\
            &= - \psi_\star^\top \frac{\partial_\theta f(\phi_\star,\theta)}{f(\phi_\star,\theta)} &\{\text{using condition \ref{eq:condition_1}}\} \\
            &= \log f(\phi_\star,\theta)^\top \partial_\theta \log f(\phi_\star,\theta) &\{\text{using condition \ref{eq:condition_2} and log identity}\}
        \end{align}
    
        \qed

    \subsection{Controlling the network dynamics}\label{methods:dfc}

        The network consists of $L$ layers with post-nonlinearity activities $\mathbf{r}_i \in \mathbb{R}^{d_i}$ for $i=1,\dots,L$, where the input $\mathbf{r}_0$ is fixed. The controlled dynamics are
        \begin{equation}
            \tau \dot{\mathbf{r}}_i(t) = -\mathbf{r}_i(t) + e^{\psi_i(t) + \gamma_i(t)} \, \sigma(W_i \mathbf{r}_{i-1}(t)), \quad i=1,\dots,L
        \end{equation}
        where $\sigma$ is an element-wise monotonically increasing nonlinearity, $W_i$ are the trainable feedforward weights, $\psi_i(t) \in \mathbb{R}^{d_i}$ is the learning signal propagated to layer $i$, and $\gamma_i(t) \in \mathbb{R}^{d_i}$ is the preservation signal.
        
        The neuron-specific learning signal is generated by a feedback controller that continuously integrates the output error and adjusts neural activities accordingly. The optimal controller for the multiplicative case is derived in Supp.~\ref{supp:multiplicative_control} (see Supp.~\ref{supp:lcp_comparison} for comparison with the additive formulation), and convergence of the equilibrium $(\phi_\star, \psi_\star, \gamma_\star)$ is guaranteed under mild contractivity assumptions (Supp.~\ref{supp:equilibrium}). We follow Meulemans et al. (2021)\citep{meulemans2021} and employ a proportional-integral (PI) controller that minimizes the discrepancy between the network's current output $\mathbf{r}_L(t)$ and a target output $\mathbf{r}_L^*$. The feedback signal and controller dynamics are
        \begin{equation}\label{eq:controller}
            \psi_i(t) = Q_i \mathbf{u}(t), \quad \dot{\mathbf{u}}(t) = -(\mathbf{r}_L(t) - \mathbf{r}_L^*) - \alpha \mathbf{u}(t), \quad \mathbf{r}_L^* = \mathbf{r}_L^- - \lambda \nabla_{\mathbf{r}_L} \mathcal{L}(\mathbf{r}_L^-)
        \end{equation}
        where $\mathbf{u}(t)$ is the control signal, $\alpha$ is a leakage parameter that regularizes the controller's magnitude, and the output target $\mathbf{r}_L^*$ is defined to minimize the loss $\mathcal{L}$ either via a small step $\lambda$ along the negative loss gradient (as in weak-feedback settings \citep{scellier2017equilibrium,meulemans2021}) or directly as the desired output (as in strong-feedback settings \citep{meulemans2022least,song2024inferring}). The feedback weights $Q_i$ can be learned via an anti-Hebbian rule when combined with noisy dynamics to estimate the Jacobian transpose $J_i^\top$ \citep{akrout2019deep,podlaski2020,meulemans2021}, which we calculate and apply directly for optimal control.
        
        Through the coupled dynamics of the network and controller, the system evolves until an equilibrium point $(\phi_\star, \psi_\star, \gamma_\star)$ is reached. Here,
        \begin{align*}
            \phi_\star &\triangleq \{\mathbf{r}_{i,\star}\}_{i=1}^L, &
            \psi_\star &\triangleq \{\psi_{i,\star}\}_{i=1}^L, &
            \gamma_\star &\triangleq \{\gamma_{i,\star}\}_{i=1}^L
        \end{align*}
        collect the equilibrium neural activities, learning signals, and preservation signals across all layers respectively, with $\psi_{i,\star} \triangleq Q_i \mathbf{u}_\star$ representing the steady-state feedback to layer $i$ and $\mathbf{u}_\star$ the steady-state controller output.
        
        Once the joint dynamics reach equilibrium, we complete one learning step and the feedforward weights $W_i$ of each layer are updated as
        \begin{equation}
            \Delta W_i = \bigl( \mathbf{r}_{i,\star} - \sigma(W_i \mathbf{r}_{i-1,\star}) \bigr) \mathbf{r}_{i-1,\star}^\top
        \end{equation}
        where $\mathbf{r}_{i-1,\star}$ and $\mathbf{r}_{i,\star}$ are the converged neural activities of the presynaptic and postsynaptic neurons, respectively. The postsynaptic residual $\mathbf{r}_{i,\star} - \sigma(W_i \mathbf{r}_{i-1,\star})$ is strictly proportional to the converged multiplicative modulation $e^{\psi_{i,\star} + \gamma_{i,\star}} - 1$, thereby translating the combined learning and preservation signals into targeted synaptic changes. At equilibrium, the output layer closely approximates the loss-minimizing target $\mathbf{r}_{L,\star} \approx \mathbf{r}_L^*$, so the task objective is embedded throughout the hidden representations via the top-down propagation of the learning signal.

    \subsection{General Class of Controllers for Multiplicative LCP}\label{supp:multiplicative_control}

        \begin{theorem}[General Class of Controllers for Multiplicative LCP]\label{th:multiplicative_controllers}
            Let $(\phi_\star, \psi_\star)$ be an optimal solution to the multiplicative least-control problem:
            \begin{equation}\label{supp-eq:multiplicative_lcp}
                \min_{\phi, \psi} \frac{1}{2} \|\psi\|^2 \quad \text{subject to} \quad e^\psi \odot f(\phi, \theta) - 1 = 0, \quad \nabla_\phi \mathcal{L}(\phi) = 0,
            \end{equation}
            where \(\odot\) denotes element-wise multiplication, \(f(\phi, \theta)\) is the uncontrolled dynamics, and \(\mathcal{L}(\phi)\) is the loss function. Consider the controller dynamics:
            \begin{equation}\label{eq:controller_dynamics}
                \dot{\phi} = e^{Q(\phi, \theta) u} \odot f(\phi, \theta) - 1, \quad \dot{u} = -\nabla_\phi \mathcal{L}(\phi) - \alpha u,
            \end{equation}
            with \(\alpha > 0\). If at equilibrium (\(\dot{\phi} = 0\), \(\dot{u} = 0\)), the column space of \(Q(\phi_\star, \theta)\) satisfies:
            \begin{equation}\label{eq:column_space_condition}
                \operatorname{col}[Q(\phi_\star, \theta)] = \operatorname{row}\left[ \operatorname{diag}(f(\phi_\star, \theta)) \left( \frac{\partial f}{\partial \phi}(\phi_\star, \theta) \right)^{-1} \right],
            \end{equation}
            and \(\frac{\partial f}{\partial \phi}(\phi_\star, \theta)\) is invertible, then \(\psi_\star = Q(\phi_\star, \theta) u_\star\) is an optimal control for Eq. \ref{supp-eq:multiplicative_lcp} in the limit \(\alpha \to 0\).
        \end{theorem}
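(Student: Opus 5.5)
We mirror the argument of \citet{meulemans2022least} for additive controllers and compare the first-order optimality (KKT) conditions of Eq.~\ref{supp-eq:multiplicative_lcp} with the equilibrium conditions of the controller dynamics of Eq.~\ref{eq:controller_dynamics}. The plan has four steps.

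\textbf{Step 1: KKT conditions.} Take the Lagrangian
\[
\tfrac12\|\psi\|^2+\lambda^\top(e^\psi\odot f-1)+\mu^\top\nabla_\phi\mathcal{L}(\phi).
\]
As in the proof of Theorem~\ref{th:first-order}, stationarity in $\psi$ gives
\[
\psi_\star=-\operatorname{diag}(e^{\psi_\star}\odot f)\,\lambda_\star=-\lambda_\star,
\]
where the second equality uses the equilibrium constraint $e^{\psi_\star}\odot f=1$. Stationarity in $\phi$ gives
\[
\operatorname{diag}(e^{\psi_\star})\,\partial_\phi f^\top\lambda_\star=-\nabla^2_\phi\mathcal{L}\,\mu_\star .
\]
Since $\partial_\phi f$ is assumed invertible and $e^{\psi_\star}=1/f(\phi_\star)$ elementwise, we can solve for $\lambda_\star$:
\[
\lambda_\star=-\partial_\phi f^{-\top}\operatorname{diag}(f)\,\nabla^2_\phi\mathcal{L}\,\mu_\star .
\]
Hence
\[
\psi_\star=\bigl[\operatorname{diag}(f)(\partial_\phi f)^{-1}\bigr]^\top\nabla^2_\phi\mathcal{L}\,\mu_\star .
\]
This is the key characterization: an optimal control lies in the row space of $\operatorname{diag}(f)(\partial_\phi f)^{-1}$, which is exactly the space appearing in Eq.~\ref{eq:column_space_condition}. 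Conversely, any feasible $(\phi,\psi)$ with $\psi$ in this space, together with the constraints, satisfies the KKT system for a suitable $\mu$ and hence a suitable $\lambda$.

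\textbf{Step 2: equilibria of the controller.} At a controller equilibrium, $\dot\phi=0$ gives $e^{Q u_\star}\odot f(\phi_\star)=1$, so the feasibility constraint holds with $\psi_\star=Qu_\star$. Also $\dot u=0$ gives $\nabla_\phi\mathcal{L}(\phi_\star)=-\alpha u_\star$.

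\textbf{Step 3: the limit $\alpha\to0$.} We need $\alpha u_\star\to0$ while $Qu_\star$ stays finite. This is the delicate step. I would argue, as in the additive LCP, that along a family of equilibria $(\phi_\star^\alpha,u_\star^\alpha)$ the controller output $u^\alpha_\star$ stays bounded. The reason is that a bounded control suffices to reach a loss-minimizing feasible state, since $\partial_\phi f$ is invertible and the implicit function theorem gives a locally smooth map from $\psi$ to $\phi$. If $u^\alpha_\star$ is bounded, then $\nabla_\phi\mathcal{L}(\phi^\alpha_\star)\to0$, so a limit point satisfies both constraints of Eq.~\ref{supp-eq:multiplicative_lcp}.

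\textbf{Step 4: optimality.} By the column-space condition, $\psi_\star=Qu_\star$ lies in $\operatorname{row}[\operatorname{diag}(f)(\partial_\phi f)^{-1}]$. We can therefore write $\psi_\star=[\operatorname{diag}(f)(\partial_\phi f)^{-1}]^\top v$ for some $v$, and define
\[
\lambda_\star=-\psi_\star,
\]
with $\mu_\star$ obtained from $v$ through $\nabla^2_\phi\mathcal{L}$. Choosing $\mu_\star$ this way requires $v$ to lie in the range of $\nabla^2_\phi\mathcal{L}$, which I would assume or handle via a pseudo-inverse. With these multipliers the full KKT system of Step 1 is satisfied, so $\psi_\star$ is a KKT point and, under the same regularity assumptions as Theorem~\ref{th:first-order}, an optimal control.

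\textbf{Main obstacle.} The hardest part is Step 3 together with this final range issue: controlling the boundedness of $u_\star$ as $\alpha\to0$, and matching $\mu_\star$ through a possibly singular loss Hessian. I would first try to resolve both by assuming the loss is strictly convex in the output, as in \citet{meulemans2022least}.
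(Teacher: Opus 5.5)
Your outline follows the paper's route: KKT with $\lambda_\star=-\psi_\star$, solving the $\phi$-stationarity equation through the invertible Jacobian, matching the resulting subspace with $\operatorname{col}[Q]$, and letting $\alpha\to0$. However, the point you flag in Step~4 is a genuine gap, and it cannot be closed from the hypotheses as stated.

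At any feasible point $f(\phi_\star,\theta)=e^{-\psi_\star}>0$, and $\partial_\phi f$ is assumed invertible. Hence $\operatorname{diag}(f)(\partial_\phi f)^{-1}$ is an invertible $n\times n$ matrix and its row space is all of $\mathbb{R}^n$. The column-space hypothesis therefore only says that $Q(\phi_\star,\theta)$ has rank $n$. Your vector $v$ is then completely arbitrary, whereas the KKT system needs $v\in\operatorname{range}\nabla^2_\phi\mathcal{L}(\phi_\star)$. A pseudo-inverse does not help: $\nabla^2_\phi\mathcal{L}\,(\nabla^2_\phi\mathcal{L})^{+}v$ is only the projection of $v$ onto that range. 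Strict convexity in the output does not help either: it identifies $\operatorname{range}\nabla^2_\phi\mathcal{L}$ with the output directions but does not place $v$ there.

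In fact the statement fails. Take $n=2$, $f(\phi)=(e^{\phi_1},e^{\phi_1+\phi_2})$, $\mathcal{L}(\phi)=\tfrac12(\phi_2-1)^2$ and $Q=I$. At the controller equilibrium, $\dot u_1=-\alpha u_1$ forces $u_1=0$, hence $\phi_1=0$. Then $\phi_2=1/(1-\alpha)\to1$, so $\psi_\star\to(0,-1)$ with cost $\tfrac12$. The true optimum is $\phi_1=-\tfrac12$, $\psi=(\tfrac12,-\tfrac12)$ with cost $\tfrac14$. At the controller's point, the first component of the $\phi$-stationarity equation reads $1=0$. The same example refutes the ``conversely'' sentence at the end of your Step~1.

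The paper's proof has the same structure and ends with ``This satisfies all KKT conditions'' without constructing $\mu_\star$, so it does not supply the missing idea either. What is needed is a hypothesis that carries the output map or loss Hessian, the multiplicative analogue of the additive least-control condition. For $\mathcal{L}(\phi)=L(h(\phi))$ with $\nabla^2_yL$ invertible and $\partial_\phi h$ of full row rank (e.g.\ a linear readout, so that $u_\star=-\alpha^{-1}\nabla_\phi\mathcal{L}(\phi_\star)\in\operatorname{row}(\partial_\phi h)$), require
$Q(\phi_\star,\theta)$ to map $\operatorname{row}(\partial_\phi h)$ into $\operatorname{row}\bigl[\partial_\phi h\,(\partial_\phi f)^{-1}\operatorname{diag}(f)\bigr]=\operatorname{col}\bigl[\operatorname{diag}(f)(\partial_\phi f)^{-\top}\partial_\phi h^\top\bigr]$.
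Then $\psi_\star=\operatorname{diag}(f)(\partial_\phi f)^{-\top}\partial_\phi h^\top w$ for some $w$. A multiplier $\mu_\star$ with $\nabla^2_yL\,\partial_\phi h\,\mu_\star=w$ exists, and your Step~4 closes.

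There are three further corrections.

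First, in Step~1 the order of factors is wrong. Differentiating $\lambda^\top(e^\psi\odot f(\phi))$ in $\phi$ gives $(\partial_\phi f)^\top\operatorname{diag}(e^{\psi_\star})\lambda_\star=-\nabla^2_\phi\mathcal{L}\,\mu_\star$, hence $\psi_\star=\operatorname{diag}(f)(\partial_\phi f)^{-\top}\nabla^2_\phi\mathcal{L}\,\mu_\star$. Your swap of $\operatorname{diag}(e^{\psi_\star})$ and $(\partial_\phi f)^\top$ is what made the result look ``exactly'' like the stated space. Once $\nabla^2_\phi\mathcal{L}$ is kept, the two versions describe different subspaces. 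In the example above, your version would wrongly reject the true optimum: its first component equals $\tfrac12(e-1)\neq0$.

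Second, in Step~3, the existence of some bounded feasible control says nothing about the controller's own $u^\alpha_\star$. Boundedness must be assumed, as the paper effectively does, or proved separately, e.g.\ by an implicit-function argument on the equilibrium equations.

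Third, satisfying the KKT system yields a stationary point of the least-control problem. Calling it optimal requires an additional second-order or uniqueness condition.
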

        
        \textit{Proof.} Let’s verify that the proposed dynamics compute an optimal control \(\psi_\star\). Start with the equilibrium conditions of the controller dynamics (Eq. \ref{eq:controller_dynamics}). Set \(\dot{\phi} = 0\) and \(\dot{u} = 0\):
        
        \begin{align}
            e^{Q(\phi_\star, \theta) u_\star} \odot f(\phi_\star, \theta) - 1 = 0 &\implies e^{Q(\phi_\star, \theta) u_\star} \odot f(\phi_\star, \theta) = 1, \label{eq:equil_phi} \\
            -\nabla_\phi \mathcal{L}(\phi_\star) - \alpha u_\star = 0 &\implies \nabla_\phi \mathcal{L}(\phi_\star) = -\alpha u_\star. \label{eq:equil_u}
        \end{align}
        
        From Eq. \ref{eq:equil_u}, as \(\alpha \to 0\), \(\nabla_\phi \mathcal{L}(\phi_\star) \to 0\) only if \(u_\star\) remains finite, which we’ll confirm. From Eq. \ref{eq:equil_phi}, define \(\psi_\star = Q(\phi_\star, \theta) u_\star\), so:
        \begin{equation}
            e^{\psi_\star} \odot f(\phi_\star, \theta) = 1,    
        \end{equation}
        which satisfies the first constraint of Eq. \ref{supp-eq:multiplicative_lcp}. If \(\nabla_\phi \mathcal{L}(\phi_\star) = 0\), both constraints hold, suggesting \(\psi_\star\) could be optimal. We need to check optimality using the KKT conditions.
        
        The Lagrangian for the multiplicative LCP is:
        \begin{equation}
            \mathcal{L}(\phi, \psi, \lambda, \mu) = \frac{1}{2} \|\psi\|^2 + \lambda^\top (e^\psi \odot f(\phi, \theta) - 1) + \mu^\top \nabla_\phi \mathcal{L}(\phi).
        \end{equation}
        The KKT conditions at \((\phi_\star, \psi_\star, \lambda_\star, \mu_\star)\) are:
        \begin{align}
            \partial_\psi \mathcal{L} = \psi_\star + \lambda_\star^\top \operatorname{diag}(e^{\psi_\star} \odot f(\phi_\star, \theta)) &= 0, \label{eq:kkt_psi} \\
            \partial_\phi \mathcal{L} = \left( \frac{\partial f}{\partial \phi}(\phi_\star, \theta) \right)^\top \operatorname{diag}(e^{\psi_\star}) \lambda_\star + \frac{\partial^2 \mathcal{L}}{\partial \phi^2}(\phi_\star) \mu_\star &= 0, \label{eq:kkt_phi} \\
            e^{\psi_\star} \odot f(\phi_\star, \theta) - 1 &= 0, \label{eq:kkt_lambda} \\
            \nabla_\phi \mathcal{L}(\phi_\star) &= 0. \label{eq:kkt_mu}
        \end{align}
        
        From Eq.~\ref{eq:kkt_lambda}, \(e^{\psi_\star} \odot f(\phi_\star, \theta) = 1\), so \(\operatorname{diag}(e^{\psi_\star} \odot f(\phi_\star, \theta)) = I\) (identity matrix). Then Eq. \ref{eq:kkt_psi} simplifies:
        \begin{equation}
        \psi_\star + \lambda_\star = 0 \implies \lambda_\star = -\psi_\star.
        \end{equation}
        Substitute into Eq. \ref{eq:kkt_phi} with \(e^{\psi_\star} = 1 / f(\phi_\star, \theta)\) (component-wise, assuming \(f > 0\)):
        \begin{equation}
        \left( \frac{\partial f}{\partial \phi}(\phi_\star, \theta) \right)^\top \operatorname{diag}\left( \frac{1}{f(\phi_\star, \theta)} \right) (-\psi_\star) + \frac{\partial^2 \mathcal{L}}{\partial \phi^2}(\phi_\star) \mu_\star = 0.
        \end{equation}
        Rearrange:
        \begin{equation}
        \left( \frac{\partial f}{\partial \phi}(\phi_\star, \theta) \right)^\top \operatorname{diag}\left( \frac{1}{f(\phi_\star, \theta)} \right) \psi_\star = \frac{\partial^2 \mathcal{L}}{\partial \phi^2}(\phi_\star) \mu_\star.
        \end{equation}
        Since \(\psi_\star = Q(\phi_\star, \theta) u_\star\) and \(\nabla_\phi \mathcal{L}(\phi_\star) = -\alpha u_\star \to 0\), we need \(\psi_\star\) to lie in the right space. Solve for \(\psi_\star\):
        \begin{equation}
        \psi_\star = \operatorname{diag}(f(\phi_\star, \theta)) \left( \frac{\partial f}{\partial \phi}(\phi_\star, \theta) \right)^{-1} \frac{\partial^2 \mathcal{L}}{\partial \phi^2}(\phi_\star) \mu_\star.
        \end{equation}
        For \(\psi_\star = Q u_\star\) to hold, \(Q\)’s column space must span this vector. If:
        \begin{equation}
        \operatorname{col}[Q(\phi_\star, \theta)] = \operatorname{row}\left[ \operatorname{diag}(f(\phi_\star, \theta)) \left( \frac{\partial f}{\partial \phi}(\phi_\star, \theta) \right)^{-1} \right],
        \end{equation}
        then \(\psi_\star\) can be expressed as \(Q u_\star\) for some \(u_\star = -\frac{1}{\alpha} \nabla_\phi \mathcal{L}(\phi_\star)\), which is finite as \(\alpha \to 0\). This satisfies all KKT conditions, confirming optimality.
        
        \qed

    \subsection{Comparison of Additive and Multiplicative LCPs}\label{supp:lcp_comparison}

    The additive and multiplicative least-control principles (LCPs) differ fundamentally in how they apply control to the system dynamics to achieve equilibrium and minimize a loss function.
    
    \begin{itemize}
        \item \textbf{Additive LCP}:
        \begin{equation}
        \dot{\phi} = f(\phi, \theta) + Q(\phi, \theta) u, \quad \psi_\star = -f(\phi_\star, \theta),
        \end{equation}
        \begin{equation}
        \operatorname{col}[Q(\phi_\star, \theta)]_{\text{additive}} = \operatorname{row}\left[ \left( \frac{\partial f}{\partial \phi}(\phi_\star, \theta) \right)^{-1} \right],
        \end{equation}
        where the control \(\psi = Q u\) is added directly to the uncontrolled dynamics \(f(\phi, \theta)\).
    
        \item \textbf{Multiplicative LCP}:
        \begin{equation}
        \dot{\phi} = e^{Q(\phi, \theta) u} \odot f(\phi, \theta) - 1, \quad \psi_\star = -\log f(\phi_\star, \theta),
        \end{equation}
        \begin{equation}
        \operatorname{col}[Q(\phi_\star, \theta)]_{\text{multiplicative}} = \operatorname{row}\left[ \operatorname{diag}(f(\phi_\star, \theta)) \left( \frac{\partial f}{\partial \phi}(\phi_\star, \theta) \right)^{-1} \right],
        \end{equation}
        where the control \(\psi = Q u\) scales \(f(\phi, \theta)\) component-wise via the exponential \(e^\psi\).
    \end{itemize}
    
    The key difference lies in the control mechanism. In the additive LCP, \(\psi\) acts as a linear counterforce, directly offsetting \(f(\phi, \theta)\) to achieve \(\dot{\phi} = 0\), and \(Q\) aligns with the inverse Jacobian to map the control input \(u\) efficiently. In contrast, the multiplicative LCP modulates \(f(\phi, \theta)\) exponentially, requiring \(\psi\) to adjust each component’s magnitude to satisfy \(e^{\psi_\star} \odot f(\phi_\star, \theta) = 1\). The additional \(\operatorname{diag}(f(\phi_\star, \theta))\) term in the multiplicative column space condition accounts for this scaling: larger components of \(f(\phi_\star, \theta)\) need a proportionally stronger (often more negative) \(\psi\) to reduce their effect, which \(Q\) must facilitate. Thus, while the additive approach pushes the dynamics uniformly, the multiplicative approach tunes them selectively, making the dynamics sensitive to the size of \(f(\phi, \theta)\).

    \subsection{Convergence and equilibrium points}\label{supp:equilibrium} 
    
        \begin{theorem}[Convergence and equilibrium points]\label{th:equilibrium}
            Let $(\phi_\star,\psi_\star)$ be an optimal control for the multiplicative least-control problem of Eq.~\ref{supp-eq:multiplicative_lcp}, and let $(\lambda_\star,\mu_\star)$ be the Lagrange multipliers for which the KKT conditions are satisfied. For $\beta > 0$, the system converges to a local equilibrium point characterized by:
            \begin{align}
                \log f(\phi_\star,\theta) = -\beta F_A (\hat{\theta} - \theta_A)
            \end{align}
            This equilibrium balances the learning objective with the Fisher preservation term.
        \end{theorem}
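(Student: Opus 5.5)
We will combine the equilibrium characterization of Theorem~\ref{th:first-order} with a stationarity condition on the weight dynamics, and handle convergence separately through a contractivity argument. Throughout, $\hat{\theta}$ denotes the limiting parameters and $\theta_A$ abbreviates $\theta_A^*$.

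\textbf{Step 1: fixed-$\theta$ equilibrium.} With $\gamma$ included, the equilibrium constraint reads $e^{\psi_\star+\gamma_\star}\odot f(\phi_\star,\theta)=\mathbf{1}$ (Eq.~\ref{eq:lcp_with_gamma}). Taking logarithms componentwise, assuming $f>0$, gives
\begin{equation*}
\psi_\star = -\log f(\phi_\star,\theta) - \gamma_\star .
\end{equation*}
This is the analogue of condition~\ref{eq:condition_2}. We then repeat the KKT argument of Theorem~\ref{th:multiplicative_controllers} with $\gamma$ treated as fixed during the inner dynamics, since it depends only on $\theta$ and presynaptic rates. This yields $\lambda_\star=-\psi_\star$ and shows that the PI controller of Eq.~\ref{eq:controller}, in the limit $\alpha\to0$, reaches this optimum.

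For existence and local stability of the fixed-$\theta$ equilibrium, we linearize the joint $(\phi,u)$ dynamics. We then argue that, if $\partial_\phi f$ is invertible and the feedforward map is contractive (layered structure, Lipschitz $\sigma$), the Jacobian is Hurwitz for small enough $\tau$ and $\alpha$, following \citet{meulemans2022least}.

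\textbf{Step 2: outer stationarity in $\theta$.} Next we run the slow weight dynamics $\dot\theta=-\nabla_\theta\mathcal{H}$. By Theorem~\ref{th:first-order} and the expression for $\psi_\star$ above, the gradient becomes $\partial_\theta\log f^\top(\log f+\gamma_\star)$, plus a term from $\partial_\theta\gamma$. Writing $\gamma_{i,\star}=\beta\,\mathbf{r}_{i-1,\star}^\top F^D_{A,i}(W_i-W^*_{A,i})$, the stationary points satisfy $\log f(\phi_\star,\hat\theta)+\gamma_\star(\hat\theta)\in\ker(\partial_\theta\log f^\top)$.

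Under the generic condition that $\partial_\theta\log f$ has full row rank, this forces $\log f=-\gamma_\star$. We then lift the neuron-wise identity to parameter space. Presynaptic activity projects through $\partial_\theta\log f$, and the Gauss--Newton structure $\partial_\theta\log f^\top\partial_\theta\log f$ recombines the diagonal preservation into the full Fisher $F_A$ at the linearization point $\theta_A$. This yields $\log f(\phi_\star,\hat\theta)=-\beta F_A(\hat\theta-\theta_A)$, read as an identity after projection onto parameter space.

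\textbf{Step 3: convergence.} We use $\mathcal{H}$ as a Lyapunov function for the two-timescale system, with fast neural and controller dynamics and slow weights. Under the invertibility assumption of Theorem~\ref{th:first-order}, $\mathcal{H}$ is smooth and bounded below, so gradient flow converges to a local stationary point. Singular perturbation (Tikhonov) theory then transfers this conclusion to the coupled system.

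\textbf{Main obstacle.} The hardest step is the passage from the diagonal $F^D_A$ inside $\gamma$ to the full $F_A$ in the equilibrium identity. This requires a Gauss--Newton identification $F_A\approx\mathbb{E}[\partial_\theta\log f^\top\partial_\theta\log f]$ evaluated near $\theta_A$, together with a dimensional matching of a neuron-space quantity with a parameter-space one. The first thing to try is to make this precise as a statement about projections onto the row space of $\partial_\theta\log f$, valid to first order around $\theta_A$.
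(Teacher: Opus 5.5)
The genuine gap is the step you flag yourself: lifting the neuron-wise identity $\log f(\phi_\star,\hat\theta)=-\gamma_\star$ to $\log f(\phi_\star,\hat\theta)=-\beta F_A(\hat\theta-\theta_A)$ with the \emph{full} Fisher. This cannot be done as proposed. The left side lives in neuron space and $F_A(\hat\theta-\theta_A)$ lives in parameter space, so there is no literal vector identity to prove. Moreover, $\gamma_\star$ contains only $F_A^D$, projected onto presynaptic rates. No manipulation of a fixed-point relation in which only $\gamma_\star$ appears can create off-diagonal Fisher entries that $\gamma_\star$ never contained. Your bridge $F_A\approx\partial_\theta\log f^\top\partial_\theta\log f$ is also unsupported. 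That matrix is built from the log-activity Jacobian of the current Task~B sample at $\hat\theta$, not from Task~A's predictive distribution at $\theta_A^*$. And premultiplying the neuron identity by $\partial_\theta\log f^\top$ only gives a parameter-space statement about $\partial_\theta\log f^\top\log f$, not about $\log f$. The paper never attempts this step. Its proof solves $\psi_\star=-\gamma-\log f(\phi_\star,\theta)$ and stops at the branch $f(\phi_\star,\theta)=e^{-\gamma}$. The right-hand side of the theorem is, in effect, the preservation signal $\gamma$ written in the parameter-space notation of the Laplace approximation in Supplementary~\ref{supp:preservation}, where $\gamma$ descends from $-\nabla_\theta\log p(\theta\mid\mathcal{D}_A)\approx F_A(\theta-\theta_A^*)\approx F_A^D(\theta-\theta_A^*)$. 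You should end at $\log f=-\gamma_\star$ and read the statement that way.

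Up to that point your argument reaches what the paper actually proves, and by a cleaner route. The paper takes $V=\tfrac12\|\psi_\star\|^2$ but differentiates it along the $\gamma$-free flow $\dot\theta=-\log f^\top\partial_\theta\log f$ of Theorem~\ref{th:first-order}. This gives $\dot V=-(\gamma\log f+(\log f)^2)\|\partial_\theta\log f\|^2$, which is not sign-definite, and the paper then reads off its zero set: $f=1$, $f=e^{-\gamma}$, or $\partial_\theta\log f=0$. You instead differentiate the $\gamma$-dependent $\psi_\star$, so your flow is exact gradient descent on $\mathcal{H}$ with $\dot{\mathcal{H}}=-\|\nabla_\theta\mathcal{H}\|^2\le 0$, and your rank assumption rules out the paper's other branches. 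Two repairs are needed. First, the kernel condition must use $\partial_\theta(\log f+\gamma_\star)^\top$, because the $\partial_\theta\gamma$ term you introduced cannot simply be dropped; full row rank of that combined Jacobian then gives $\log f=-\gamma_\star$. Second, \emph{smooth and bounded below} does not make a gradient flow converge to a point. You also need bounded trajectories (e.g.\ compact sublevel sets) and a \L{}ojasiewicz-type inequality. The Tikhonov step further needs uniform exponential stability of the fast subsystem, which your Hurwitz claim asserts rather than proves.
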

    
        \textit{Proof.} With the constraint $e^{\psi+\gamma}f(\phi,\theta) = 1$ we solve $\psi = -\gamma -\log f(\phi,\theta)$, whereafter the Lyapunov function becomes:
    
        \begin{equation}
            V(\theta) = \frac{1}{2}||\psi_\star(\theta)||^2 = \frac{1}{2}\left( - \gamma - \log f(\phi_\star,\theta) \right)^2
        \end{equation}
    
        together with the result from Theorem \ref{th:first-order}, namely $\dot{\theta} = -d_\theta H(\theta) = -\log f(\phi_\star,\theta)^\top \partial_\theta \log f(\phi_\star,\theta)$, we find $\dot{V}$:
    
        \begin{align}
            \dot{V}(\theta) &=\frac{d}{dt} \left[ \frac{1}{2}\left( - \gamma - \log f(\phi_\star,\theta) \right)^2 \right]\\
            &= (-\gamma - \log f(\phi_\star,\theta))(-\partial_\theta \log f(\phi_\star,\theta))^\top \dot{\theta} \\
            &= -(\gamma + \log f(\phi_\star,\theta))(-\log f(\phi_\star,\theta))^\top (-\log f(\phi_\star,\theta)^\top \partial_\theta \log f(\phi_\star,\theta)) \\
            &= -(\gamma \log f(\phi_\star,\theta) + (\log f(\phi_\star,\theta))^2) ||\partial_\theta \log f(\phi_\star,\theta)||^2
        \end{align}
    
        now we can describe the conditions for $\dot{V} = 0$, either (1) $(\gamma \log f(\phi_\star,\theta) + (\log f(\phi_\star,\theta))^2) = 0$, or (2) $||\partial_\theta \log f(\phi_\star,\theta)||^2 = 0$. For (1):
    
        \begin{equation}
            (\gamma \log f(\phi_\star,\theta) + (\log f(\phi_\star,\theta))^2) = 0 \implies f(\phi_\star,\theta) = 1\ \lor\ f(\phi_\star,\theta) = e^{- \gamma}
        \end{equation}
    
        where the conditions imply that the learning has converged or that the learning is balanced with the Fisher preservation term, respectively. For (2) simply $\partial_\theta \log f(\phi_\star,\theta) = 0$, meaning no further updates to $\theta$. \qed

        \vspace{1cm}
        \begin{corollary}[Special case: $\beta = 0$]\label{cor:beta_zero}
            When the Fisher preservation term is absent ($\beta = 0$), the system reduces to pure learning without any continual learning constraints. In this case, the equilibrium point has a simple form that only reflects the learning objective:
            \begin{align}
                \log f(\phi_\star,\theta) = 0
            \end{align}
        \end{corollary}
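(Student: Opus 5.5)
The plan is to specialize Theorem~\ref{th:equilibrium} and its Lyapunov argument to $\beta = 0$ and check that everything collapses to the single condition $\log f(\phi_\star,\theta) = 0$. First I note that the preservation signal is linear in $\beta$, since $\gamma_i = \beta\, \mathbf{r}_{i-1,\star}^\top F^D_{A,i}(W_i - W^*_{A,i})$. So $\beta = 0$ gives $\gamma \equiv 0$ identically for every input and every parameter value. The controlled dynamics then reduce to the plain multiplicative LCP of Eq.~\ref{supp-eq:multiplicative_lcp}, with the equilibrium constraint $e^{\psi}\odot f(\phi,\theta) = \mathbf{1}$. From this constraint, condition~\ref{eq:condition_2} gives $\psi_\star = -\log f(\phi_\star,\theta)$.

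There are two routes, and I would present both briefly. The direct route substitutes $\beta = 0$ into the characterization in Theorem~\ref{th:equilibrium}. The right-hand side $-\beta F_A(\hat\theta - \theta_A)$ vanishes because $F_A(\hat\theta-\theta_A)$ is finite: $F_A$ is a fixed matrix and the parameters stay bounded along the flow. This yields $\log f(\phi_\star,\theta) = 0$.

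The more careful route reruns the Lyapunov computation with $\gamma = 0$. Here $V(\theta) = \tfrac12\|\log f(\phi_\star,\theta)\|^2$, and using Theorem~\ref{th:first-order},
\begin{equation*}
\dot V = -\big(\log f(\phi_\star,\theta)\big)^2\,\|\partial_\theta \log f(\phi_\star,\theta)\|^2 \le 0 .
\end{equation*}
Case~(1) of the theorem, $\gamma\log f + (\log f)^2 = 0$, becomes $(\log f)^2 = 0$. Its two branches $f = 1$ and $f = e^{-\gamma}$ coincide when $\gamma = 0$, so the ``balanced'' equilibrium disappears and only $\log f = 0$ remains.

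The main obstacle is case~(2), where $\partial_\theta \log f(\phi_\star,\theta) = 0$ while $\log f \neq 0$. This is a degenerate critical point of $\mathcal{H}$ and is not excluded by the Lyapunov argument alone. I would handle it with the same standing assumption used in Theorem~\ref{th:first-order}: the Hessian of the LCP Lagrangian is invertible, which makes $(\phi_\star,\psi_\star)$ a differentiable, non-degenerate function of $\theta$. Under that assumption I would argue that $\partial_\theta \log f$ is generically nonzero, for instance because $\partial_{W_i}\log\sigma(W_i\mathbf{r}_{i-1})$ contains the nonzero presynaptic factor $\mathbf{r}_{i-1,\star}^\top$. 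Any such degenerate point would then be a measure-zero, non-attracting set, so LaSalle's invariance principle gives convergence to the set $\{\log f(\phi_\star,\theta) = 0\}$. Equivalently, this is the set where $\psi_\star = 0$ and $f(\phi_\star,\theta) = \mathbf{1}$, i.e.\ pure learning with no residual control.
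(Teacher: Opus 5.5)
Your two routes are exactly the paper's proof: substitute $\beta=0$ into the characterization of Theorem~\ref{th:equilibrium}, and rerun the Lyapunov case analysis with $\gamma=0$ so that the branches $f=1$ and $f=e^{-\gamma}$ merge into $\log f(\phi_\star,\theta)=0$. The paper ignores case~(2) entirely, and your extra LaSalle argument for it does not work as written. Invertibility of the Lagrangian Hessian does not force $\partial_\theta\log f\neq 0$, and being measure zero does not make a set non-attracting. Moreover, in vector form $\dot V=-\|(\partial_\theta\log f)^\top\log f\|^2$, which vanishes at every critical point of $\mathcal{H}$, not just in your two cases. Your presynaptic-factor remark does close the gap once made precise: each neuron's row of $\partial_\theta\log f$ is $(\sigma'/\sigma)\,\mathbf{r}_{i-1,\star}^\top$, supported only on that neuron's incoming weights. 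Hence, for a single sample with $\sigma'>0$ and nonzero presynaptic rates, $(\partial_\theta\log f)^\top$ is injective, and $\nabla_\theta\mathcal{H}=0$ already forces $\log f=0$.
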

        
        \textit{Proof.} When $\beta=0$, we can find the equilibrium point in two equivalent ways. First, from the Lyapunov analysis:
        \begin{align}
            (\gamma \log f(\phi_\star,\theta) + (\log f(\phi_\star,\theta))^2) &= 0 \\
            \log f(\phi_\star,\theta) &= 0
        \end{align}
        Alternatively, we can derive the same result from the general equilibrium condition. When $\beta=0$, the Fisher preservation term vanishes:
        \begin{align}
            \log f(\phi_\star,\theta) &= -\beta F_A (\hat{\theta} - \theta_A) \\
            &= 0
        \end{align}
        \qed

        \vspace{1cm}
        \begin{corollary}[Special case: $\beta \neq 0$]\label{cor:beta_nonzero}
            When the Fisher preservation term is active ($\beta \neq 0$), the learned state achieved in Corollary \ref{cor:beta_zero} becomes impossible to reach.
        \end{corollary}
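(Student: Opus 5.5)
The plan is to argue by contradiction using the equilibrium characterization of Theorem~\ref{th:equilibrium}, whose Lyapunov analysis lists the possible stationary states of the coupled dynamics. Suppose that, with $\beta \neq 0$, the system reaches the learned state of Corollary~\ref{cor:beta_zero}, namely $\log f(\phi_\star,\theta) = 0$, i.e.\ $f(\phi_\star,\theta)=\mathbf{1}$. The equilibrium constraint $e^{\psi_\star+\gamma_\star}\odot f(\phi_\star,\theta)=\mathbf{1}$ then forces $\psi_\star = -\gamma_\star$. The learning signal therefore spends exactly the budget needed to cancel preservation, and the least-control cost becomes $\|\psi_\star\|^2=\|\gamma_\star\|^2$.

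The next step is to show that $\gamma_\star \neq 0$ generically once $\beta\neq 0$ and Task~B learning has moved the parameters. From the definition $\gamma_i = \beta\,\mathbf{r}_{i-1,\star}^\top F^D_{A,i}(W_i - W^*_{A,i})$ in Supplementary~\ref{supp:preservation}, $\gamma$ vanishes only in two situations:
\begin{itemize}
\item[(a)] $\theta=\theta_A^*$, or more generally the displacement $\theta-\theta_A^*$ lies in the kernel of $F_A^D$ restricted to the active synapses;
\item[(b)] the presynaptic activity $\mathbf{r}_{i-1,\star}$ is orthogonal to $F^D_{A,i}(W_i-W^*_{A,i})$.
\end{itemize}
Case (a) is excluded because reaching a Task~B loss minimum (the constraint $\nabla_\phi\mathcal{L}=0$) from $\theta_A^*$ requires moving along Fisher-sensitive directions whenever Tasks A and B share active synapses. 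Case (b) is non-generic across samples $x_B$. Hence $\|\psi_\star\|^2=\|\gamma_\star\|^2>0$.

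I then compare this with the other branch of Theorem~\ref{th:equilibrium}, $f(\phi_\star,\theta)=e^{-\gamma}$, which gives $\psi_\star = 0$ and hence zero control cost. Since the parameter dynamics descend $\mathcal{H}(\theta)=\|\psi_\star\|^2$ (Theorem~\ref{th:first-order}), the learned state $f=\mathbf{1}$ is not a minimizer of $\mathcal{H}$. Evaluating $\dot V$ in a neighbourhood of $f=\mathbf{1}$ should show that it is strictly negative along directions moving toward $f=e^{-\gamma}$. This makes $f=\mathbf{1}$ unstable, or at best non-attracting, so it cannot be reached as the limit of the dynamics. Combined with the identity $\log f(\phi_\star,\theta)=-\beta F_A(\hat\theta-\theta_A)\neq 0$ for $\beta\neq 0$ and $\hat\theta\neq\theta_A$, this yields the corollary.

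The main obstacle is the middle step: rigorously excluding $\gamma_\star = 0$ at a Task~B loss-minimizing equilibrium. This is a genericity or non-degeneracy statement, not an identity. I would first try to establish it by assuming that Task~A and Task~B share at least one active synapse with nonzero diagonal Fisher. If that proves too weak, I would state the corollary under the explicit assumption $F_A(\hat\theta-\theta_A)\neq 0$.
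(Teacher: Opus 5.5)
The cost-comparison and instability step at the core of your argument fails. You drive the parameters with the flow of Theorem~\ref{th:first-order}, which the paper also uses in Theorem~\ref{th:equilibrium}: $\dot\theta=-(\partial_\theta\log f)^\top\log f$. This is the gradient of $\tfrac12\|\log f(\phi_\star,\theta)\|^2$, not of $\tfrac12\|\gamma+\log f\|^2$, so along it
\[
\frac{d}{dt}\,\tfrac12\bigl\|\log f(\phi_\star,\theta)\bigr\|^2=-\bigl\|(\partial_\theta\log f)^\top\log f\bigr\|^2\le 0 .
\]
The learned state $\log f=0$ is therefore a fixed point that this flow pushes toward, not away from. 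The paper's own Lyapunov proof lists $f=\mathbf{1}$ as a $\dot V=0$ branch (``learning has converged''). Its formula $\dot V=-(\gamma\log f+(\log f)^2)\|\partial_\theta\log f\|^2$ gives $\dot V>0$, not $<0$, for $\log f$ strictly between $-\gamma$ and $0$, which is exactly the path from $f=\mathbf{1}$ toward $f=e^{-\gamma}$. So the computation you say ``should show'' instability shows the opposite.

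Switching to the $\gamma$-inclusive gradient $(\partial_\theta\gamma+\partial_\theta\log f)^\top(\gamma+\log f)$ still does not close the argument. At the learned state, $\gamma_\star\neq 0$ gives $\mathcal{H}>0$, but that neither makes the gradient nonzero there nor prevents convergence. Having higher cost than the zero-cost branch $f=e^{-\gamma}$ only rules out global optimality. Gradient flows do converge to non-global local minima, and to saddles from non-generic initializations, so ``unstable or non-attracting'' would not yield ``impossible to reach''.

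What actually delivers the corollary is your final sentence: substitute $\log f=0$ into the characterization $\log f(\phi_\star,\theta)=-\beta F_A(\hat\theta-\theta_A)$ of Theorem~\ref{th:equilibrium}. That is the paper's entire proof. It reduces to $\beta F_A(\hat\theta-\theta_A)=0$ and dismisses the three cases $\beta=0$, $\hat\theta=\theta_A$, $F_A=0$, the last two only informally. Make this the proof and drop the detour; your genericity argument for $\gamma_\star\neq 0$ then becomes unnecessary.

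Your closing caveat is the one real improvement over the paper. The paper's trichotomy is not exhaustive, because $F_A(\hat\theta-\theta_A)=0$ also holds whenever $0\neq\hat\theta-\theta_A\in\ker F_A$ with $F_A\neq 0$. Your claim ``$\neq 0$ for $\beta\neq 0$ and $\hat\theta\neq\theta_A$'' has the same hole. Stating the hypothesis $F_A(\hat\theta-\theta_A)\neq 0$ explicitly closes it and gives a cleaner statement than the paper's.
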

        
        \textit{Proof.} We proceed by contradiction. Assume that perfect learning ($\log f(\phi_\star,\theta) = 0$) can be achieved when $\beta \neq 0$. This leads to three mutually exclusive possibilities, each of which yields a contradiction:
        
        (1) $\beta = 0$: This directly contradicts our assumption that $\beta \neq 0$ ($\bot$)
        
        (2) $\hat{\theta}-\theta_A = 0$: This would mean the parameters haven't changed from their values on Task~A, which makes the equilibrium point non-unique since we could achieve zero error without parameter movement ($\bot$)
        
        (3) $F_A=0$: This would imply that the Fisher Information Matrix is zero, meaning the parameters have no effect on the Task~A performance, contradicting our assumption that the parameters are meaningful for learning ($\bot$)
        \qed

    \newpage

    \section{Steady-state of the network}\label{supp:steady-state}

        In this section, we aim to explore the steady-state of the EFC framework. Using the steady-state, we can proof the continual-natural gradient property of the learning dynamics in the subsequent section, thereafter allowing us to make theoretical claims regarding the forgetting comparison.

        \subsection{Steady-state approximation of multiplicative-LCP with preservation}
        
        The crux of our learning framework is that we only update the weights at equilibrium, $(\psi_\star,\phi_\star)$. In order to obtain this equilibrium point we can run the dynamics until convergence where no meaningful changes occur, or alternatively we can approximate the equilibrium by approximating the steady-state solution of the network dynamics directly. Theorem~\ref{th:steady-state} describes the steady-state first-order Taylor approximation of the network dynamics defined by \textbf{Eq.~\ref{eq:controller}}. Intuitively, the individual neurons receive both a learning and a preservation signal: $Q \mathbf{u}_\star + \gamma$. The control signal $\mathbf{u}_\star$ is responsible for driving the neural activities to solve the objective, yet is hindered by the cumulative effect of the preservation signals $\gamma_{\text{eff}}$, and therefore integrates the balance between the learning and preservation: $\delta_L^- - \gamma_{\text{eff}}$. The direction where $\mathbf{u}_\star$ pushes toward is determined by the cumulative effect of the individual neurons on the output neurons, $J_{\text{eff}}$. Namely, this effect is dynamically inverted -- constrained by the column space of $Q$ \cite{meulemans2021,podlaski2020} -- to obtain the appropriate effect of the output neurons on the individual neurons. Practically, this results in steady-state neural activities $\mathbf{r}_\star$ that are multiplied around the feedforward activity $\mathbf{r}^-$, proportional to the preservation signal and its interaction with learning: $\text{diag}(\mathbf{r}^-) (Q \mathbf{u}_\star + \gamma)$.

        The steady-state proof follows a similar structure as in \citep{meulemans2021}. We derive the steady-state control for a neural network governed by multiplicative-LCP dynamics with a preservation term, directly from the dynamical equations. 
        
        \begin{itemize}
            \item Layer dynamics for $i \in \{1, \dots, L\}$:
            \begin{align}
                \tau_r \dot{\mathbf{r}}_i(t) = -\mathbf{r}_i(t) + e^{Q_i \mathbf{u}(t) + \gamma_i} \cdot \phi\left( W_i \mathbf{r}_{i-1}(t) \right)
            \end{align}\label{eq:ss_r}
            where $\mathbf{r}_i(t)$ is the state of layer $i$, $\mathbf{r}_0(t)$ is the fixed input, $Q_i$ is the control matrix for layer $i$, $\mathbf{u}(t)$ is the control input, $\gamma_i$ is a preservation term, $\phi$ is the activation function, and $W_i$ is the weight matrix from layer $i-1$ to $i$.
            \item Control dynamics:
            \begin{align}
                \dot{\mathbf{u}} = -(\mathbf{r}_L - \mathbf{r}_L^*) - \alpha \mathbf{u}
            \end{align}\label{eq:ss_u}
            where $\mathbf{r}_L^*$ is the target output for the final layer, and $\alpha > 0$ is a damping coefficient.
        \end{itemize}
        
        Our objective is to determine the steady-state control $\mathbf{u}_\star$ by linearizing around the feedforward state, incorporating the initial output error and preservation terms across all layers.
        
        \begin{theorem}[Steady-state approximation for multiplicative LCP]\label{th:steady-state}
            Under a first-order linear approximation around the feedforward state $\mathbf{r}^-$, the steady-state solutions are:
            \begin{align}
                \mathbf{u}_\star &= (J_{\text{eff}} + \alpha I)^{-1} (\delta_L^- - \gamma_{\text{eff}}),\\  
                \mathbf{r}_\star &= \mathbf{r}^- + (I - J)^{-1} \text{diag}(\mathbf{r}^-) (Q \mathbf{u}_\star + \gamma), 
            \end{align}
            where $\mathbf{r}^- = [\mathbf{r}_1^{-\top}, \mathbf{r}_2^{-\top}, \dots, \mathbf{r}_L^{-\top}]^\top$ satisfies $\mathbf{r}_i^- = \phi(W_i \mathbf{r}_{i-1}^-)$ with $\mathbf{r}_0^- = \mathbf{r}_0$, $\delta_L^- = \mathbf{r}_L^* - \mathbf{r}_L^-$ is the initial output error, $J_{\text{eff}} = \sum_{i=1}^L \left( \prod_{k=i+1}^L J_{k,k-1} \right) (Q_i \odot \mathbf{r}_i^-)$ is the effective Jacobian from control to output, $\gamma_{\text{eff}} = \sum_{i=1}^L \left( \prod_{k=i+1}^L J_{k,k-1} \right) (\gamma_i \odot \mathbf{r}_i^-)$ is the effective preservation term, $J_{i,i-1} = \phi'(W_i \mathbf{r}_{i-1}^-) \odot W_i$ is the local Jacobian, $J$ is the block lower-triangular matrix with $J_{i,i-1}$ on the subdiagonal, $Q = [Q_1^\top, Q_2^\top, \dots, Q_L^\top]^\top$ stacks the feedback matrices, $\gamma = [\gamma_1^\top, \gamma_2^\top, \dots, \gamma_L^\top]^\top$ stacks the preservation terms, and $\text{diag}(\mathbf{r}^-)$ is the block-diagonal matrix of feedforward states.
        \end{theorem}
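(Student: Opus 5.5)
We linearize the coupled dynamics of Eq.~\ref{eq:ss_r} and Eq.~\ref{eq:ss_u} around the feedforward state $\mathbf{r}^-$, which is the equilibrium when $\mathbf{u}=0$ and $\gamma=0$. Both the control and the preservation are treated as small perturbations. Write $\mathbf{r}_i = \mathbf{r}_i^- + \Delta\mathbf{r}_i$ and expand to first order in $(\Delta\mathbf{r},\mathbf{u},\gamma)$. The modulating factor gives $e^{Q_i\mathbf{u}+\gamma_i}\approx 1 + Q_i\mathbf{u}+\gamma_i$. The feedforward term gives $\phi(W_i\mathbf{r}_{i-1})\approx \mathbf{r}_i^- + J_{i,i-1}\Delta\mathbf{r}_{i-1}$ with $J_{i,i-1}=\phi'(W_i\mathbf{r}_{i-1}^-)\odot W_i$. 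Dropping second-order cross terms such as $(Q_i\mathbf{u})\odot J_{i,i-1}\Delta\mathbf{r}_{i-1}$, setting $\dot{\mathbf{r}}_i=0$ yields the layerwise recursion
\begin{equation*}
\Delta\mathbf{r}_i = J_{i,i-1}\Delta\mathbf{r}_{i-1} + \operatorname{diag}(\mathbf{r}_i^-)(Q_i\mathbf{u}_\star+\gamma_i),\qquad \Delta\mathbf{r}_0=0.
\end{equation*}

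Stacking the layers, this reads $\Delta\mathbf{r} = J\Delta\mathbf{r} + \operatorname{diag}(\mathbf{r}^-)(Q\mathbf{u}_\star+\gamma)$, with $J$ strictly block lower-triangular. Hence $I-J$ is unipotent and always invertible, which gives the stated expression for $\mathbf{r}_\star$. To get the output layer explicitly, unroll the recursion. Since $(I-J)^{-1}=\sum_{m\ge0}J^m$ terminates after $L$ terms, the $L$-th block row is
\begin{equation*}
\Delta\mathbf{r}_L=\sum_{i=1}^L\Big(\prod_{k=i+1}^L J_{k,k-1}\Big)\operatorname{diag}(\mathbf{r}_i^-)(Q_i\mathbf{u}_\star+\gamma_i) = J_{\text{eff}}\mathbf{u}_\star + \gamma_{\text{eff}}.
\end{equation*}
Here the ordered product is taken with later layers on the left, and $Q_i\odot\mathbf{r}_i^-$ is read as row scaling $\operatorname{diag}(\mathbf{r}_i^-)Q_i$. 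The same reading applies to $\gamma_i\odot\mathbf{r}_i^-$.

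Next we impose the controller equilibrium $\dot{\mathbf{u}}=0$, i.e.\ $\mathbf{r}_{L,\star}-\mathbf{r}_L^* + \alpha\mathbf{u}_\star = 0$. Substituting $\mathbf{r}_{L,\star}=\mathbf{r}_L^-+J_{\text{eff}}\mathbf{u}_\star+\gamma_{\text{eff}}$ gives $(J_{\text{eff}}+\alpha I)\mathbf{u}_\star = \delta_L^- - \gamma_{\text{eff}}$. This yields the formula for $\mathbf{u}_\star$, assuming $J_{\text{eff}}+\alpha I$ is invertible. That holds for all but finitely many $\alpha$, and for every $\alpha>0$ if $J_{\text{eff}}$ has no eigenvalues on the negative real axis. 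This is the case, for instance, when $Q$ is aligned with $J^\top$ so that $J_{\text{eff}}$ is positive semidefinite, which is the optimal-control choice in Section~\ref{methods:dfc}.

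The main obstacle is justifying the linearization itself rather than doing the algebra. The first-order regime requires $\|Q\mathbf{u}_\star+\gamma\|$ to be small, so that the exponential and $\phi$ are well approximated and the bilinear terms are negligible. I would state this explicitly as a weak-feedback assumption: $\delta_L^-=O(\lambda)$ with $\lambda$ small, and $\beta$ small enough that $\gamma=O(\lambda)$. With this assumption the neglected terms are $O(\lambda^2)$. I would also note that $\gamma$ itself depends on $\mathbf{r}_{i-1,\star}$. Replacing it by its value at $\mathbf{r}^-$ is again a second-order change, so $\gamma$ may be treated as fixed at the feedforward activities. Finally, I would remark that the equilibrium obtained this way should be the one the dynamics actually reach, for example under a contractivity assumption. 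The theorem only characterizes the steady state, however, so stability need not be proven here.
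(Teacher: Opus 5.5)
Your proposal is correct and follows the paper's proof step for step. You linearize $e^{Q_i\mathbf{u}_\star+\gamma_i}$ and $\phi$, drop the bilinear cross term, stack into $(I-J)\Delta\mathbf{r}=\operatorname{diag}(\mathbf{r}^-)(Q\mathbf{u}_\star+\gamma)$, read off $\Delta\mathbf{r}_L=J_{\text{eff}}\mathbf{u}_\star+\gamma_{\text{eff}}$, and equate this with the controller equilibrium $\Delta\mathbf{r}_L=\delta_L^- - \alpha\mathbf{u}_\star$. Your extras (nilpotency of $J$, the weak-feedback scaling, freezing $\gamma$ at feedforward activities, invertibility of $J_{\text{eff}}+\alpha I$) usefully make explicit what the paper leaves implicit; one caveat is that with $Q_i=J_i^\top$, $J_i=\prod_{k=i+1}^L J_{k,k-1}$, the matrix $J_{\text{eff}}=\sum_i J_i\operatorname{diag}(\mathbf{r}_i^-)J_i^\top$ is positive semidefinite only if $\mathbf{r}^-\ge 0$.
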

        
        \begin{proof}
        
        At steady state, meaning $\dot{\mathbf{r}}_i = 0$ and $\dot{\mathbf{u}} = 0$, yielding for each layer $i$:
        \begin{align}
            \mathbf{r}_{i,\star} &= e^{Q_i \mathbf{u}_\star + \gamma_i} \cdot \phi\left( W_i \mathbf{r}_{i-1,\star} \right)\\
            \mathbf{r}_{L,\star} &= \mathbf{r}_L^* - \alpha \mathbf{u}_\star
        \end{align}
        
        We now want to perturb the feedforward state by a small $\mathbf{u}_\star$ and $\gamma_i$. The feedforward state, when $\mathbf{u} = 0$ and $\gamma_i = 0$, is defined by:
        \begin{equation}
            \mathbf{r}_i^- = \phi\left( W_i \mathbf{r}_{i-1}^- \right), \quad \mathbf{r}_0^- = \mathbf{r}_0, \quad \delta_L^- = \mathbf{r}_L^* - \mathbf{r}_L^-
        \end{equation}
        with $\delta_L$ being the initial output error. Now perturb and linearize around $\mathbf{r}_i^-$:
        
        \begin{align}
            \mathbf{r}_i^- + \Delta \mathbf{r}_i &= e^{Q_i \mathbf{u}_\star + \gamma_i} \cdot \phi\left( W_i (\mathbf{r}_{i-1}^- + \Delta \mathbf{r}_{i-1}) \right) \\
        \end{align}
        where we can approximate both parts,
        \begin{align}
            e^{Q_i \mathbf{u}_\star + \gamma_i} &\approx 1 + Q_i \mathbf{u}_\star + \gamma_i \\
            \phi\left( W_i (\mathbf{r}_{i-1}^- + \Delta \mathbf{r}_{i-1}) \right) &= \phi\left( W_i \mathbf{r}_{i-1}^- + W_i\Delta \mathbf{r}_{i-1}) \right)\\
            &\approx \mathbf{r}_{i}^- + \phi'\left( W_i \mathbf{r}_{i-1}^- \right) \odot W_i \Delta \mathbf{r}_{i-1}  \\
        \end{align}
        now substituting,
        \begin{align}
            \mathbf{r}_i^- + \Delta \mathbf{r}_i &\approx (1 + Q_i \mathbf{u}_\star + \gamma_i) \left( \mathbf{r}_i^- + \phi'\left( W_i \mathbf{r}_{i-1}^- \right) \odot W_i \Delta \mathbf{r}_{i-1} \right) \\
            \Delta \mathbf{r}_i &\approx (Q_i \mathbf{u}_\star + \gamma_i)\mathbf{r}_i^- + (1 + Q_i \mathbf{u}_\star + \gamma_i)\left(\phi'\left( W_i \mathbf{r}_{i-1}^- \right) \odot W_i \Delta \mathbf{r}_{i-1} \right) \\    
        \end{align}
        now expanding and noticing that the cross-term $(Q_i \mathbf{u}_\star + \gamma_i)\left(\phi'\left( W_i \mathbf{r}_{i-1}^- \right) \odot W_i \Delta \mathbf{r}_{i-1}\right)$ is negligible by our assumptions we get,
        \begin{align}
            \Delta \mathbf{r}_i \approx (Q_i \mathbf{u}_\star + \gamma_i) \odot \mathbf{r}_i^- + \phi'\left( W_i \mathbf{r}_{i-1}^- \right) \odot W_i \Delta \mathbf{r}_{i-1}
        \end{align}
        Define the local Jacobian as $J_{i,i-1} = \phi'\left( W_i \mathbf{r}_{i-1}^- \right) \odot W_i$,
        \begin{align}
            \Delta \mathbf{r}_i &\approx J_{i,i-1} \Delta \mathbf{r}_{i-1} + (Q_i \mathbf{u}_\star + \gamma_i) \odot \mathbf{r}_i^- \\
            \Delta \mathbf{r}_i - J_{i,i-1} \Delta \mathbf{r}_{i-1} &\approx (Q_i \mathbf{u}_\star + \gamma_i) \odot \mathbf{r}_i^-
        \end{align}
        
        We can now make a general form by taking $\Delta \mathbf{r}_0 = 0$ and stacking the perturbations $\Delta \mathbf{r} = [\Delta \mathbf{r}_1^\top, \Delta \mathbf{r}_2^\top, \dots, \Delta \mathbf{r}_L^\top]^\top$ across all layers:
        \begin{align}
            (I - J) \Delta \mathbf{r} &= \text{diag}(\mathbf{r}^-) (Q \mathbf{u}_\star + \gamma) \\
            \Delta \mathbf{r} &= (I - J)^{-1} \text{diag}(\mathbf{r}^-) (Q \mathbf{u}_\star + \gamma)
        \end{align}
        therefore,
        \begin{align}
            \mathbf{r}_\star = \mathbf{r}^- + (I - J)^{-1} \text{diag}(\mathbf{r}^-) (Q \mathbf{u}_\star + \gamma)
        \end{align}
        
        Now we need to find the steady-state approximation for the control signal. We will do so by equating two definitions of the controller's effect on the output. First,
        \begin{align}
            \Delta \mathbf{r}_L = \mathbf{r}_{L,\star} - \mathbf{r}_L^- = \mathbf{r}_L^* - \alpha \mathbf{u}_\star - \mathbf{r}_L^- = \delta_L^- - \alpha \mathbf{u}_\star  
        \end{align}
        Second, from the propagated effect of $\mathbf{u}_\star$ across the network. Since we have a layered and thereby recursive network structure, we obtain the effective cumulative sensitivity of $\mathbf{r}_L$ to $\mathbf{u}_\star$ as well as the cumulative effect of $\gamma$'s effect on the output, giving:
        \begin{align}
            J_{\text{eff}} &= \sum_{i=1}^L \left( \prod_{k=i+1}^L J_{k,k-1} \right) (Q_i \odot \mathbf{r}_i^-) \\
            \gamma_{\text{eff}} &= \sum_{i=1}^L \left( \prod_{k=i+1}^L J_{k,k-1} \right) (\gamma_i \odot \mathbf{r}_i^-) \\
            \Delta \mathbf{r}_L &\approx J_{\text{eff}} \mathbf{u}_\star + \gamma_{\text{eff}}
        \end{align}
        with identity matrices for $i = L$.
        
        We can now equate the two expressions for $\Delta \mathbf{r}_L$ and solve:
        \begin{align}
            \delta_L^- - \alpha \mathbf{u}_\star &= J_{\text{eff}} \mathbf{u}_\star + \gamma_{\text{eff}} \\
            (J_{\text{eff}} + \alpha I) \mathbf{u}_\star &= \delta_L^- - \gamma_{\text{eff}} \\
            \mathbf{u}_\star &= (J_{\text{eff}} + \alpha I)^{-1} (\delta_L^- - \gamma_{\text{eff}})
        \end{align}
        \end{proof}

    \newpage

    \section{Continual-natural gradient}\label{supp:continual-natural}

    Now that we have derived the steady-state approximation of our dynamical system, we can combine this result with the equilibrium states of Supp.~\ref{supp:mult-lcp} to obtain the continual-natural gradient property which will be used for the theoretical loss and forgetting bounds in the following sections.
    
    \subsection{Continual-natural gradient property of the learning dynamics}

        What we call the \textit{continual-natural gradient} property of the learning dynamics is intuitively the observation that during the learning of Task~B, the parameters are constraint by the local geometry learning by Task~A. In the following theorem, we will explore that the neuron-specific preservation signals $\gamma_i$ interact through the network dynamics, and while the individual $\gamma$'s are only derived from the diagonal of the Fisher Information Metric, the resulting dynamics through both feedforward and feedback interactions through the controller, are sufficient to approximate the full Fisher matrix.

        \begin{theorem}[Continual-natural gradient property of EFC learning dynamics]\label{th:continual-natural}
            Define the preservation signal for layer $i$ as $\gamma_i = -\beta F_{A,i}^D (\theta - \theta_A^*)$, where $\beta > 0$ is a scalar weighting factor, and the effective preservation derived from the steady-state of Theorem~\ref{th:steady-state} signal as:
            \begin{equation}
            \gamma_{\text{eff}} = \sum_{i=1}^L J_{i} (\gamma_i \odot \mathbf{r}_i^-),
            \end{equation}
            where $J_{i} = \prod_{k=i+1}^L J_{k,k-1}$ is the Jacobian of the network output $\mathbf{r}_L$ with respect to the pre-activation output $\mathbf{r}_i^- = W_i \mathbf{r}_{i-1}$ of layer $i$, $J_{k,k-1} = \phi'(W_k \mathbf{r}_{k-1}) \odot W_k$, $\phi'$ is the derivative of the activation function, $W_k$ is the weight matrix, and $\odot$ denotes element-wise multiplication. Let the feedback matrices be $Q_i = J_{i}^T$. Assuming small learning rate $\eta$, small regularization parameter $\alpha$ in the control signal, and linearization around $\theta_A^*$, the weight update in the EFC framework satisfies:
            \begin{equation}
            \Delta \theta \approx -\eta \tilde{F}_A^{-1} \nabla_\theta \mathcal{L}_B,
            \end{equation}
            where $\tilde{F}_A = J^\top J_{\text{eff}}^{-1} \sum_{i=1}^L J_i (F_{A,i}^D \odot \mathbf{r}_i^-)$.
        \end{theorem}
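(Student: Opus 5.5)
The plan is to start from the steady-state characterization of Theorem~\ref{th:steady-state} and feed it into the local weight update of Supplementary~\ref{methods:dfc}. At equilibrium the update for layer $i$ is $\Delta W_i = (\mathbf{r}_{i,\star} - \sigma(W_i \mathbf{r}_{i-1,\star}))\mathbf{r}_{i-1,\star}^\top$. To first order in $(\mathbf{u}_\star,\gamma)$ the postsynaptic residual equals $(Q_i\mathbf{u}_\star + \gamma_i)\odot \mathbf{r}_i^-$, since the exponential modulation is linearized as in the steady-state proof. Cross terms are dropped by the same small-signal argument. Stacking over layers gives
\begin{equation*}
\Delta\theta \approx \eta\, \mathcal{P}\,\operatorname{diag}(\mathbf{r}^-)(Q\mathbf{u}_\star + \gamma),
\end{equation*}
where $\mathcal{P}$ denotes the presynaptic outer-product map. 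Its transpose composed with the layer Jacobians is exactly how $J^\top$ (the Jacobian of the output with respect to $\theta$) arises. With $Q_i = J_i^\top$, the term $\mathcal{P}\operatorname{diag}(\mathbf{r}^-)Q\mathbf{u}$ becomes $J^\top \mathbf{u}$ up to the $\mathbf{r}^-$ scaling.

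Next, substitute $\mathbf{u}_\star = (J_{\text{eff}} + \alpha I)^{-1}(\delta_L^- - \gamma_{\text{eff}})$ and take $\alpha\to 0$, which gives $\mathbf{u}_\star \approx J_{\text{eff}}^{-1}(\delta_L^- - \gamma_{\text{eff}})$. With the weak-feedback target of Eq.~\ref{eq:controller}, $\delta_L^- = -\lambda\nabla_{\mathbf{r}_L}\mathcal{L}_B$, so $J^\top \delta_L^-$ is proportional to $-\nabla_\theta\mathcal{L}_B$. That supplies the gradient factor.

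For the preservation part, insert $\gamma_i = -\beta F_{A,i}^D(\theta-\theta_A^*)$ into $\gamma_{\text{eff}}$. This gives
\begin{equation*}
\gamma_{\text{eff}} = -\beta \sum_i J_i (F^D_{A,i}\odot \mathbf{r}_i^-)(\theta-\theta_A^*).
\end{equation*}
The term $J^\top J_{\text{eff}}^{-1}\gamma_{\text{eff}}$ is therefore $-\beta\tilde{F}_A(\theta-\theta_A^*)$ with $\tilde{F}_A$ as defined in the statement. The off-diagonal structure enters through $J^\top J_{\text{eff}}^{-1}J_i$.

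Linearizing around $\theta_A^*$, I would treat the equilibrium update as a stationary balance between the learning drive and the preservation drive. I would write $\theta - \theta_A^*$ as accumulated updates and use a small-$\eta$ continuous-time limit. The update is then the fixed point of a linear relation of the form $\tilde{F}_A\,\Delta\theta \approx -\eta\nabla_\theta\mathcal{L}_B$. This amounts to solving for the parameter displacement that makes the net drive vanish, in the same way that EWC's update is derived from $\nabla\mathcal{L}_B + \beta D_A\Delta\theta = 0$. Inverting yields $\Delta\theta \approx -\eta\tilde{F}_A^{-1}\nabla_\theta\mathcal{L}_B$, with the constants $\beta,\lambda$ absorbed into $\eta$.

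I expect the main obstacle to be this last step: justifying that the preservation-dominated balance yields an inverse preconditioner rather than an additive correction. It needs an assumption that the preservation term dominates the direct drive, i.e.\ large $\beta$ relative to the curvature of $\mathcal{L}_B$. It also needs $\tilde{F}_A$ to be invertible on the relevant subspace. I would try to obtain this by appealing to the dynamical inversion through $J_{\text{eff}}^{-1}$, following \citet{podlaski2020}. If needed, I would state it as an explicit regime assumption together with the small-$\eta$ and small-$\alpha$ limits.
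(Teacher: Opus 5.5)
Your route is essentially the paper's. You insert the small-$\alpha$ steady state $\mathbf u_\star\approx J_{\text{eff}}^{-1}(\delta_L^{-}-\gamma_{\text{eff}})$, recognise $J^\top J_{\text{eff}}^{-1}\gamma_{\text{eff}}=-\beta\tilde F_A(\theta-\theta_A^*)$, impose stationarity, solve for $\theta-\theta_A^*$, and identify that displacement with $\Delta\theta$. The paper imposes stationarity as $\psi_\star=Q\mathbf u_\star=0$. Two of your bookkeeping steps break this, however.

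First, the learning term after substitution is $J^\top J_{\text{eff}}^{-1}\delta_L^{-}$, not $J^\top\delta_L^{-}$. You keep $J_{\text{eff}}^{-1}$ on the preservation side, where it builds $\tilde F_A$, but drop it on the learning side. Done consistently, stationarity is the output-space identity $\delta_L^{-}=\gamma_{\text{eff}}$. With your target $\delta_L^{-}=-\lambda\nabla_{\mathbf r_L}\mathcal L_B$ you can then lift it in one of two ways:
\begin{itemize}
\item with $J^\top$, which gives $\nabla_\theta\mathcal L_B$ but a preconditioner without $J_{\text{eff}}^{-1}$;
\item with $J^\top J_{\text{eff}}^{-1}$, which gives the stated $\tilde F_A$ but puts the Gauss--Newton-type vector $J^\top J_{\text{eff}}^{-1}\nabla_{\mathbf r_L}\mathcal L_B$ in place of the gradient.
\end{itemize}
Getting both at once requires $J_{\text{eff}}\propto I$. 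The paper instead takes $\delta_L^{-}\approx -J\nabla_\theta\mathcal L_B$ and lifts $-J\nabla_\theta\mathcal L_B=\gamma_{\text{eff}}$ with $J^\top J_{\text{eff}}^{-1}$. In doing so it tacitly replaces $J^\top J_{\text{eff}}^{-1}J\nabla_\theta\mathcal L_B$ by $\nabla_\theta\mathcal L_B$. That step is defensible only insofar as $J_{\text{eff}}$ acts like $JJ^\top$, since the per-sample gradient $J^\top\nabla_{\mathbf r_L}\mathcal L_B$ lies in the row space of $J$.

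Second, because you start from the local rule of Supplementary~\ref{methods:dfc}, your update also contains $\eta\,\mathcal P\operatorname{diag}(\mathbf r^{-})\gamma$, which you never treat. It is first order in $\gamma$, like the controller term, and it changes the conclusion. Write $M=[J_1\operatorname{diag}(\mathbf r_1^{-}),\dots,J_L\operatorname{diag}(\mathbf r_L^{-})]$, so that $J_{\text{eff}}=MQ$ and $\gamma_{\text{eff}}=M\gamma$. As $\alpha\to0$ the total modulation is $Q\mathbf u_\star+\gamma=QJ_{\text{eff}}^{-1}\delta_L^{-}+(I-QJ_{\text{eff}}^{-1}M)\gamma$, and $M(I-QJ_{\text{eff}}^{-1}M)=0$. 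In words, the controller cancels exactly the output-visible part of $\gamma$. Consequently the full local update (for positive activities) vanishes only when $\delta_L^{-}=0$. Its stationarity therefore yields no relation at all between $\nabla_\theta\mathcal L_B$ and $\theta-\theta_A^*$. The balance you want exists only for the control part. That is why the paper starts from $\Delta\theta=-\eta\nabla_\theta\mathcal H$ with $\psi_\star=Q\mathbf u_\star$ and imposes $\psi_\star=0$.

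Two smaller points. On the sign: with $\gamma_i=-\beta F^D_{A,i}(\theta-\theta_A^*)$ as stated, your drives are $-c\nabla_\theta\mathcal L_B$ ($c>0$) and $-J^\top J_{\text{eff}}^{-1}\gamma_{\text{eff}}=+\beta\tilde F_A(\theta-\theta_A^*)$. They balance at $\theta-\theta_A^*=+(c/\beta)\tilde F_A^{-1}\nabla_\theta\mathcal L_B$, and a sign cannot be absorbed into $\eta$. The paper's derivation has the same slip, which disappears under the main-text convention $\gamma\propto+\beta F^D_A(\theta-\theta_A^*)$.

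On the obstacle you anticipate: the paper never confronts it. It obtains the inverse algebraically, holding $\nabla_\theta\mathcal L_B$ fixed while solving for $\theta-\theta_A^*$. Your worry is real. Expanding $\nabla_\theta\mathcal L_B$ around $\theta_A^*$ turns the same balance into $(c\,\nabla^2_\theta\mathcal L_B+\beta\tilde F_A)^{-1}$, which is the additive coupling the paper ascribes to EWC. So an explicit dominance regime is a genuine addition rather than something the paper's argument supplies. The same holds for your invertibility caveat: $\tilde F_A$ factors through $\mathbb R^{d_L}$, so it has rank at most $d_L$.
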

        
        \begin{proof}
            First we recap the setup, in the EFC framework the dynamics are:
            \begin{equation}
            \tau_i \dot{\mathbf{r}}_i(t) = -\mathbf{r}_i(t) + e^{Q_i \mathbf{u}(t) + \gamma_i} \phi(W_i \mathbf{r}_{i-1}(t)),
            \end{equation}
            \begin{equation}
            \dot{\mathbf{u}}(t) = -(\mathbf{r}_L(t) - \mathbf{r}_L^*) - \alpha \mathbf{u}(t),
            \end{equation}
            with $\mathbf{u}(t)$ as the control signal, $\mathbf{r}_L^*$ as the target output for Task~B, and $\alpha > 0$ as a damping coefficient. At equilibrium ($\dot{\mathbf{r}}_i = 0$, $\dot{\mathbf{u}} = 0$):
            \begin{align}
            \mathbf{r}_{i,*} &= e^{Q_i \mathbf{u}_* + \gamma_i} \phi(W_i \mathbf{r}_{i-1,*})\\
            \mathbf{r}_{L,*} &= \mathbf{r}_L^* - \alpha \mathbf{u}_* \\
            \psi_* &= Q \mathbf{u}_* \\
            \Delta \theta_{\text{EFC}} &= -\eta \nabla_\theta \mathcal{H}(\theta) = -\eta \frac{\partial \psi_*}{\partial \theta} \psi_*
            \end{align}
            
            Set $Q = \left[J_1^T, J_2^T,\dots, J_L^T \right]^\top$, where:
            \begin{equation}
            J_i = \prod_{k=i+1}^L J_{k,k-1}, \quad J_{k,k-1} = \phi'(W_k \mathbf{r}_{k-1}) \odot W_k,
            \end{equation}
            The effective preservation signal is (assume $\beta=1$ for simplicity):
            \begin{equation}
            \gamma_{\text{eff}} = \sum_{i=1}^L J_{i} (\gamma_i \odot \mathbf{r}_i^-), \quad \gamma_i = -F_{A,i}^D (\theta - \theta_A^*) \implies \gamma_{\text{eff}} = \sum_{i=1}^L J_{i} (-F_{A,i}^D (\theta - \theta_A^*) \odot \mathbf{r}_i^-)
            \end{equation}
            
            The steady-state control signal is:
            \begin{equation}
            \mathbf{u}_* = (J_{\text{eff}} + \alpha I)^{-1} (\delta_L^- - \gamma_{\text{eff}}),
            \end{equation}
            where $\delta_L^- = \mathbf{r}_L^* - \mathbf{r}_L^- \approx -J \nabla_\theta \mathcal{L}_B$, and $J$ is the full Jacobian from $\theta$ to $\mathbf{r}_L$. For small $\alpha$:
            \begin{equation}
            \mathbf{u}_* \approx J_{\text{eff}}^{-1} (-J \nabla_\theta \mathcal{L}_B -\gamma_\text{eff}).
            \end{equation}
            Thus with $Q=J^\top$:
            \begin{equation}
            \psi_* = Q \mathbf{u}_* = J^\top J_{\text{eff}}^{-1} (-J \nabla_\theta \mathcal{L}_B -\gamma_\text{eff}).
            \end{equation}
            
            EFC minimizes $\mathcal{H}(\theta)$ at equilibrium:
            \begin{align}
                Q J_{\text{eff}}^{-1} (-J \nabla_\theta \mathcal{L}_B - \gamma_\text{eff}) &= 0 \\
                -J \nabla_\theta \mathcal{L}_B &= \gamma_\text{eff} \\
                &= \sum_{i=1}^L J_{i} (-F_{A,i}^D (\theta - \theta_A^*) \odot \mathbf{r}_i^-)
            \end{align}
            Since $\Delta \theta = \theta - \theta_A^*$:
            \begin{equation}
            \nabla_\theta \mathcal{L}_B =J^\top J_\text{eff}^{-1} \sum_{i=1}^L J_{i} (F_{A,i}^D (\theta - \theta_A^*) \odot \mathbf{r}_i^-)
            \end{equation}
            we define
            \begin{equation}
            \tilde{F}_A = J^\top J_{\text{eff}}^{-1} \sum_{i=1}^L J_i (F_{A,i}^D \odot \mathbf{r}_i^-)
            \end{equation}
            Now we reach our desired statement:
            \begin{equation}
            \Delta \theta \approx - \tilde{F}_A^{-1} \nabla_\theta \mathcal{L}_B
            \end{equation}
            
            However, we are still burdened with the question whether our defined $\tilde{F}_A$ is an apt approximation of the true $F_A$. Now, to confirm $\tilde{F}_A$ approximates the true Fisher $F_A = J_A^\top J_A$, where $J_A = \frac{\partial \mathbf{r}_L}{\partial \theta}$ is evaluated under Task~A’s distribution, note that $F_A$ captures the full curvature of Task~A’s loss in parameter space, including off-diagonal interactions. In EFC, $\tilde{F}_A = J^\top J_{\text{eff}}^{-1} \sum_{i=1}^L J_i (F_{A,i}^D \odot \mathbf{r}_i^-)$ leverages:
            \begin{itemize}
                \item $F_{A,i}^D$, the diagonal Fisher per layer, encoding parameter importance,
                \item $\text{diag}(\mathbf{r}_i^-)$, modulating preservation strength dynamically,
                \item $J_i$, projecting layer-wise effects to the output, and $J^\top J_{\text{eff}}^{-1}$, embedding network dynamics via $J_{\text{eff}} = \sum_{i=1}^L J_i \text{diag}(\mathbf{r}_i^-) J_i^\top$.
            \end{itemize}
            Unlike a static diagonal approximation, $J_{\text{eff}}^{-1}$ introduces effective inter-layer dependencies, approximating $F_A$’s off-diagonal terms through the control signal’s modulation across layers. Thus, $\tilde{F}_A$ dynamically captures both diagonal and off-diagonal curvature of $F_A$, justifying the continual-natural gradient property.
            \end{proof}

            We argue $\tilde{F}_A$ adequately reflects the structure of the full $F_A$ as follows. First, the preservation signals $\gamma_i \propto -F_{A,i}^D$ embed Task~A’s diagonal Fisher into Task~B’s dynamics, forward through $\sum_{i=1} J_i (F_{A,i}^D \odot \mathbf{r}_i^-) \propto \gamma_{\text{eff}}$ and backward through $J_\text{eff}^{-1}$ within the control signal. Now these parameter interactions are not yet sufficient to ensure the structure of Task~A is accounted for in the weight update. Thus, we observe that the weight update at equilibrium is of minimum norm \cite{meulemans2020}. A minimum norm can effectively be viewed as a budget which the controller has to spend wisely. Here, the embedding of Task~A's interactions into the dynamics of Task~B results in a growing cost for interaction effects that do not align with $\nabla_\theta \mathcal{L}_B$, thereby ensuring parameters preferentially update orthogonal to Task~A’s full curvature. Therefore, $\tilde{F}_A$ approximates $F_A$ well at equilibrium with least control.

    \newpage
    
    \section{Class-Incremental Convergence}\label{supp:class-il}
    
    This section provides derivations supporting Section~\ref{sec:bp-reg-failure} and Section~\ref{sec:efc-filters-main}.
    Section~\ref{supp:gradient-decomposition} derives the decomposition
    $\nabla_\theta \mathcal{L}_B(x_B)=G_B(x_B)+G_{A\leftarrow B}(x_B)$ for standard losses.
    Section~\ref{supp:alignment-lemma} places $G_{A\leftarrow B}(x_B)$ in the Task~A-sensitive subspace.
    Section~\ref{supp:bp-failure} formalizes the limitation of parameter-only regularization.
    Section~\ref{supp:class-il-convergence} bounds the change in the joint class-incremental objective under EFC.
    
    \subsection{Gradient decomposition for shared output heads}\label{supp:gradient-decomposition}
    
    Consider a neural network $f_\theta:\mathcal{X}\to\mathbb{R}^{|\mathcal{C}|}$
    with classes $\mathcal{C}=\mathcal{C}_A\cup\mathcal{C}_B$ and logits $z=f_\theta(x)$.
    For a Task~B sample $(x_B,y_B)$ with $y_B\in\mathcal{C}_B$,
    the gradient $\nabla_\theta \mathcal{L}_B(x_B,y_B)$ decomposes into a Task~B term and a shared-head interference term.
    
    The cross-entropy loss is
    \begin{equation}
    \mathcal{L}(x,y) = -\log p_y, \qquad
    p_c = \frac{e^{z_c}}{\sum_{c'\in\mathcal{C}} e^{z_{c'}}}.
    \end{equation}
    The parameter gradient is
    \begin{equation}
    \nabla_\theta \mathcal{L}(x,y)
    = \sum_{c\in\mathcal{C}}(p_c-\mathbf{1}_{c=y})\,\nabla_\theta z_c(x).
    \end{equation}
    For $y_B\in\mathcal{C}_B$,
    \begin{equation}\label{eq:ce-decomp}
    \nabla_\theta \mathcal{L}(x_B,y_B)
    =
    \underbrace{\sum_{c\in\mathcal{C}_B}(p_c-\mathbf{1}_{c=y_B})\,\nabla_\theta z_c(x_B)}_{G_B(x_B)}
    +
    \underbrace{\sum_{c\in\mathcal{C}_A}p_c\,\nabla_\theta z_c(x_B)}_{G_{A\leftarrow B}(x_B)}.
    \end{equation}
    
    For mean-squares error with, one-hot targets $\mathbf{y}$,
    \begin{equation}
    \mathcal{L}(x,y)=\frac{1}{2}\sum_{c\in\mathcal{C}}(z_c-\mathbf{y}_c)^2,
    \qquad
    \nabla_\theta \mathcal{L}(x,y)=\sum_{c\in\mathcal{C}}(z_c-\mathbf{y}_c)\,\nabla_\theta z_c(x).
    \end{equation}
    For $y_B\in\mathcal{C}_B$,
    \begin{equation}\label{eq:mse-decomp}
    \nabla_\theta \mathcal{L}(x_B,y_B)
    =
    \underbrace{\sum_{c\in\mathcal{C}_B}(z_c-\mathbf{1}_{c=y_B})\,\nabla_\theta z_c(x_B)}_{G_B(x_B)}
    +
    \underbrace{\sum_{c\in\mathcal{C}_A}z_c\,\nabla_\theta z_c(x_B)}_{G_{A\leftarrow B}(x_B)}.
    \end{equation}
    
    Equations~\eqref{eq:ce-decomp} and \eqref{eq:mse-decomp} establish
    \begin{equation}\label{eq:decomp}
    \nabla_\theta \mathcal{L}_B(x_B) = G_B(x_B) + G_{A\leftarrow B}(x_B),
    \end{equation}
    with
    $G_{A\leftarrow B}(x_B)=\sum_{c\in\mathcal{C}_A} w_c(x_B)\nabla_\theta z_c(x_B)$
    for weights $w_c(x_B)\ge 0$.
    
    \subsection{Parameter-based regularization cannot cancel the interference term}\label{supp:bp-failure}
    
    This section formalizes a limitation for methods that modify backpropagation only by adding a gradient
    $\nabla_\theta \mathcal{R}(\theta)$, where $\mathcal{R}$ depends only on $\theta$.
    
    \begin{lemma}[Sample-dependent interference cannot be cancelled]\label{lemma:sample-specificity-supp}
    Let $\mathcal{R}:\mathbb{R}^{|\theta|}\to\mathbb{R}$ be differentiable.
    Assume existence of $x_1,x_2\in\mathcal{D}_B$ such that $G_{A\leftarrow B}(x_1)\neq G_{A\leftarrow B}(x_2)$.
    No parameter vector $\theta$ satisfies $\nabla_\theta \mathcal{R}(\theta)=-G_{A\leftarrow B}(x)$ for all $x\in\mathcal{D}_B$.
    \end{lemma}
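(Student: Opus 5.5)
The argument is a direct contradiction from the fact that $\nabla_\theta \mathcal{R}(\theta)$ does not depend on the sample. Suppose some $\theta$ satisfies $\nabla_\theta \mathcal{R}(\theta) = -G_{A\leftarrow B}(x)$ for every $x\in\mathcal{D}_B$. Here $G_{A\leftarrow B}(x)$ is the interference term from the decomposition in Eq.~\eqref{eq:decomp}, evaluated at this same $\theta$. The left-hand side is a single vector $v \triangleq \nabla_\theta\mathcal{R}(\theta)\in\mathbb{R}^{|\theta|}$. It is well defined because $\mathcal{R}$ is differentiable, and it depends only on $\theta$, never on the input. This is the structural property stressed in Section~\ref{sec:bp-reg-failure}.

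Next, instantiate the assumed identity at the two samples $x_1$ and $x_2$ supplied by the hypothesis. This gives $-G_{A\leftarrow B}(x_1) = v = -G_{A\leftarrow B}(x_2)$, so $G_{A\leftarrow B}(x_1) = G_{A\leftarrow B}(x_2)$. That contradicts the assumption $G_{A\leftarrow B}(x_1)\neq G_{A\leftarrow B}(x_2)$, so no such $\theta$ exists.

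The one point that needs care is a quantifier issue, and it is the only real obstacle. The hypothesis asserts $G_{A\leftarrow B}(x_1)\neq G_{A\leftarrow B}(x_2)$, but $G_{A\leftarrow B}$ also depends on $\theta$ through the logits $z_c(x)$ and the weights $w_c(x)$ of Eqs.~\eqref{eq:ce-decomp}--\eqref{eq:mse-decomp}. The contradiction needs the inequality to hold at the candidate $\theta$. I would therefore read the hypothesis as holding at every $\theta$ under consideration, or equivalently prove the pointwise claim: for any fixed $\theta$ at which two samples induce different interference, $\nabla_\theta\mathcal{R}(\theta)$ cannot equal $-G_{A\leftarrow B}(x)$ for all $x$. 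I would state this reading explicitly before giving the short argument above.

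As a remark after the proof, I would note two things. First, the same argument shows the regularizer can at best match the average, $v=-\mathbb{E}_x[G_{A\leftarrow B}(x)]$. Second, the residual $G_{A\leftarrow B}(x)-\mathbb{E}[G_{A\leftarrow B}]$ has second moment $\mathrm{Var}(G_{A\leftarrow B})>0$ whenever the hypothesis holds on a set of positive measure. This connects the lemma to the stationary-condition discussion in the main text.
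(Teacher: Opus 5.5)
Your proof is correct and follows essentially the same argument as the paper. Because $\nabla_\theta\mathcal{R}(\theta)$ is a single vector fixed by $\theta$, the equalities at $x_1$ and $x_2$ would force $G_{A\leftarrow B}(x_1)=G_{A\leftarrow B}(x_2)$, contradicting the hypothesis; your explicit note that this inequality must hold at the candidate $\theta$ (since $G_{A\leftarrow B}$ itself depends on $\theta$) is a useful clarification that the paper's proof leaves implicit.
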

    \textit{Proof.}
    The vector $\nabla_\theta \mathcal{R}(\theta)$ is uniquely determined by $\theta$.
    Two distinct equalities
    $\nabla_\theta \mathcal{R}(\theta)=-G_{A\leftarrow B}(x_1)$ and
    $\nabla_\theta \mathcal{R}(\theta)=-G_{A\leftarrow B}(x_2)$
    cannot both hold when $G_{A\leftarrow B}(x_1)\neq G_{A\leftarrow B}(x_2)$.
    \qed
    
    Lemma~\ref{lemma:sample-specificity-supp} establishes that cancellation of $G_{A\leftarrow B}(x_B)$ on a per-sample basis
    is unavailable to any penalty term depending only on $\theta$.
    
    A second statement addresses subspace-selective attenuation at the level of per-sample updates.
    Parameter-based regularization produces an additive update of the form
    $-\eta(\nabla_\theta \mathcal{L}_B(x_B)+\nabla_\theta \mathcal{R}(\theta))$.
    A preconditioned update has the form $-\eta P \nabla_\theta \mathcal{L}_B(x_B)$ with a matrix $P$ multiplying the data gradient.
    Additive penalties do not multiply $\nabla_\theta \mathcal{L}_B(x_B)$ and therefore do not implement per-sample subspace filtering of
    $G_{A\leftarrow B}(x_B)$.
    
    \begin{corollary}[Stationary condition of regularized Task~B training]\label{cor:bp-equilibrium-supp}
    Any stationary point of $\min_\theta \mathbb{E}_{x_B}[\mathcal{L}_B(x_B;\theta)] + \mathcal{R}(\theta)$ satisfies
    \begin{equation}
    \mathbb{E}_{x_B}\!\big[G_B(x_B)+G_{A\leftarrow B}(x_B)\big] + \nabla_\theta \mathcal{R}(\theta)=0.
    \end{equation}
    \end{corollary}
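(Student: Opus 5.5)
The plan is to differentiate the regularized Task~B objective
$J(\theta) \triangleq \mathbb{E}_{x_B}[\mathcal{L}_B(x_B;\theta)] + \mathcal{R}(\theta)$
and apply the decomposition of Eq.~\eqref{eq:decomp} inside the expectation.

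First, I would justify exchanging gradient and expectation:
$\nabla_\theta \mathbb{E}_{x_B}[\mathcal{L}_B(x_B;\theta)] = \mathbb{E}_{x_B}[\nabla_\theta \mathcal{L}_B(x_B;\theta)]$.
For an empirical (finite) Task~B dataset, which is the setting used throughout, this is just linearity of a finite sum. For a general data distribution I would invoke dominated convergence. This requires that $\nabla_\theta \mathcal{L}_B(x_B;\theta)$ be integrably bounded in a neighbourhood of $\theta$. That holds for cross-entropy and MSE with networks that are continuously differentiable in $\theta$ and have bounded inputs. I would state this as a standing regularity assumption.

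Second, a stationary point is by definition a $\theta$ with $\nabla_\theta J(\theta) = 0$. Since $\mathcal{R}$ is differentiable by the hypothesis of Lemma~\ref{lemma:sample-specificity-supp}, this condition reads
$\mathbb{E}_{x_B}[\nabla_\theta \mathcal{L}_B(x_B;\theta)] + \nabla_\theta \mathcal{R}(\theta) = 0$.

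Third, I would substitute the per-sample identity $\nabla_\theta \mathcal{L}_B(x_B) = G_B(x_B) + G_{A\leftarrow B}(x_B)$. This identity holds pointwise for every $x_B$ by Eqs.~\eqref{eq:ce-decomp} and \eqref{eq:mse-decomp}. Linearity of expectation then gives the claimed condition
$\mathbb{E}_{x_B}[G_B(x_B) + G_{A\leftarrow B}(x_B)] + \nabla_\theta \mathcal{R}(\theta) = 0$.

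The only step needing any care is the interchange of gradient and expectation in the first step. Everything else is immediate.

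Finally, I would add a remark connecting this to Lemma~\ref{lemma:sample-specificity-supp}. The stationarity condition constrains only the mean $\mathbb{E}[G_{A\leftarrow B}]$. Writing $G_{A\leftarrow B}(x_B) = \mathbb{E}[G_{A\leftarrow B}] + \xi(x_B)$, the zero-mean fluctuation $\xi$ survives in every per-sample update, and its size is governed by $\mathrm{Var}(G_{A\leftarrow B})$. This is the residual interference referred to in Section~\ref{sec:bp-reg-failure}.
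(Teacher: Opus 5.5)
Your proposal is correct and takes the same route the paper has in mind: the paper gives no written proof, treating the corollary as immediate from setting $\nabla_\theta\big(\mathbb{E}_{x_B}[\mathcal{L}_B(x_B;\theta)]+\mathcal{R}(\theta)\big)=0$ and substituting the pointwise decomposition of Eq.~\eqref{eq:decomp}. Your explicit justification of the gradient--expectation interchange is extra rigor the paper omits, and your closing remark on the residual fluctuation $\xi(x_B)$ matches its discussion of $\mathrm{Var}(G_{A\leftarrow B})$ in Section~\ref{sec:bp-reg-failure}.
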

    
    As a remark, for a second-order optimization of $\mathbb{E}[\mathcal{L}_B]+\mathcal{R}$, this uses local curvature of $\mathcal{L}_B$ and $\mathcal{R}$ through
    $(\nabla^2 \mathcal{L}_B + \nabla^2 \mathcal{R})^{-1}$.
    The resulting update depends on Task~B curvature and does not isolate $G_{A\leftarrow B}(x_B)$ from the remaining components of
    $\nabla_\theta \mathcal{L}_B(x_B)$.
    The decomposition $G_B(x_B)+G_{A\leftarrow B}(x_B)$ is not identifiable from the sum alone.

    \subsection{The interference term lies in the Task~A-sensitive subspace}\label{supp:alignment-lemma}
    
    Let $\mathcal{V}_A \subseteq \mathbb{R}^{|\theta|}$ denote the Task~A-sensitive subspace.
    Define
    \begin{equation}
    F_A = \mathbb{E}_{x\sim\mathcal{D}_A}\!\big[\nabla_\theta \mathcal{L}_A(x)\nabla_\theta \mathcal{L}_A(x)^\top\big],
    \qquad
    \mathcal{V}_A = \mathrm{col}(F_A).
    \end{equation}
    
    \begin{lemma}[Alignment of interference gradient]\label{lemma:alignment}
    For any $x_B\in\mathcal{D}_B$, the interference term satisfies $G_{A\leftarrow B}(x_B)\in\mathcal{V}_A$.
    \end{lemma}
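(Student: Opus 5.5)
The plan is to write $G_{A\leftarrow B}(x_B)$ as a nonnegative combination of the output-logit Jacobian rows $\nabla_\theta z_c(x_B)$ for $c\in\mathcal{C}_A$, using the decomposition in Equations~\eqref{eq:ce-decomp} and \eqref{eq:mse-decomp}. It then suffices to show that each $\nabla_\theta z_c(x_B)$, $c\in\mathcal{C}_A$, lies in $\mathcal{V}_A=\mathrm{col}(F_A)$. The reason is that $\mathcal{V}_A$ is a linear subspace, so it is closed under the linear combination with weights $w_c(x_B)$.

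Since $F_A$ is an expectation of positive semidefinite outer products, $\mathrm{col}(F_A)=\mathrm{span}\{\nabla_\theta\mathcal{L}_A(x):x\in\mathrm{supp}\,\mathcal{D}_A\}$. This holds because $v^\top F_A v=\mathbb{E}[(v^\top\nabla\mathcal{L}_A)^2]$ vanishes exactly on the orthogonal complement of that span. For cross-entropy on Task~A, $\nabla_\theta\mathcal{L}_A(x)=\sum_{c\in\mathcal{C}}(p_c-\mathbf{1}_{c=y})\nabla_\theta z_c(x)$, so $\mathcal{V}_A$ is contained in the span of Task~A logit gradients $\{\nabla_\theta z_c(x)\}$. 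The substantive step is the reverse direction, applied at the Task~B input $x_B$. I will argue via the Laplace/linearization regime already used in Theorem~\ref{th_text:continual-natural}. Near $\theta_A^*$, the sensitivity of the model is captured by the Gauss--Newton form $F_A=\mathbb{E}_{x\sim\mathcal{D}_A}[J_A(x)^\top \Lambda(x) J_A(x)]$, with $J_A(x)=\partial z/\partial\theta$. Its column space is the span of the rows of $J_A(x)$ over Task~A inputs, restricted to Task~A class logits where $\Lambda$ is nondegenerate.

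The main obstacle is transferring this span from Task~A inputs $x$ to the Task~B input $x_B$. I would first try a feature-sharing assumption, which makes the step mechanical for the shared output head. Take the output-layer block: $\nabla_{W_{L}[c,:]} z_c(x)=\mathbf{r}_{L-1}(x)$. If the penultimate representations of Task~B inputs lie in the span of those of Task~A inputs, which is plausible with a frozen encoder, then $\nabla_\theta z_c(x_B)$ for $c\in\mathcal{C}_A$ lies in $\mathcal{V}_A$ on the head parameters. For hidden layers I would use the analogous per-layer outer-product structure $\delta_i(x)\mathbf{r}_{i-1}(x)^\top$. I would state the required span condition explicitly as the mild assumption under which the lemma holds. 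If that proves too strong, the fallback is to interpret $\mathcal{V}_A$ as the span of $\{\nabla_\theta z_c(\cdot):c\in\mathcal{C}_A\}$ over all inputs, using the model Fisher with sampled labels, which is how it is used in Theorem~\ref{th:class-il}. Then closure under the nonnegative combination with weights $w_c(x_B)$ finishes the proof immediately.
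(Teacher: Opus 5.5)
You have correctly located the step the paper's proof skips. The paper writes $G_{A\leftarrow B}(x_B)$ as a nonnegative combination of Task~A logit gradients. It then simply asserts that $\mathrm{col}(F_A)$, being the span of Task~A loss gradients, ``contains directions affecting Task~A logits.'' That is your fallback definition used silently: neither the reverse inclusion nor the passage from Task~A inputs to $x_B$ is justified.

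Your proposed repair does not close this gap, and for cross-entropy nothing can. The formula you write yourself, $\nabla_\theta\mathcal L_A(x)=\sum_c(p_c-\mathbf 1_{c=y})\nabla_\theta z_c(x)$, has coefficients summing to zero over the softmax classes. Hence every vector in $\mathcal V_A=\mathrm{col}(F_A)$ has an output-layer block $M$ with $\mathbf 1^\top M=0$, and a bias block summing to zero. The output-layer block of $G_{A\leftarrow B}(x_B)$ is $w\,\mathbf r_{L-1}(x_B)^\top$ with $w_c=p_c(x_B)\mathbf 1_{c\in\mathcal C_A}$, so $\mathbf 1^\top w=\sum_{c\in\mathcal C_A}p_c(x_B)>0$. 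Therefore $G_{A\leftarrow B}(x_B)\notin\mathcal V_A$ whenever $\mathbf r_{L-1}(x_B)\neq 0$ (or the head has a bias), however well Task~A features span Task~B features. The same obstruction breaks your Gauss--Newton step: $\Lambda(x)=\operatorname{diag}(p)-pp^\top$ annihilates $\mathbf 1$, so $\mathrm{col}(J_A^\top\Lambda J_A)$ contains only contrasts $J_A(x)^\top v$ with $\mathbf 1^\top v=0$, never the individual rows $\nabla_\theta z_c$.

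For mean-squared error this constraint is absent, but your block-by-block argument is still insufficient. Let $\Pi_i$ be the coordinate projection onto parameter block $i$. Showing $\Pi_iG_{A\leftarrow B}(x_B)\in\Pi_i\mathcal V_A$ for every $i$ only places $G_{A\leftarrow B}(x_B)$ in $\bigoplus_i\Pi_i\mathcal V_A$. That space is in general strictly larger than $\mathcal V_A$, because each spanning vector $\nabla_\theta\mathcal L_A(x)$ ties all blocks to one sample. Even within the head block, $\Pi_L\mathcal V_A$ is spanned by the rank-one matrices $(z(x)-\mathbf y(x))\mathbf r_{L-1}(x)^\top$, not by all $e_c\mathbf r^\top$ (with $e_c$ a standard basis vector and $\mathbf r$ in the feature span). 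With $N_A<|\theta|$ Task~A samples, $\mathcal V_A$ has dimension at most $N_A$, and nothing forces a gradient at a fresh input into it. Conversely, a hypothesis strong enough to make $F_A$ nonsingular makes the lemma trivial but forces $G_B^\perp=0$ in Theorem~\ref{th:class-il}.

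Finally, the fallback is a change of definition, not a proof, and the model Fisher does not justify it: it is still an average over $x\sim\mathcal D_A$ and, for softmax, still contains only contrasts. With $\mathcal V_A:=\mathrm{span}\{\nabla_\theta z_c(x):c\in\mathcal C_A\}$ over all inputs, the lemma becomes tautological, which is what the paper's argument effectively amounts to. But the descent proof also uses $\nabla_\theta\mathcal L_A\in\mathcal V_A$, and that fails under this definition for a shared head, since $\nabla_\theta\mathcal L_A$ contains the terms $p_c\nabla_\theta z_c$ with $c\in\mathcal C_B$.
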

    \textit{Proof.}
    Section~\ref{supp:gradient-decomposition} gives
    $G_{A\leftarrow B}(x_B)=\sum_{c\in\mathcal{C}_A} w_c(x_B)\nabla_\theta z_c(x_B)$
    with $w_c(x_B)\ge 0$.
    For Task~A samples, the gradients $\nabla_\theta \mathcal{L}_A(x)$ are linear combinations of
    $\{\nabla_\theta z_c(x):c\in\mathcal{C}_A\}$.
    The column space $\mathcal{V}_A=\mathrm{col}(F_A)$ is the span of Task~A gradients,
    hence contains directions affecting Task~A logits.
    The term $G_{A\leftarrow B}(x_B)$ is a linear combination of Task~A logit gradients and belongs to the same span.
    \qed
    
    \subsection{Spectral attenuation under EFC}\label{supp:efc-attenuation}
    
    Let $\mathcal{V}_A=\mathrm{col}(F_A)$ and let $P_A$ denote the orthogonal projector onto $\mathcal{V}_A$.
    Let $P_A^\perp = I - P_A$.
    Decompose
    \begin{equation}
    G_B(x_B) = G_B^{\parallel}(x_B) + G_B^{\perp}(x_B),
    \qquad
    G_B^{\parallel}(x_B) = P_A G_B(x_B),
    \qquad
    G_B^{\perp}(x_B) = P_A^\perp G_B(x_B).
    \end{equation}
    Lemma~\ref{lemma:alignment} gives $G_{A\leftarrow B}(x_B)\in \mathcal{V}_A$.
    
    Let $\tilde{F}_A$ denote the EFC curvature matrix from Theorem~\ref{th_text:continual-natural}.
    Let $\tilde{\lambda}_{\min}$ denote the minimum eigenvalue of $\tilde{F}_A$ restricted to $\mathcal{V}_A$.
    
    \begin{lemma}[Attenuation on $\mathcal{V}_A$]\label{lemma:va-atten}
    For any $v\in\mathcal{V}_A$,
    \begin{equation}
    \|\tilde{F}_A^{-1} v\| \le \tilde{\lambda}_{\min}^{-1}\|v\|.
    \end{equation}
    \end{lemma}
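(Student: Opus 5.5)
The plan is to reduce the statement to an elementary spectral bound. For that, the restriction of $\tilde{F}_A$ to $\mathcal{V}_A$ has to be a well-defined symmetric operator on $\mathcal{V}_A$ whose smallest eigenvalue is $\tilde{\lambda}_{\min}$. The phrase ``minimum eigenvalue of $\tilde{F}_A$ restricted to $\mathcal{V}_A$'' only makes sense under two standing assumptions, and I would make both explicit at the start of the proof:
\begin{enumerate}
\item[(i)] $\tilde{F}_A$ is symmetric positive definite, as a Fisher-type approximant from Theorem~\ref{th_text:continual-natural} should be, so that $\tilde{F}_A^{-1}$ exists;
\item[(ii)] $\mathcal{V}_A$ is an invariant subspace of $\tilde{F}_A$, that is, $\tilde{F}_A\mathcal{V}_A\subseteq\mathcal{V}_A$.
\end{enumerate}
Assumption (ii) holds exactly when $\tilde{F}_A=F_A$, and approximately up to the $O(\|\tilde{F}_A-F_A\|)$ error already tracked elsewhere.

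The first step uses (i) and (ii). Since $\tilde{F}_A$ is symmetric and maps $\mathcal{V}_A$ into itself, it also maps $\mathcal{V}_A^\perp$ into itself. Hence $\tilde{F}_A$ is block diagonal with respect to $\mathbb{R}^{|\theta|}=\mathcal{V}_A\oplus\mathcal{V}_A^\perp$. Its restriction $\tilde{F}_A|_{\mathcal{V}_A}$ is symmetric positive definite, with an orthonormal eigenbasis $\{e_j\}$ of $\mathcal{V}_A$ and eigenvalues $\lambda_j\ge\tilde{\lambda}_{\min}>0$.

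The second step observes that $\tilde{F}_A^{-1}$ inherits the same block structure. Therefore, for $v\in\mathcal{V}_A$ we have $\tilde{F}_A^{-1}v=(\tilde{F}_A|_{\mathcal{V}_A})^{-1}v\in\mathcal{V}_A$.

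The third step expands $v=\sum_j c_j e_j$. This gives $\tilde{F}_A^{-1}v=\sum_j \lambda_j^{-1}c_j e_j$, and Parseval yields
\[
\|\tilde{F}_A^{-1}v\|^2=\sum_j\lambda_j^{-2}c_j^2\le\tilde{\lambda}_{\min}^{-2}\sum_j c_j^2=\tilde{\lambda}_{\min}^{-2}\|v\|^2 .
\]
Taking square roots gives the claimed inequality. Equivalently, one can phrase this as the operator norm of the inverse of the restricted block being $1/\lambda_{\min}$ of that block.

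The main obstacle is justifying (i) and (ii) for the explicit $\tilde{F}_A=J^\top J_{\text{eff}}^{-1}\sum_i J_i(F^D_{A,i}\odot\mathbf{r}_i^-)$ from Theorem~\ref{th:continual-natural}. This matrix is not manifestly symmetric, and nothing forces it to preserve $\mathrm{col}(F_A)$.

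My first attempt would avoid invariance and symmetry altogether by \emph{defining} $\tilde{\lambda}_{\min}$ as the smallest singular value of $\tilde{F}_A$ on $\mathcal{V}_A$, namely $\tilde{\lambda}_{\min}:=\inf_{v\in\mathcal{V}_A,\|v\|=1}\|\tilde{F}_A v\|$. However, this bounds $\|\tilde{F}_Av\|$ from below rather than $\|\tilde{F}_A^{-1}v\|$ from above, so it only yields the lemma if additionally $\tilde{F}_A^{-1}\mathcal{V}_A\subseteq\mathcal{V}_A$. In that case, setting $w=\tilde{F}_A^{-1}v\in\mathcal{V}_A$ gives $\|v\|=\|\tilde{F}_Aw\|\ge\tilde{\lambda}_{\min}\|w\|$, which is the claim.

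So in either formulation, some invariance of $\mathcal{V}_A$ must be assumed. I would state it as a hypothesis inherited from the approximation $\tilde{F}_A\approx F_A$, noting that it is exact for $F_A$ itself because $\mathcal{V}_A=\mathrm{col}(F_A)$ is invariant under the symmetric matrix $F_A$.
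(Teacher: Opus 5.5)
Your proof is correct and is essentially the paper's argument: the paper restricts $\tilde{F}_A$ to $\mathcal{V}_A$, diagonalizes the restriction, and bounds the operator norm of the restricted inverse by $\tilde{\lambda}_{\min}^{-1}$. The paper's three-line proof relies tacitly on your hypotheses (i) and (ii). Without invariance, ``the restriction'' is not an operator on $\mathcal{V}_A$; without symmetry (or normality), eigenvalue lower bounds do not control $\|\tilde{F}_A^{-1}v\|$. Your version is therefore the properly stated form of the same argument, and you are right that neither property is evident for the explicit $\tilde{F}_A$ of Theorem~\ref{th:continual-natural}.
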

    \textit{Proof.}
    Restrict $\tilde{F}_A$ to $\mathcal{V}_A$ and diagonalize the restriction.
    All eigenvalues on $\mathcal{V}_A$ are at least $\tilde{\lambda}_{\min}$ by definition.
    The operator norm of the restricted inverse is at most $\tilde{\lambda}_{\min}^{-1}$.
    \qed
    
    \begin{lemma}[EFC update decomposition]\label{lemma:efc-decomp}
    For any $x_B$, the EFC update satisfies
    \begin{equation}\label{eq:efc-decomp}
    \Delta\theta_{\mathrm{EFC}}(x_B)
    = -\eta\,\tilde{F}_A^{-1} G_B^{\perp}(x_B) + \varepsilon(x_B),
    \qquad
    \|\varepsilon(x_B)\| \le \eta\,\tilde{\lambda}_{\min}^{-1}\,\|G_B^{\parallel}(x_B)+G_{A\leftarrow B}(x_B)\|.
    \end{equation}
    \end{lemma}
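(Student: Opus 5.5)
The argument is linear algebra on top of the continual-natural gradient property. We start from Theorem~\ref{th_text:continual-natural} (formally Theorem~\ref{th:continual-natural}), which gives $\Delta\theta_{\mathrm{EFC}}(x_B) \approx -\eta\,\tilde{F}_A^{-1}\nabla_\theta\mathcal{L}_B(x_B)$. We take this as an identity, absorbing the linearization error into the same approximation convention as the statement. We then substitute the decomposition of Eq.~\eqref{eq:decomp}, $\nabla_\theta\mathcal{L}_B(x_B) = G_B(x_B) + G_{A\leftarrow B}(x_B)$, and split $G_B = G_B^{\parallel} + G_B^{\perp}$ using the projector $P_A$ onto $\mathcal{V}_A$. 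By linearity of $\tilde{F}_A^{-1}$,
\[
\Delta\theta_{\mathrm{EFC}}(x_B) = -\eta\,\tilde{F}_A^{-1}G_B^{\perp}(x_B) - \eta\,\tilde{F}_A^{-1}\bigl(G_B^{\parallel}(x_B)+G_{A\leftarrow B}(x_B)\bigr).
\]
We define $\varepsilon(x_B)$ to be the second term.

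It remains to bound $\|\varepsilon(x_B)\|$. By construction $G_B^{\parallel}(x_B) = P_A G_B(x_B) \in \mathcal{V}_A$. Lemma~\ref{lemma:alignment} gives $G_{A\leftarrow B}(x_B)\in\mathcal{V}_A$. Since $\mathcal{V}_A$ is a subspace, the sum $v \triangleq G_B^{\parallel}+G_{A\leftarrow B}$ also lies in $\mathcal{V}_A$. Lemma~\ref{lemma:va-atten} then gives $\|\tilde{F}_A^{-1}v\|\le \tilde{\lambda}_{\min}^{-1}\|v\|$, and multiplying by $\eta$ yields the claimed bound.

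The main obstacle is that Lemma~\ref{lemma:va-atten} only makes sense if $\mathcal{V}_A$ is invariant under $\tilde{F}_A$. Otherwise "the restriction of $\tilde{F}_A$ to $\mathcal{V}_A$" and its inverse are not well defined, and $\tilde{F}_A^{-1}v$ need not be controlled by the restricted spectrum. The matrix $\tilde{F}_A = J^\top J_{\mathrm{eff}}^{-1}\sum_i J_i(F^D_{A,i}\odot\mathbf{r}_i^-)$ is not obviously symmetric, and it is not obviously invariant on $\mathrm{col}(F_A)$.

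I would handle this by stating the standing assumption implicit in Lemma~\ref{lemma:va-atten} and Theorem~\ref{th:class-il}: $\tilde{F}_A$ is invertible, and $\tilde{F}_A\mathcal{V}_A\subseteq\mathcal{V}_A$. Then $\tilde{F}_A^{-1}\mathcal{V}_A\subseteq\mathcal{V}_A$, and the restricted inverse coincides with $\tilde{F}_A^{-1}$ on $\mathcal{V}_A$. The invariance holds, for example, when $\tilde{F}_A$ is symmetric and approximates $F_A$ in the sense used in Theorem~\ref{th_text:continual-natural}.

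If the approximation $\tilde{F}_A\approx F_A$ is only approximate, there is a fallback. We interpret $\tilde{\lambda}_{\min}^{-1}$ as the operator norm of $\tilde{F}_A^{-1}$ restricted to $\mathcal{V}_A$, namely $\sup_{v\in\mathcal{V}_A,\|v\|=1}\|\tilde{F}_A^{-1}v\|$. Under that reading the bound holds by definition, and it reduces to the eigenvalue statement in the symmetric invariant case.
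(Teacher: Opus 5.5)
Your proof is correct and follows the paper's argument. You start from $\Delta\theta_{\mathrm{EFC}}=-\eta\tilde{F}_A^{-1}(G_B+G_{A\leftarrow B})$, split off $-\eta\tilde{F}_A^{-1}G_B^{\perp}$, and bound the remainder with Lemma~\ref{lemma:va-atten} because $G_B^{\parallel}+G_{A\leftarrow B}\in\mathcal{V}_A$. Your caveat is well taken: the paper leaves implicit that $\tilde{F}_A$ leaves $\mathcal{V}_A$ invariant and is symmetric positive definite there (invariance alone would not let the restricted eigenvalues bound $\|\tilde{F}_A^{-1}v\|$), so your operator-norm reading of $\tilde{\lambda}_{\min}^{-1}$ is the cleanest way to make that step rigorous.
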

    \textit{Proof.}
    Start from $\Delta\theta_{\mathrm{EFC}}(x_B)=-\eta\,\tilde{F}_A^{-1}(G_B(x_B)+G_{A\leftarrow B}(x_B))$.
    Decompose $G_B(x_B)=G_B^{\parallel}(x_B)+G_B^{\perp}(x_B)$.
    Lemma~\ref{lemma:alignment} gives $G_{A\leftarrow B}(x_B)\in\mathcal{V}_A$ and $G_B^{\parallel}(x_B)\in\mathcal{V}_A$ by definition.
    Lemma~\ref{lemma:va-atten} bounds $\tilde{F}_A^{-1}$ on $\mathcal{V}_A$ and gives the stated estimate.
    \qed
    
    \subsection{Convergence on the joint class-incremental objective}\label{supp:class-il-convergence}
    
    Define the joint objective
    \begin{equation}
    \mathcal{L}_{A\cup B}(\theta)
    = \mathbb{E}_{x_A\sim\mathcal{D}_A}[\mathcal{L}^{\mathcal{C}_A\cup\mathcal{C}_B}_A(x_A;\theta)]
    + \mathbb{E}_{x_B\sim\mathcal{D}_B}[\mathcal{L}^{\mathcal{C}_A\cup\mathcal{C}_B}_B(x_B;\theta)].
    \end{equation}
    
    \begin{theorem}[One-step descent under EFC]\label{th:class-il-full}
    Assume differentiability of $\mathcal{L}_{A\cup B}$ at $\theta$ and bounded second derivatives in a neighborhood of $\theta$.
    Let $\Delta\theta_{\mathrm{EFC}}(x_B)$ denote the EFC update for sample $x_B$.
    Let $\tilde{\lambda}_{\min}$ denote the minimum eigenvalue of $\tilde{F}_A$ restricted to $\mathcal{V}_A$.
    Then the expected change satisfies
    \begin{equation}\label{eq:descent}
    \mathbb{E}_{x_B}\!\big[\mathcal{L}_{A\cup B}(\theta+\Delta\theta_{\mathrm{EFC}}(x_B)) - \mathcal{L}_{A\cup B}(\theta)\big]
    \le
    -\eta\,\mathbb{E}_{x_B}\!\big[\|G_B^{\perp}(x_B)\|^2_{\tilde{F}_A^{-1}}\big]
    + O(\eta\,\tilde{\lambda}_{\min}^{-1}) + O(\eta^2).
    \end{equation}
    \end{theorem}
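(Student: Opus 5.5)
The plan is a second-order Taylor expansion of $\mathcal{L}_{A\cup B}$ around $\theta$, evaluated at the step $\Delta\theta_{\mathrm{EFC}}(x_B)$ and then averaged over $x_B$. The bounded second derivatives in a neighborhood of $\theta$ give
\begin{equation*}
\mathcal{L}_{A\cup B}(\theta+\Delta\theta)-\mathcal{L}_{A\cup B}(\theta)\le \nabla_\theta\mathcal{L}_{A\cup B}(\theta)^\top\Delta\theta+\tfrac{M}{2}\|\Delta\theta\|^2 .
\end{equation*}
By Lemma~\ref{lemma:efc-decomp}, $\|\Delta\theta_{\mathrm{EFC}}\|=O(\eta)$, assuming $\tilde F_A^{-1}$ is bounded on $\mathcal{V}_A^\perp$, which I take as an implicit regularity condition. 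The quadratic term is therefore $O(\eta^2)$ and can be dropped, so all the work is in the first-order term.

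For the first-order term, I substitute Lemma~\ref{lemma:efc-decomp}: $\Delta\theta_{\mathrm{EFC}}(x_B)=-\eta\tilde F_A^{-1}G_B^\perp(x_B)+\varepsilon(x_B)$, with $\|\varepsilon(x_B)\|\le\eta\tilde\lambda_{\min}^{-1}\|G_B^\parallel+G_{A\leftarrow B}\|$. The $\varepsilon$ contribution is bounded by Cauchy--Schwarz, $|\nabla\mathcal{L}_{A\cup B}^\top\varepsilon|\le\|\nabla\mathcal{L}_{A\cup B}\|\,\eta\tilde\lambda_{\min}^{-1}\,\mathbb{E}\|G_B^\parallel+G_{A\leftarrow B}\|$. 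Bounded gradients make this $O(\eta\tilde\lambda_{\min}^{-1})$.

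What remains is $-\eta\,\nabla\mathcal{L}_{A\cup B}^\top\tilde F_A^{-1}G_B^\perp(x_B)$, and the key step is to show it equals $-\eta\|G_B^\perp\|^2_{\tilde F_A^{-1}}$ up to the allowed error. I split the joint gradient into its Task~A and Task~B parts.
\begin{itemize}
\item \emph{Task~B part.} Its expectation is $\mathbb{E}[G_B+G_{A\leftarrow B}]$. The $G_B^\perp$ component pairs with $\tilde F_A^{-1}G_B^\perp$ to give the quadratic form $\|G_B^\perp\|^2_{\tilde F_A^{-1}}$, after treating per-sample versus expected gradients by evaluating at the sampled $x_B$ or absorbing the covariance into constants. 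The components lying in $\mathcal{V}_A$, namely $G_B^\parallel$ and $G_{A\leftarrow B}$ by Lemma~\ref{lemma:alignment}, pair with $\tilde F_A^{-1}G_B^\perp$.
\item \emph{Task~A part.} $\mathbb{E}_{x_A}\nabla\mathcal{L}_A$ lies in $\mathcal{V}_A=\mathrm{col}(F_A)$, as a mean of vectors in the span of Task~A gradients.
\end{itemize}

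To handle all the $\mathcal{V}_A$ pairings I would use symmetry of $\tilde F_A$: for $v\in\mathcal{V}_A$, $v^\top\tilde F_A^{-1}G_B^\perp=(\tilde F_A^{-1}v)^\top G_B^\perp$, and Lemma~\ref{lemma:va-atten} bounds $\|\tilde F_A^{-1}v\|\le\tilde\lambda_{\min}^{-1}\|v\|$. Each such cross term is then $O(\eta\tilde\lambda_{\min}^{-1})$.

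The main obstacle is this cross-term step. It needs $\tilde F_A$ to be symmetric, or at least $\tilde F_A^{-\top}$ to satisfy the same attenuation on $\mathcal{V}_A$, and $\tilde F_A^{-1}$ to be positive definite so that $\|\cdot\|_{\tilde F_A^{-1}}$ is a norm. Neither is evident from the explicit form in Theorem~\ref{th:continual-natural}. I would first try to state these as standing assumptions, justified by the interpretation of $\tilde F_A$ as a Fisher approximation. Failing that, I would replace $\tilde F_A$ by its symmetric part and absorb the antisymmetric remainder into the $O(\cdot)$ terms.

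I also need $\mathcal{V}_A$ to be approximately invariant under $\tilde F_A$, so that $\tilde F_A^{-1}G_B^\perp$ has negligible component in $\mathcal{V}_A$. Otherwise the Task~A gradient term would not be controlled by $\tilde\lambda_{\min}$. I would again treat this as part of the linearization assumptions, consistent with the informal statement of Theorem~\ref{th_text:continual-natural}.

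Collecting the three contributions, the negative quadratic form from the $G_B^\perp$ pairing, the $O(\eta\tilde\lambda_{\min}^{-1})$ cross and $\varepsilon$ terms, and the $O(\eta^2)$ remainder, yields \eqref{eq:descent}.
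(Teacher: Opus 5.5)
This is essentially the paper's proof. It uses a second-order Taylor expansion, substitutes Lemma~\ref{lemma:efc-decomp}, splits the gradient into Task~A and Task~B parts, bounds each pairing of a $\mathcal{V}_A$-component with $\tilde F_A^{-1}G_B^\perp$ via Lemma~\ref{lemma:va-atten}, applies Cauchy--Schwarz to the $\varepsilon$ terms, and leaves an $O(\eta^2)$ remainder; the symmetry, positive definiteness and $\mathcal{V}_A$-invariance of $\tilde F_A$ that you flag are used there only implicitly, through Lemma~\ref{lemma:va-atten} and the transposition in the cross terms. On the per-sample issue, the paper takes your first option and pairs $\nabla_\theta\mathcal{L}_B(x_B)$ with $\Delta\theta_{\mathrm{EFC}}(x_B)$ for the same sample, whereas your second option would fail: pairing with the expected gradient gives only $-\eta\|\mathbb{E}G_B^\perp\|^2_{\tilde F_A^{-1}}$, which exceeds the claimed term by the nonnegative $O(\eta)$ quantity $\eta\,\mathbb{E}\|G_B^\perp-\mathbb{E}G_B^\perp\|^2_{\tilde F_A^{-1}}$, and that is not covered by $O(\eta\tilde\lambda_{\min}^{-1})+O(\eta^2)$.
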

    
    \textit{Proof.}
    A second-order Taylor expansion gives
    \begin{equation}
    \mathcal{L}_{A\cup B}(\theta+\Delta\theta) - \mathcal{L}_{A\cup B}(\theta)
    =
    \nabla_\theta \mathcal{L}_{A\cup B}(\theta)^\top \Delta\theta + O(\|\Delta\theta\|^2).
    \end{equation}
    Apply the identity $\nabla_\theta \mathcal{L}_{A\cup B}=\nabla_\theta \mathcal{L}_A + \nabla_\theta \mathcal{L}_B$ and substitute
    $\Delta\theta=\Delta\theta_{\mathrm{EFC}}(x_B)$.
    Use Lemma~\ref{lemma:efc-decomp} to write
    $\Delta\theta_{\mathrm{EFC}}(x_B)=-\eta\tilde{F}_A^{-1}G_B^{\perp}(x_B)+\varepsilon(x_B)$.
    
    For the Task~B contribution,
    \begin{align}
    \nabla_\theta \mathcal{L}_B(x_B)^\top \Delta\theta_{\mathrm{EFC}}(x_B)
    &=
    (G_B^{\perp}(x_B)+G_B^{\parallel}(x_B)+G_{A\leftarrow B}(x_B))^\top
    \big(-\eta\tilde{F}_A^{-1}G_B^{\perp}(x_B)+\varepsilon(x_B)\big) \\
    &=
    -\eta\, (G_B^{\perp}(x_B))^\top \tilde{F}_A^{-1} G_B^{\perp}(x_B)
    + (G_B^{\parallel}(x_B)+G_{A\leftarrow B}(x_B))^\top \varepsilon(x_B) \\
    &\quad
    + (G_B^{\perp}(x_B))^\top \varepsilon(x_B)
    - \eta\,(G_B^{\parallel}(x_B)+G_{A\leftarrow B}(x_B))^\top \tilde{F}_A^{-1}G_B^{\perp}(x_B).
    \end{align}
    The first term equals $-\eta\|G_B^{\perp}(x_B)\|^2_{\tilde{F}_A^{-1}}$.
    The final term is bounded by $O(\eta\,\tilde{\lambda}_{\min}^{-1})$ since
    $G_B^{\parallel}(x_B)+G_{A\leftarrow B}(x_B)\in\mathcal{V}_A$
    and Lemma~\ref{lemma:va-atten} bounds $\tilde{F}_A^{-1}$ on $\mathcal{V}_A$.
    The remaining terms are also bounded by $O(\eta\,\tilde{\lambda}_{\min}^{-1})$ using the bound on $\varepsilon(x_B)$ from
    Lemma~\ref{lemma:efc-decomp} and Cauchy-Schwarz.
    
    For the Task~A contribution, $\nabla_\theta \mathcal{L}_A(\theta)\in\mathcal{V}_A$ by definition of $\mathcal{V}_A$.
    The inner product $\nabla_\theta \mathcal{L}_A(\theta)^\top (-\eta\tilde{F}_A^{-1}G_B^{\perp}(x_B))$ is bounded by $O(\eta\,\tilde{\lambda}_{\min}^{-1})$
    under the same restriction argument, and the contribution from $\varepsilon(x_B)$ is bounded by $O(\eta\,\tilde{\lambda}_{\min}^{-1})$.
    
    Combine bounds and take expectation over $x_B$. The Taylor remainder contributes $O(\eta^2)$ under bounded second derivatives and bounded gradients.
    \qed

    \newpage
        
    \section{Forgetting Bounds}\label{supp:forgettingbounds}

        We derive bounds on forgetting—the increase in Task~A's loss caused by learning Task~B. For any parameter update $\Delta\theta = \theta - \theta_A^*$, the change in Task~A's loss admits a quadratic approximation:
        \begin{equation}
            \mathcal{L}_A(\theta_A^* + \Delta\theta) - \mathcal{L}_A(\theta_A^*) = \frac{1}{2} \Delta\theta^\top F_A \Delta\theta + O(\|\Delta\theta\|^3),
        \end{equation}
        where $F_A$ is the Fisher information matrix at $\theta_A^*$. This approximation holds for small updates and follows from the Laplace approximation (Eq.~\ref{eq:laplace}). The forgetting bound for each method therefore depends on how its update $\Delta\theta$ interacts with Task~A's curvature $F_A$.
        
        \subsection{Update rules}
        
            Each method induces a different parameter update:

            \begin{itemize}
                \item BP performs gradient descent on Task~B alone:
                    \begin{equation}
                        \Delta\theta_{\text{BP}} = -\eta \nabla_\theta \mathcal{L}_B.
                    \end{equation}

                \item EWC minimizes the regularized objective $\mathcal{L}_B(\theta) + \frac{\beta}{2}(\theta - \theta_A^*)^\top D_A (\theta - \theta_A^*)$, where $D_A = \operatorname{diag}(F_A)$. The resulting deviation optimistically satisfies:
                    \begin{equation}
                        \Delta\theta_{\text{EWC}} = -(\nabla^2 \mathcal{L}_B + \beta D_A)^{-1} \nabla_\theta \mathcal{L}_B.
                    \end{equation}
                    When $\beta D_A \gg \nabla^2 \mathcal{L}_B$, this simplifies to $\Delta\theta_{\text{EWC}} \approx -\frac{1}{\beta} D_A^{-1} \nabla_\theta \mathcal{L}_B$. Even with access to the full Fisher $F_A$ (the theoretically optimal quadratic regularizer), the update would be $\Delta\theta = -(\nabla^2 \mathcal{L}_B + \beta F_A)^{-1} \nabla_\theta \mathcal{L}_B$, which still couples Task~B's Hessian with the regularizer.

                \item EFC learns at dynamical equilibrium, yielding the continual-natural gradient (Theorem~\ref{th_text:continual-natural}):
                    \begin{equation}
                        \Delta\theta_{\text{EFC}} = -\eta \tilde{F}_A^{-1} \nabla_\theta \mathcal{L}_B.
                    \end{equation}
                    Here, Task~A's curvature appears directly as a preconditioner, not additively combined with $\nabla^2 \mathcal{L}_B$. This structural difference explains why EFC can outperform even theoretically optimal Hessian regularization.
            \end{itemize}
        
        \subsection{Direct forgetting bounds}\label{supp:direct_forgetting_bounds}
        
            \begin{theorem}[Direct Forgetting Bounds]\label{th:direct_forgetting}
            Let $\theta_A^*$ be the optimal parameters for Task~A, $F_A$ the Fisher information matrix at $\theta_A^*$, $\tilde{F}_A$ the implicit approximation from Theorem~\ref{th_text:continual-natural}, and $D_A = \operatorname{diag}(F_A)$. Let $\lambda_{\max}$ and $\tilde{\lambda}_{\max}$ be the maximum eigenvalues of $F_A$ and $\tilde{F}_A$, respectively. For learning rate $\eta < \min\{\frac{1}{\lambda_{\text{max}}}, \frac{1}{\|D_A\|_\infty}\}$, the forgetting bounds are:
            \begin{align}
                \mathcal{L}_A(\theta_A^* + \Delta\theta_{\text{BP}}) - \mathcal{L}_A(\theta_A^*) 
                &\leq \frac{\eta^2}{2} \|\nabla_\theta \mathcal{L}_B\|_2^2 \, \lambda_{\max} \\[8pt]
                \mathcal{L}_A(\theta_A^* + \Delta\theta_{\text{EWC}}) - \mathcal{L}_A(\theta_A^*) 
                &\leq \frac{\eta^2}{2} \|\nabla_\theta \mathcal{L}_B\|_2^2 \, \operatorname{tr}(D_A^{-1} F_A D_A^{-1}) + O(\|\nabla^2 \mathcal{L}_B\|) + O(\|D_A - F_A\|) \\[8pt]
                \mathcal{L}_A(\theta_A^* + \Delta\theta_{\text{EFC}}) - \mathcal{L}_A(\theta_A^*) 
                &\leq \frac{\eta^2}{2} \|\nabla_\theta \mathcal{L}_B\|_2^2 \, \tilde{\lambda}_{\max}^{-1} + O(\|\tilde{F}_A - F_A\|),
            \end{align}
            where $\eta$ absorbs the regularization strength $\beta^{-1}$ for EWC.
            \end{theorem}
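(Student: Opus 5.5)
The plan is to substitute each method's update rule into the quadratic forgetting expansion $\mathcal{L}_A(\theta_A^*+\Delta\theta)-\mathcal{L}_A(\theta_A^*)=\tfrac12\Delta\theta^\top F_A\Delta\theta+O(\|\Delta\theta\|^3)$. Each quadratic form is then bounded by a spectral or trace inequality. The cubic remainder is absorbed into the stated error terms (or into the $\lesssim$ constants), since all updates are $O(\eta)$. Write $g=\nabla_\theta\mathcal{L}_B$ throughout.

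\emph{BP.} With $\Delta\theta_{\text{BP}}=-\eta g$, the quadratic term is $\tfrac{\eta^2}{2}g^\top F_A g$. The Rayleigh quotient bound $g^\top F_A g\le\lambda_{\max}\|g\|_2^2$ then gives the first inequality directly. This works because $F_A$ is symmetric positive semidefinite, being an expectation of outer products.

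\emph{EWC.} I start from the update $\Delta\theta_{\text{EWC}}=-(\nabla^2\mathcal{L}_B+\beta D_A)^{-1}g$. A Neumann/resolvent expansion gives $(\beta D_A+\nabla^2\mathcal{L}_B)^{-1}=\beta^{-1}D_A^{-1}+O(\|\nabla^2\mathcal{L}_B\|)$, and I absorb $\beta^{-1}$ into $\eta$. This splits the deviation into $-\eta D_A^{-1}g$ plus a remainder. The remainder, inserted into the quadratic form, contributes the $O(\|\nabla^2\mathcal{L}_B\|)$ term. For the main term, $\tfrac{\eta^2}{2}g^\top D_A^{-1}F_AD_A^{-1}g\le\tfrac{\eta^2}{2}\|g\|^2\lambda_{\max}(D_A^{-1}F_AD_A^{-1})$, and the largest eigenvalue of a PSD matrix is at most its trace, giving the stated $\operatorname{tr}(D_A^{-1}F_AD_A^{-1})$. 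The $O(\|D_A-F_A\|)$ term is there only to cover writing $F_A=D_A+(F_A-D_A)$ if one wants the simpler bound $\operatorname{tr}(D_A^{-1})\le n/\lambda^D_{\min}$ used in Theorem~\ref{th_text:forgetting}. In that case $g^\top D_A^{-1}F_AD_A^{-1}g=g^\top D_A^{-1}g+g^\top D_A^{-1}(F_A-D_A)D_A^{-1}g$, and the second piece is $O(\|D_A-F_A\|)$.

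\emph{EFC.} Here I invoke Theorem~\ref{th:continual-natural} for $\Delta\theta_{\text{EFC}}=-\eta\tilde F_A^{-1}g$. I write $F_A=\tilde F_A+(F_A-\tilde F_A)$, so the quadratic form becomes $\tfrac{\eta^2}{2}\big(g^\top\tilde F_A^{-1}g+g^\top\tilde F_A^{-1}(F_A-\tilde F_A)\tilde F_A^{-1}g\big)$. The second term is $O(\|\tilde F_A-F_A\|)$ once $\|\tilde F_A^{-1}g\|$ is bounded. For the first term, the claimed factor $\tilde\lambda_{\max}^{-1}$ is the main obstacle. A Rayleigh quotient gives $g^\top\tilde F_A^{-1}g\le\tilde\lambda_{\min}^{-1}\|g\|^2$, not $\tilde\lambda_{\max}^{-1}$, so the inequality as stated cannot hold for arbitrary $g$.

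I would first try to rescue it by restricting to the regime the paper has in mind. There, $g$ is assumed aligned with the top eigendirection of $\tilde F_A$, or more generally one uses $g^\top\tilde F_A^{-1}g=\|g\|^2_{\tilde F_A^{-1}}$ and interprets $\tilde\lambda_{\max}^{-1}$ as the relevant Rayleigh value along $g$. If that fails, I would state the bound with $\tilde\lambda_{\min}^{-1}$, or in the exact form $\|g\|^2_{\tilde F_A^{-1}}$, and note the discrepancy explicitly. A further technical gap is that $\tilde F_A$ as defined in Theorem~\ref{th:continual-natural} need not be symmetric or positive definite. I would therefore add an assumption that its symmetric part is positive definite and work with that part, since the quadratic form only sees it.

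Finally, the condition $\eta<\min\{1/\lambda_{\max},1/\|D_A\|_\infty\}$ is used only to keep $\|\Delta\theta\|$ small, so that the cubic Taylor remainder is dominated and can be folded into the $\lesssim$ constants.
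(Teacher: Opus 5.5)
Your approach coincides with the paper's. You substitute each update into $\tfrac12\Delta\theta^\top F_A\Delta\theta$, then use the Rayleigh bound for BP, the expansion $(\nabla^2\mathcal{L}_B+\beta D_A)^{-1}\approx(\beta D_A)^{-1}$ with $\lambda_{\max}\le\operatorname{tr}$ for EWC, and the split $F_A=\tilde F_A+(F_A-\tilde F_A)$ for EFC. The obstacle you identify in the EFC case is genuine, and the paper does not get around it. Its last line simply asserts $g^\top\tilde F_A^{-1}g\le\tilde\lambda_{\max}^{-1}\|g\|_2^2$, which is the Rayleigh inequality reversed. For symmetric positive definite $\tilde F_A$,
\[
\tilde\lambda_{\max}^{-1}\|g\|_2^2\;\le\;g^\top\tilde F_A^{-1}g\;\le\;\tilde\lambda_{\min}^{-1}\|g\|_2^2,
\]
so the stated quantity is a \emph{lower} bound on the leading term. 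With vanishing error term, the third inequality holds only when $g$ lies in the top eigenspace of $\tilde F_A$, and there it is an equality. A concrete counterexample is $\tilde F_A=F_A=\operatorname{diag}(1,100)$ with $g=e_1$. The $O(\|\tilde F_A-F_A\|)$ term is zero, the quadratic forgetting is $\eta^2/2$, and the claimed bound is $\eta^2/200$. The ratio does not depend on $\eta$, so the step-size condition cannot rescue it. No proof of the statement as written exists. Your alignment hypothesis is a new assumption rather than a reinterpretation, and ``reading $\tilde\lambda_{\max}^{-1}$ as the Rayleigh value along $g$'' is not an argument. The correct general version uses $\tilde\lambda_{\min}^{-1}$, or the exact $\|g\|^2_{\tilde F_A^{-1}}$. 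This correction undermines the paper's downstream comparisons with BP and EWC, because $\tilde\lambda_{\min}^{-1}$ is large exactly when $\tilde F_A$ has weakly constrained directions.

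Three smaller points. First, in the EWC case you do not need the $O(\|D_A-F_A\|)$ splitting to reach $\operatorname{tr}(D_A^{-1})$. Because $[F_A]_{ii}=[D_A]_{ii}$, the paper gets $\operatorname{tr}(D_A^{-1}F_AD_A^{-1})=\operatorname{tr}(D_A^{-1})\le n/\lambda^D_{\min}$ exactly, so that error term is unused slack. Your splitting instead gives a different estimate: $\|g\|_2^2/\lambda^D_{\min}$ plus an off-diagonal remainder. Second, for non-symmetric $\tilde F_A$, the form $g^\top\tilde F_A^{-1}g$ sees the symmetric part of $\tilde F_A^{-1}$, not the inverse of the symmetric part $S$ of $\tilde F_A$. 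The $\lambda_{\min}$ bound still survives: with $h=\tilde F_A^{-1}g$ one has $g^\top h=h^\top Sh\ge\lambda_{\min}(S)\|h\|^2$ and $g^\top h\le\|g\|\|h\|$, hence $g^\top\tilde F_A^{-1}g\le\|g\|^2/\lambda_{\min}(S)$. Third, the condition on $\eta$ does not keep $\|\Delta\theta\|$ small for EWC or EFC, whose sizes are governed by $D_A^{-1}$ and $\tilde F_A^{-1}$, and the paper never uses it. Like the paper, you also identify forgetting with its quadratic term, so the unqualified $\le$ in the BP line silently drops an $O(\eta^3)$ remainder.
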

            
            \begin{proof}
            For BP with $\Delta\theta_{\text{BP}} = -\eta \nabla_\theta \mathcal{L}_B$:
            \begin{equation}
                \frac{1}{2} \Delta\theta_{\text{BP}}^\top F_A \Delta\theta_{\text{BP}}
                = \frac{\eta^2}{2} (\nabla_\theta \mathcal{L}_B)^\top F_A \nabla_\theta \mathcal{L}_B
                \leq \frac{\eta^2}{2} \|\nabla_\theta \mathcal{L}_B\|_2^2 \lambda_{\max}.
            \end{equation}
            
            For EWC, let $M = \nabla^2 \mathcal{L}_B + \beta D_A$. The exact update is $\Delta\theta_{\text{EWC}} = -M^{-1} \nabla_\theta \mathcal{L}_B$, so
            \begin{equation}
                \frac{1}{2} \Delta\theta_{\text{EWC}}^\top F_A \Delta\theta_{\text{EWC}}
                = \frac{1}{2} (\nabla_\theta \mathcal{L}_B)^\top M^{-1} F_A M^{-1} \nabla_\theta \mathcal{L}_B.
            \end{equation}
            When $\beta D_A$ dominates, $M^{-1} \approx (\beta D_A)^{-1} - (\beta D_A)^{-1} \nabla^2 \mathcal{L}_B (\beta D_A)^{-1} + O(\|\nabla^2 \mathcal{L}_B\|^2)$, yielding
            \begin{align}
                &\approx \frac{1}{2\beta^2} (\nabla_\theta \mathcal{L}_B)^\top D_A^{-1} F_A D_A^{-1} \nabla_\theta \mathcal{L}_B + O(\|\nabla^2 \mathcal{L}_B\|) + O(\|D_A - F_A\|).
            \end{align}
            The trace simplifies since $D_A = \operatorname{diag}(F_A)$:
            \begin{equation}
                \operatorname{tr}(D_A^{-1} F_A D_A^{-1}) = \sum_i \frac{[F_A]_{ii}}{[D_A]_{ii}^2} = \sum_i \frac{1}{[D_A]_{ii}} = \operatorname{tr}(D_A^{-1}) \leq \frac{n}{\lambda^D_{\min}},
            \end{equation}
            where $\lambda^D_{\min} = \min_i [D_A]_{ii}$ is the minimum eigenvalue of $D_A$. Therefore,
            \begin{align}
                &\leq \frac{\eta^2}{2} \|\nabla_\theta \mathcal{L}_B\|_2^2 \, \frac{n}{\lambda^D_{\min}} + O(\|\nabla^2 \mathcal{L}_B\|) + O(\|D_A - F_A\|),
            \end{align}
            with $\eta = 1/\beta$.
            
            For EFC with $\Delta\theta_{\text{EFC}} = -\eta \tilde{F}_A^{-1} \nabla_\theta \mathcal{L}_B$:
            \begin{align}
                \frac{1}{2} \Delta\theta_{\text{EFC}}^\top F_A \Delta\theta_{\text{EFC}}
                &= \frac{\eta^2}{2} (\nabla_\theta \mathcal{L}_B)^\top \tilde{F}_A^{-1} F_A \tilde{F}_A^{-1} \nabla_\theta \mathcal{L}_B \\
                &= \frac{\eta^2}{2} (\nabla_\theta \mathcal{L}_B)^\top \tilde{F}_A^{-1} \nabla_\theta \mathcal{L}_B + O(\|\tilde{F}_A - F_A\|) \\
                &\leq \frac{\eta^2}{2} \|\nabla_\theta \mathcal{L}_B\|_2^2 \, \tilde{\lambda}_{\max}^{-1} + O(\|\tilde{F}_A - F_A\|).
            \end{align}
            \end{proof}

        \subsection{Pairwise forgetting comparisons}\label{supp:pairwise}
        
            \begin{corollary}[Pairwise Forgetting Comparisons]\label{cor:forgetting-comparisons}
            Under the conditions of Theorem~\ref{th:direct_forgetting},
            \begin{align}
                &\mathcal{L}_A(\theta_A^* + \Delta\theta_{\text{EWC}}) - \mathcal{L}_A(\theta_A^* + \Delta\theta_{\text{EFC}})\\
                &\geq \frac{\eta^2}{2} \|\nabla_\theta \mathcal{L}_B\|_2^2 \Bigl[\operatorname{tr}(D_A^{-1} F_A D_A^{-1}) - \tilde{\lambda}_{\max}^{-1}\Bigr] + O(\|\nabla^2 \mathcal{L}_B\|), \\[12pt]
                &\mathcal{L}_A(\theta_A^* + \Delta\theta_{\text{BP}}) - \mathcal{L}_A(\theta_A^* + \Delta\theta_{\text{EFC}})\\
                &\geq \frac{\eta^2}{2} \|\nabla_\theta \mathcal{L}_B\|_2^2 (\lambda_{\max} - \tilde{\lambda}_{\max}^{-1}).
            \end{align}
            When $F_A$ has significant off-diagonal structure, $\operatorname{tr}(D_A^{-1} F_A D_A^{-1}) > \tilde{\lambda}_{\max}^{-1}$, showing EFC strictly outperforms EWC. When $F_A \approx D_A$, the methods become equivalent.
            \end{corollary}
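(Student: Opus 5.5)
The plan is to work directly with the exact quadratic forms from the proof of Theorem~\ref{th:direct_forgetting}, rather than with the final upper bounds alone. A lower bound on a difference cannot follow from two upper bounds. So for each pair I will write the difference of forgetting values as a single quadratic form in $g \triangleq \nabla_\theta \mathcal{L}_B$. I then bound the EFC term from above and the competing term (EWC or BP) at the leading order identified in Theorem~\ref{th:direct_forgetting}. From the Laplace expansion at $\theta_A^*$ I obtain
\begin{equation*}
\Delta\mathcal{L}_A^{\text{EWC}} - \Delta\mathcal{L}_A^{\text{EFC}} = \tfrac{1}{2}\, g^\top \bigl( M^{-1} F_A M^{-1} - \eta^2 \tilde{F}_A^{-1} F_A \tilde{F}_A^{-1} \bigr) g + O(\|\Delta\theta\|^3),
\end{equation*}
with $M = \nabla^2\mathcal{L}_B + \beta D_A$. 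The BP comparison has the same form with $M^{-1}$ replaced by $\eta I$.

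For the EFC term I reuse the chain in the proof of Theorem~\ref{th:direct_forgetting} unchanged. I replace $F_A$ by $\tilde{F}_A$ at cost $O(\|\tilde{F}_A - F_A\|)$, which gives $\tfrac{\eta^2}{2} g^\top \tilde{F}_A^{-1} g + O(\|\tilde F_A - F_A\|)$. I then bound this by $\tfrac{\eta^2}{2}\|g\|^2\tilde\lambda_{\max}^{-1}$, exactly as that proof does. For the EWC term I expand $M^{-1}$ in the regime $\beta D_A \gg \nabla^2\mathcal{L}_B$ via the Neumann series already used there, which yields $\tfrac{\eta^2}{2} g^\top D_A^{-1}F_A D_A^{-1} g + O(\|\nabla^2\mathcal{L}_B\|)$ with $\eta = 1/\beta$. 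I will take this leading quadratic form to be represented by $\tfrac{\eta^2}{2}\|g\|^2 \operatorname{tr}(D_A^{-1}F_AD_A^{-1})$. This is the same identification used to state the EWC bound, where $\operatorname{tr}(D_A^{-1}F_AD_A^{-1}) = \operatorname{tr}(D_A^{-1})$ because $D_A = \operatorname{diag}(F_A)$.

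Subtracting the two estimates gives the first displayed inequality. The residual $O(\|\tilde F_A - F_A\|)$ is absorbed under the standing linearization assumption of Theorem~\ref{th_text:continual-natural}, and $O(\|\nabla^2\mathcal{L}_B\|)$ is kept explicitly. For BP the leading term is $\tfrac{\eta^2}{2} g^\top F_A g$, and in the worst-case direction $g$ aligned with the top eigenvector of $F_A$ it equals $\tfrac{\eta^2}{2}\|g\|^2\lambda_{\max}$. Subtracting the EFC upper bound gives the second inequality, with no Task~B curvature term, since BP has no $M^{-1}$ expansion.

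The two qualitative claims then follow from these differences. When $F_A$ has significant off-diagonal mass, $D_A$ underestimates the coupling, so $\operatorname{tr}(D_A^{-1})$ grows like $n/\lambda^D_{\min}$, while $\tilde\lambda_{\max}^{-1}$ stays dimension-free. This makes the bracket positive. When $F_A \approx D_A$, both preconditioners coincide with $F_A^{-1}$ up to $O(\|D_A - F_A\|)$ and $O(\|\tilde F_A - F_A\|)$, so the difference vanishes to leading order.

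The main obstacle is the EWC lower bound. The quadratic form $g^\top D_A^{-1}F_AD_A^{-1}g$ is only bounded above by $\|g\|^2$ times its largest eigenvalue, not by the trace, and its value for a particular $g$ may be smaller. I will first try to justify the trace through the same worst-case convention the paper adopts for the EWC bound of Theorem~\ref{th:direct_forgetting}, reading the inequality at the level of the $\lesssim$ statements in Theorem~\ref{th_text:forgetting}. Failing that, I would restrict to $g$ in the dominant eigendirection of $D_A^{-1}F_AD_A^{-1}$ and replace the trace by its top eigenvalue, which is bounded below by $\operatorname{tr}(D_A^{-1})/n$.
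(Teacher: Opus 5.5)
\paragraph{The repair does not close the gap.} You are right that a lower bound on a difference cannot come from two upper bounds. That subtraction is in fact the paper's entire proof, so the paper contains no argument you are missing. Your repair still fails, because both halves point the wrong way.

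For the EFC term you reuse the step $g^\top\tilde F_A^{-1}g \le \tilde\lambda_{\max}^{-1}\|g\|^2$. For symmetric positive definite $\tilde F_A$, $\tilde\lambda_{\max}^{-1}$ is the \emph{smallest} eigenvalue of $\tilde F_A^{-1}$, so in fact $g^\top\tilde F_A^{-1}g \ge \tilde\lambda_{\max}^{-1}\|g\|^2$. A valid upper bound needs $\tilde\lambda_{\min}^{-1}$, and the $\tilde F_A$ of Theorem~\ref{th:continual-natural} is not even shown to be symmetric.

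For the EWC term, $X = D_A^{-1}F_AD_A^{-1}$ is positive semidefinite, so $g^\top Xg \le \lambda_{\max}(X)\|g\|^2 \le \operatorname{tr}(X)\|g\|^2$. The inequality is strict for every $g\neq 0$ once $\operatorname{rank}F_A\ge 2$. The trace is an upper bound that is never attained, and no worst-case convention turns it into a lower bound. For BP, $g^\top F_Ag = \lambda_{\max}\|g\|^2$ likewise holds only when $g$ is a top eigenvector.

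Your fallbacks do not rescue this. Restricting $g$ to a dominant eigendirection, or replacing the trace by $\lambda_{\max}(X)\ge\operatorname{tr}(D_A^{-1})/n$, proves a different statement with different constants. Absorbing $O(\|\tilde F_A - F_A\|)$ and the Laplace remainder into the linearization assumption is also not a proof step: the second inequality as stated carries no error term at all.

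\paragraph{The statement is false as written.} Both displayed inequalities fail in general, and your own equivalence argument already shows it.

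For the first inequality, take $\nabla^2\mathcal{L}_B = 0$, $F_A = D_A = \tilde F_A = \operatorname{diag}(d_1,\dots,d_n)$ with $n\ge 2$, and $\eta = 1/\beta < (\max_i d_i)^{-1}$. Then $\Delta\theta_{\mathrm{EWC}} = \Delta\theta_{\mathrm{EFC}} = -\eta D_A^{-1}g$, so the left side is exactly $0$. The right side is $\tfrac{\eta^2}{2}\|g\|^2\bigl(\sum_i d_i^{-1} - (\max_i d_i)^{-1}\bigr) > 0$. Note also that $\operatorname{tr}(D_A^{-1}F_AD_A^{-1}) = \operatorname{tr}(D_A^{-1})$ ignores the off-diagonal entries of $F_A$. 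When $\tilde F_A = F_A$, the bracket is positive for every $n\ge 2$, because $\lambda_{\max}(F_A)\ge\max_i d_i$. Its positivity therefore has nothing to do with the off-diagonal structure you invoke.

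For the second inequality, take a quadratic $\mathcal{L}_A$ with $F_A = \tilde F_A = \operatorname{diag}(\tfrac12, 2)$, $g = e_1$ and $\eta < \tfrac12$. BP forgets $\eta^2/4$ and EFC forgets $\eta^2$. The difference is $-\tfrac34\eta^2$, against a claimed lower bound of $+\tfrac34\eta^2$. The same example also violates the EFC bound of Theorem~\ref{th:direct_forgetting}.

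\paragraph{What can be proved.} You can honestly derive your exact quadratic-form expression for the difference. From it follow bounds such as $\tfrac{\eta^2}{2}\|g\|^2\bigl(\lambda_{\min}(X) - \tilde\lambda_{\min}^{-1}\bigr)$, up to the $O(\|\nabla^2\mathcal{L}_B\|)$ and $O(\|\tilde F_A - F_A\|)$ terms. Such a bound has no definite sign. The corollary itself has to be restated before it can be proved.
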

            
            \begin{proof}
            Using the bounds from Theorem~\ref{th:direct_forgetting}, the proof is straightforward by computing the differences:
            \begin{align}
                \frac{\eta^2}{2}||\nabla_\theta \mathcal{L}_B||_2^2\text{tr}(D_A^{-1}F_AD_A^{-1}) - \frac{\eta^2}{2}||\nabla_\theta \mathcal{L}_B||_2^2\tilde{\lambda}_{\text{max}}^{-1} &= \frac{\eta^2}{2}||\nabla_\theta \mathcal{L}_B||_2^2 \left(\text{tr}(D_A^{-1}F_AD_A^{-1}) - \tilde{\lambda}_{\text{max}}^{-1} \right) \\\
                \frac{\eta^2}{2}||\nabla_\theta \mathcal{L}_B||_2^2\lambda_{\text{max}} - \frac{\eta^2}{2}||\nabla_\theta \mathcal{L}_B||_2^2\tilde{\lambda}_{\text{max}}^{-1} &= \frac{\eta^2}{2}||\nabla_\theta \mathcal{L}_B||_2^2 \left( \lambda_{\text{max}} - \tilde{\lambda}_{\text{max}}^{-1} \right)
            \end{align}
            When $F_A$ has significant off-diagonal terms, $\text{tr}(D_A^{-1}F_AD_A^{-1}) > \tilde{\lambda}_{\text{max}}^{-1}$, which makes the difference positive, showing EFC outperforms EWC. When $F_A \approx D_A$, the difference approaches zero, showing the methods become equivalent.
            \end{proof}

        \subsection{Geometric interpretation of forgetting}\label{supp:geometric}
            
            \begin{corollary}[Geometric interpretation and best-case analysis]
                Consider the best-case scenario for EFC where the $\nabla_\theta \mathcal{L}_B$ aligns with $v_1 = \frac{1}{\sqrt{n}}(1,\ldots,1)^\top$, maximizing interaction effects. Then the ratio of parameter space volumes constrained by EFC versus EWC to achieve the same forgetting bound $\epsilon$ is:
                \begin{align}
                    \frac{\text{Vol}(\text{EFC})}{\text{Vol}(\text{EWC})} = \left(\frac{\lambda_1}{\text{tr}(D_A)/n}\right)^{n/2}
                \end{align}
                with the best-case forgetting comparison:
                \begin{align}
                    \mathcal{L}_A(\theta_A^* + \Delta\theta_{\text{EWC}}) - \mathcal{L}_A(\theta_A^* + \Delta\theta_{\text{EFC}}) \geq \frac{\eta^2}{2}\|\nabla_\theta \mathcal{L}_B\|_2^2[\frac{n}{tr(D_A)} - \frac{1}{\lambda_1}] + O(\eta^3)
                \end{align}
            \end{corollary}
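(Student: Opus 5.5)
Work in the explicit best-case model the corollary specifies: Task~B's gradient is aligned with $v_1=\tfrac{1}{\sqrt n}(1,\ldots,1)^\top$, so $\nabla_\theta\mathcal{L}_B=\|\nabla_\theta\mathcal{L}_B\|_2\,v_1$. I read $\lambda_1$ as the eigenvalue of $F_A$ (and, to leading order, of $\tilde F_A$) along $v_1$. In this regime $\|\tilde F_A-F_A\|$ is a higher-order term, and I absorb it together with the Taylor remainder of the Laplace expansion into the $O(\eta^3)$ error. Both claims are then specializations of Theorem~\ref{th:direct_forgetting}: the forgetting of each method is $\tfrac12\Delta\theta^\top F_A\Delta\theta$, evaluated on the EWC and EFC updates restricted to this single direction.

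\emph{Volume ratio.} For a fixed forgetting level $\epsilon$, I identify the admissible region of each method as a quadratic sublevel set of its effective metric. For EFC this is the ellipsoid $\{\Delta\theta:\tfrac12\Delta\theta^\top \tilde F_A\Delta\theta\le\epsilon\}$. For EWC it is the axis-aligned set $\{\Delta\theta:\tfrac12\Delta\theta^\top D_A\Delta\theta\le\epsilon\}$; I would call this the ``hypercube'' of Section~\ref{sec:forgettingbounds}, but compute it as an axis-aligned ellipsoid. Their volumes are $c_n(2\epsilon)^{n/2}\det(M)^{-1/2}$. Two approximations give the stated power of $n/2$. On the EWC side, replace $\det D_A$ by its isotropic surrogate $(\operatorname{tr}(D_A)/n)^n$, which by AM--GM is an upper bound. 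On the EFC side, in the best case where the curvature relevant along the update is governed by the top direction $v_1$, replace $\det\tilde F_A$ by $\lambda_1^n$. Taking the ratio gives $\bigl(\lambda_1/(\operatorname{tr}(D_A)/n)\bigr)^{n/2}$, with the orientation convention that the EFC region is the reciprocal-metric ellipsoid. \textbf{Main obstacle:} these surrogate replacements are heuristic rather than equalities. I would state them explicitly as the defining assumption of ``best case'' and check that the inequality they produce points in the claimed direction.

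\emph{Forgetting difference.} For EFC, apply the EFC bound of Theorem~\ref{th:direct_forgetting}. With the gradient along $v_1$ and $\tilde F_A v_1\approx\lambda_1 v_1$, we get $\tilde F_A^{-1}\nabla_\theta\mathcal L_B\approx\lambda_1^{-1}\nabla_\theta\mathcal L_B$, so the forgetting is $\tfrac{\eta^2}{2}\|\nabla_\theta\mathcal L_B\|^2\lambda_1^{-1}$ up to $O(\eta^3)$. For EWC, start from the exact quadratic form $\tfrac{\eta^2}{2}\nabla_\theta\mathcal L_B^\top D_A^{-1}F_AD_A^{-1}\nabla_\theta\mathcal L_B$. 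Evaluated along $v_1$, it is bounded below via the trace identity of Theorem~\ref{th:direct_forgetting}, $\operatorname{tr}(D_A^{-1}F_AD_A^{-1})=\operatorname{tr}(D_A^{-1})$, combined with the AM--HM inequality $\tfrac1n\operatorname{tr}(D_A^{-1})\ge n/\operatorname{tr}(D_A)$, and averaged over the uniform direction $v_1$. Subtracting the two expressions yields the bracket $\tfrac{n}{\operatorname{tr}(D_A)}-\tfrac1{\lambda_1}$. The $O(\|\nabla^2\mathcal L_B\|)$ correction of the EWC bound scales with the update size and is absorbed into $O(\eta^3)$.

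The delicate point in this step is lower-bounding the EWC form along $v_1$ rather than just upper-bounding it. My first attempt would use that $v_1^\top D_A^{-1}F_AD_A^{-1}v_1$ averages the entries of $D_A^{-1}F_AD_A^{-1}$. When $F_A$ has significant nonnegative off-diagonal structure (the best case), this average is at least its diagonal part $\tfrac1n\operatorname{tr}(D_A^{-1})$, and AM--HM then finishes the bound.
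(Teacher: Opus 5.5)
\textbf{Gap in the volume ratio.} The volume-ratio half does not go through. With your own sublevel sets, $\mathrm{Vol}(\text{EFC})/\mathrm{Vol}(\text{EWC})=(\det D_A/\det\tilde{F}_A)^{1/2}$. Your two substitutions turn this into $(\operatorname{tr}(D_A)/(n\lambda_1))^{n/2}$, which is the reciprocal of the claim.

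The ``orientation convention'' does not repair this. Giving only the EFC region the reciprocal metric $\tilde{F}_A^{-1}$ yields $(\lambda_1\operatorname{tr}(D_A)/n)^{n/2}$, which is not even dimensionless. The stated formula appears only if you invert both metrics, i.e.\ report $\mathrm{Vol}(\text{EWC})/\mathrm{Vol}(\text{EFC})$.

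Nor do the surrogates give a one-sided bound you could ``check the direction'' of. AM--GM gives $\det D_A\le(\operatorname{tr}(D_A)/n)^n$. If $\lambda_1$ is the top eigenvalue of a positive semidefinite $\tilde{F}_A$, then $\det\tilde{F}_A\le\lambda_1^n$. Both replacements overestimate, so no inequality on the ratio survives. Both are equalities only when $D_A$ and $\tilde{F}_A$ are multiples of the identity. With $\tilde{F}_A\approx F_A$ that forces $F_A=\lambda_1 I$, where the claimed ratio is $1$ and there is no off-diagonal structure, the opposite of the best case.

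So the identity cannot be derived from your ingredients and has to be posited. The paper does not close this step either. After writing $\sqrt{\det(D_A)/\det(\tilde{F}_A)}$, it simply asserts equality with $(\lambda_1/(\operatorname{tr}(D_A)/n))^{n/2}$. Present it as a heuristic definition of the best case, not as a consequence.

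\textbf{The forgetting comparison.} This part is sound under the hypotheses you state, and it takes a different route from the paper. The paper also starts from $v_1^\top D_A^{-1}F_AD_A^{-1}v_1=\frac{1}{n}\sum_{i,j}F_{ij}/(d_id_j)$ with $d_i=[D_A]_{ii}$. It then replaces each $d_i$ by $\operatorname{tr}(D_A)/n$, which is exact only for isotropic $D_A$. Next it uses $\mathbf{1}^\top F_A\mathbf{1}\le\operatorname{tr}(F_A)$, which needs the off-diagonal entries to sum to something nonpositive. The result is an \emph{upper} bound $n/\operatorname{tr}(D_A)$ on the EWC term. An upper bound on EWC forgetting cannot yield the claimed lower bound on the difference, and the paper then pairs it with $\tilde{\lambda}_{\max}^{-1}$.

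Your argument keeps the exact entries and drops the nonnegative off-diagonal terms to get $\ge\frac{1}{n}\operatorname{tr}(D_A^{-1})$. AM--HM then gives a genuine lower bound. Evaluating EFC exactly along the eigenvector $v_1$ supplies the matching $1/\lambda_1$.

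The price is explicit hypotheses:
\begin{itemize}
\item $[F_A]_{ij}\ge 0$ for $i\ne j$;
\item $F_Av_1=\lambda_1v_1$ and $\tilde{F}_Av_1\approx\lambda_1v_1$;
\item $\|\tilde{F}_A-F_A\|=O(\eta)$, since otherwise the mismatch enters at order $\eta^2$ rather than $\eta^3$.
\end{itemize}
These hypotheses are mutually consistent, and it is worth saying so. A nonnegative matrix with a positive eigenvector has that eigenvalue as its spectral radius. So $\lambda_1$ is the top eigenvalue, which reconciles your $1/\lambda_1$ with the paper's $\tilde{\lambda}_{\max}^{-1}$. Moreover $\lambda_1=\frac{1}{n}\mathbf{1}^\top F_A\mathbf{1}\ge\operatorname{tr}(D_A)/n$, so the bracket is nonnegative. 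Your treatment of the $\nabla^2\mathcal{L}_B$ correction as $O(\eta^3)$, via the Neumann expansion with $\eta=1/\beta$, is correct.

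\textbf{Two clean-ups.} First, do not invoke the general EFC bound of Theorem~\ref{th:direct_forgetting}. For a general gradient $g$ one has $g^\top\tilde{F}_A^{-1}g\ge\|g\|^2\tilde{\lambda}_{\max}^{-1}$, the wrong direction, so only your exact eigenvector evaluation gives the needed value. Second, drop the first sketch that uses the trace identity ``averaged over $v_1$''. For a fixed direction, $v_1^\top D_A^{-1}F_AD_A^{-1}v_1$ differs from $\frac{1}{n}\operatorname{tr}(D_A^{-1}F_AD_A^{-1})$ by the off-diagonal average. Only your entry-sign argument justifies that step.
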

            
            This volume ratio is the geometric difference between the methods. EFC's constraint region forms a hyperellipsoid that accounts for parameter interactions, while EWC's constraint forms a hypercube that treats parameters independently. Thereby, EFC's advantage over EWC grows with dimensionality $n$ - the volume ratio between ellipsoid and cube increases exponentially with dimension when strong parameter interactions are present. The ratio is maximized in this best case where all parameters interact equally through $v_1$.
            
            \textit{Proof.} The EFC update from Theorem \ref{th:continual-natural} defines an hyperellipsoid, while EWC defines a hypercube:
            \begin{align}
                (\theta - \theta_A^*)^\top \tilde{F}_A (\theta - \theta_A^*) &\leq \epsilon \\
                (\theta - \theta_A^*)^\top D_A (\theta - \theta_A^*) &\leq \epsilon
            \end{align}
            
            In our best-case scenario for EFC, $v_1$ has equal components, making the interaction effects maximal. The volume ratio is:
            \begin{align}
                \frac{\text{Vol}(\text{EFC})}{\text{Vol}(\text{EWC})} = \sqrt{\frac{\det(D_A)}{\det(\tilde{F}_A)}} = \left(\frac{\lambda_1}{\text{tr}(D_A)/n}\right)^{n/2}
            \end{align}
        
            In the best case, $\nabla_\theta \mathcal{L}_B$ aligns with $v_1 = \frac{1}{\sqrt{n}}(1,\ldots,1)^\top$. Then:
            \begin{align}
                (\nabla_\theta \mathcal{L}_B)^\top D_A^{-1}F_AD_A^{-1}\nabla_\theta \mathcal{L}_B &= \|\nabla_\theta \mathcal{L}_B\|_2^2 \cdot v_1^\top D_A^{-1}F_AD_A^{-1}v_1 \\
                &= \|\nabla_\theta \mathcal{L}_B\|_2^2 \cdot \frac{1}{n}\sum_{i,j} \frac{F_{ij}}{d_id_j} \\
                &= \|\nabla_\theta \mathcal{L}_B\|_2^2 \cdot n\frac{\sum_{i,j} F_{ij}}{(\sum_i d_i)^2} \\
                &\leq \|\nabla_\theta \mathcal{L}_B\|_2^2 \cdot n\frac{\text{tr}(F_A)}{(\text{tr}(D_A))^2} \\
                 &= \|\nabla_\theta \mathcal{L}_B\|_2^2 \cdot \frac{n}{\text{tr}(D_A)}
            \end{align}
            giving,
            \begin{align}
                \mathcal{L}_A(\theta + \Delta\theta_{\text{EWC}}) - \mathcal{L}_A(\theta + \Delta\theta_{\text{EFC}}) &\geq \frac{\eta^2}{2}\|\nabla_\theta \mathcal{L}_B\|_2^2[\frac{n}{\text{tr}(D_A)} - \tilde{\lambda}_{\text{max}}^{-1}] + O(\eta^3)
            \end{align}
            The ratio $\frac{n}{\text{tr}(D_A)}$ grows linearly with dimension $n$ because when $\nabla_\theta \mathcal{L}_B$ aligns with the normalized all-ones vector $v_1$, EWC treats all directions independently while EFC accounts for their coupling through the Fisher matrix. This dimensional scaling directly relates to our geometric intuition: EWC approximates the Fisher ellipsoid with a hypercube, and the ratio between a hypercube's volume and its hyperellipsoid grows exponentially with dimension, explaining why EWC's performance gap relative to EFC becomes more severe in higher dimensions.
            \qed
        
        \subsection{Sample-specific preservation tightens expected bounds}\label{supp:sample_specificity}
        
            The bounds above analyze a single update. When taking expectations over Task~B samples $\mathcal{D}_B$, EFC gains an additional advantage from sample-specificity. The preservation signal $\gamma$ depends on current activations, which vary across samples. When a Task~B sample strongly overlaps with Task~A's important directions (high Fisher), $\gamma$ is large and preservation is strong; otherwise $\gamma$ is small, allowing free learning.
            
            Formally, let $\gamma_{\text{eff}}$ denote the cumulative effect of the preservation signals propagated through the network (see Supp.~\ref{supp:steady-state}). The expected forgetting bound for EFC is tightened by a variance term:
            \begin{equation}
                \mathbb{E}_{\mathcal{D}_B}\Bigl[\mathcal{L}_A(\theta_A^* + \Delta\theta_{\text{EFC}}) - \mathcal{L}_A(\theta_A^*)\Bigr]
                \leq \text{[direct bound]} - c \cdot \text{Var}_{\mathcal{D}_B}(\gamma_{\text{eff}})
            \end{equation}
            for some constant $c > 0$. Static regularizers like EWC provide no such variance reduction and additionally suffer from the persistent $O(\|\nabla^2 \mathcal{L}_B\|)$ interference term under expectation.

    \newpage

    \section{Regimes of Task~B curvature}\label{supp:curvature_B}

        During class-incremental training on Task $B$, the combined loss exhibits an initial decrease, followed by a stationary point, and subsequent increase. This behavior reflects the transition from a regime where Task $B$ learning dominates to one where Task $A$ interference prevails.
        
        The time evolution of the combined loss along the training trajectory is given by:
        \begin{equation}
            \frac{d}{dt} \mathcal{L}^{CIL} = \nabla_\theta \mathcal{L}^{CIL} \cdot \dot{\theta}
        \end{equation}
        
        For gradient-descent-based methods where $\dot{\theta} \propto -\nabla_\theta \mathcal{L}_B$, this decomposes as:
        \begin{equation}
            \frac{d}{dt} \mathcal{L}^{CIL} = \underbrace{-\|\nabla_\theta \mathcal{L}_B\|^2}_{\text{Task } B \text{ improvement}} \underbrace{- \nabla_\theta \mathcal{L}_A \cdot \nabla_\theta \mathcal{L}_B}_{\text{Task } A \text{ interference}}
            \label{eq:loss_decomposition}
        \end{equation}
        
        The first term is strictly non-positive, representing progress on Task $B$. The second term captures the interference between tasks: when gradients are anti-aligned, as will inevitable happen, this term becomes positive, indicating harm to Task $A$.
        
        The combined loss reaches a stationary point when $\frac{d}{dt} \mathcal{L}^{CIL} = 0$, yielding:
        \begin{equation}
            \|\nabla_\theta \mathcal{L}_B\|^2 = -\nabla_\theta \mathcal{L}_A \cdot \nabla_\theta \mathcal{L}_B
            \label{eq:stationary_condition}
        \end{equation}
        
        Invoking the quadratic approximation $\nabla_\theta \mathcal{L}_A \approx F_A(\theta - \theta^*_A)$ near the Task $A$ optimum, this condition becomes:
        \begin{equation}
            \|\nabla_\theta \mathcal{L}_B\|^2 = -\nabla_\theta \mathcal{L}_B^\top F_A \Delta\theta
            \label{eq:stationary_fisher}
        \end{equation}
        where $\Delta\theta = \theta - \theta^*_A$. Rearranging:
        \begin{equation}
            \nabla_\theta \mathcal{L}_B^\top \left(\nabla_\theta \mathcal{L}_B + F_A \Delta\theta\right) = \nabla_\theta \mathcal{L}_B^\top \nabla_\theta \mathcal{L}^{CIL} = 0
            \label{eq:orthogonality}
        \end{equation}
        
        Thus, the stationary point occurs precisely when the Task $B$ gradient becomes orthogonal to the combined gradient. This analysis can be characterized by three distinct training regimes. 
        
        First, early in Task $B$ training, the network operates on a relatively flat plateau of the Task $B$ loss landscape. The Task $B$ gradient $\nabla_\theta \mathcal{L}_B$ is large while the accumulated displacement $F_A \Delta\theta$ remains small. Consequently:
        \begin{equation}
            \|\nabla_\theta \mathcal{L}_B\|^2 \gg \left|\nabla_\theta \mathcal{L}_B^\top F_A \Delta\theta\right|
        \end{equation}
        and the combined loss decreases rapidly. Task $B$ learning proceeds with minimal interference.
        
        Second, as training progresses, two competing effects reach equilibrium: the Task $B$ gradient diminishes as the network approaches a Task $B$ solution, while the interference term $F_A \Delta\theta$ accumulates. At the stationary point:
        \begin{equation}
            \|\nabla_\theta \mathcal{L}_B\|^2 = \left|\nabla_\theta \mathcal{L}_B^\top F_A \Delta\theta\right|
        \end{equation}
        The rate of Task $B$ improvement exactly balances the rate of Task $A$ degradation.
        
        Third, beyond the stationary point, the network enters the high-curvature region of Task $B$'s loss landscape. The Task $B$ gradient continues to shrink while the interference term persists, yielding:
        \begin{equation}
            \|\nabla_\theta \mathcal{L}_B\|^2 < \left|\nabla_\theta \mathcal{L}_B^\top F_A \Delta\theta\right|
        \end{equation}
        The combined loss now increases, and continued training causes net harm to overall performance despite continued Task $B$ improvement.
        
        The Task $B$ Fisher information $F_B = \mathbb{E}[\nabla_\theta \mathcal{L}_B \nabla_\theta \mathcal{L}_B^\top]$ governs the rate at which the learning signal decays:
        \begin{equation}
            \frac{d}{dt}\|\nabla_\theta \mathcal{L}_B\|^2 \approx -2\nabla_\theta \mathcal{L}_B^\top F_B \nabla_\theta \mathcal{L}_B
            \label{eq:gradient_decay}
        \end{equation}
        
        When $\|F_B\|$ is small (flat landscape) learning continues. As $\|F_B\|$ grows (entering the ``bowl'' of the Task $B$ optimum), the gradient diminishes rapidly.

    \newpage

    \section{Confusion matrices for class-incremental learning}\label{supp:confusion}

        To visualize how different continual learning methods perform across all classes, we generate normalized confusion matrices after training on the full sequence of tasks (Figure~\ref{fig:confusion}). For each method, we train the model sequentially on five tasks, each containing two digit classes from MNIST, and evaluate on the combined test set of all previously seen classes. The confusion matrices are normalized row-wise by true label to show the proportion of samples from each class that are correctly classified versus misclassified as other classes. In the class-incremental setting, oEWC exhibits severe catastrophic forgetting of earlier tasks, with nearly zero accuracy on classes from Tasks 0--3 while retaining high accuracy only on the most recently learned task. This manifests as strong off-diagonal concentrations where samples from earlier classes are systematically misclassified as classes from the final task. DER++ maintains more balanced accuracy across all classes through replay, though some degradation on intermediate tasks remains observable. EFC demonstrates intermediate behavior, with confusion distributed bidirectionally across the task sequence.

        \begin{figure}[H]
            \centering
            \includegraphics[width=\linewidth]{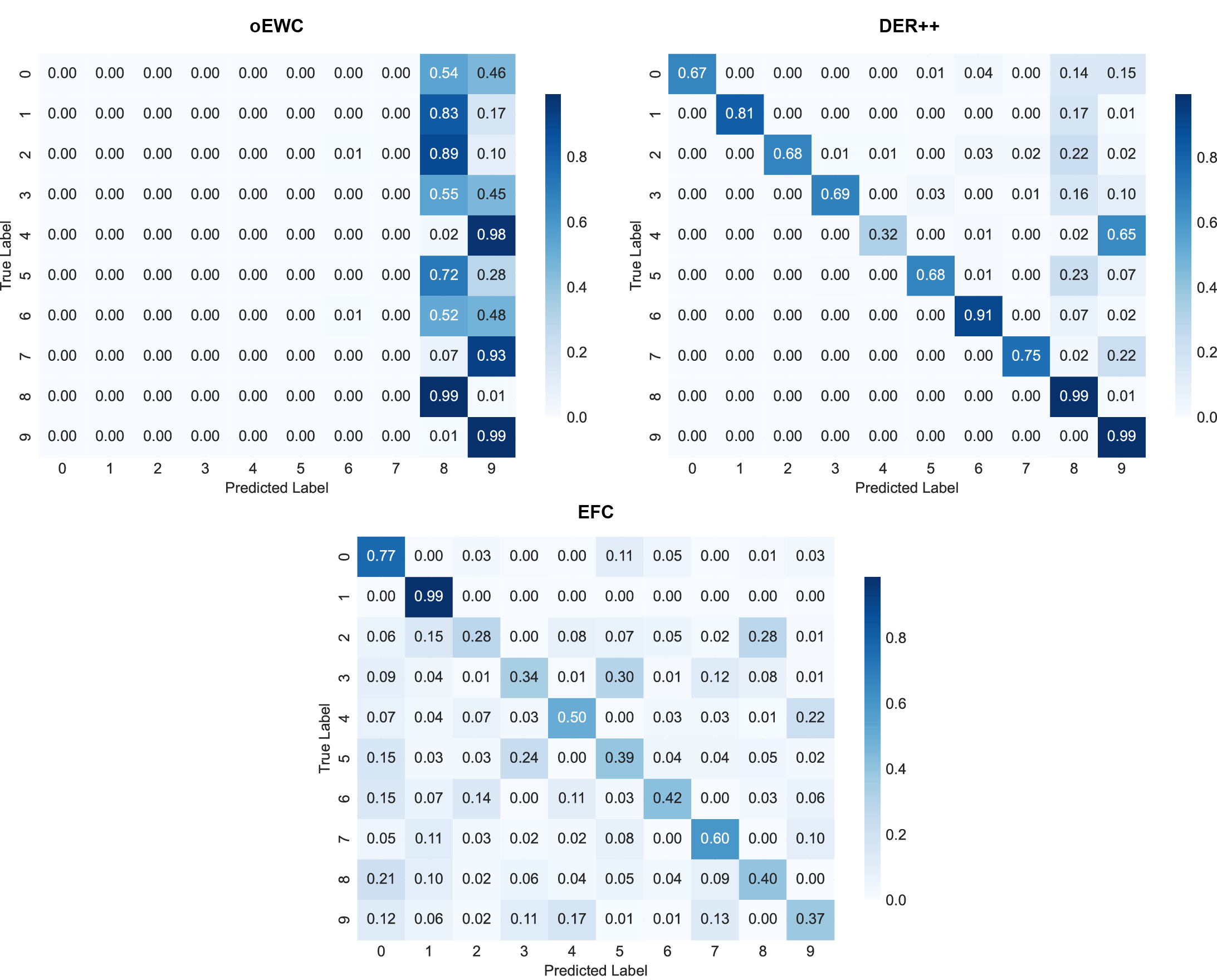}
            \caption{Confusion matrices on class-incremental Split-MNIST for online EWC \citep{schwarz2018} versus Dark Experience Replay++ \citep{buzzega2020} versus Equilibrium Fisher Control. For every matrix we plot the true label versus the predicted label. Values are measured in percentage of confusion.}
            \label{fig:confusion}
        \end{figure}
\end{document}